\newcommand{\interior}[1]{%
  {\kern0pt#1}^{\mathrm{o}}%
}
\definecolor{ytw}{RGB}{255,69,0}
\let\emptyset\varnothing
\newcommand{\diff}{\mathrm{d}}
\title{\bf Theoretical Insights for Diffusion Guidance: \\ A Case Study for Gaussian Mixture Models}
\author{ 
	Yuchen Wu\thanks{Department of Statistics and Data Science, the Wharton School, University of Pennsylvania}
	\and
	Minshuo Chen\thanks{Department of Electrical and Computer Engineering, Princeton University}
	\and
	Zihao Li\footnotemark[2]
	\and
	Mengdi Wang\footnotemark[2]
	\and
	Yuting Wei\footnotemark[1]
}
\date{\today}
\begin{document}
\maketitle


\begin{abstract}

Diffusion models benefit from instillation of task-specific information into the score function to steer the sample generation towards desired properties. 
Such information is coined as guidance. For example, in text-to-image synthesis, text input is encoded as guidance to generate semantically aligned images. 
Proper guidance inputs are closely tied to the performance of diffusion models. 
A common observation is that strong guidance promotes a tight alignment to the task-specific information, while reducing the diversity of the generated samples. 
In this paper, we provide the first theoretical study towards understanding the influence of guidance on diffusion models in the context of Gaussian mixture models. 
Under mild conditions, 
we prove that incorporating diffusion guidance not only boosts classification confidence but also diminishes distribution diversity, leading to a reduction in the differential entropy of the output distribution.
Our analysis covers the widely adopted sampling schemes including DDPM and DDIM, and leverages comparison inequalities for differential equations as well as the Fokker-Planck equation that characterizes the evolution of probability density function, which may be of independent theoretical interest.
\end{abstract}

\tableofcontents

\section{Introduction}

Understanding and designing algorithms for generative models that adapt to certain constraints play a crucial role in modern machine learning applications. 
For example, contemporary large language models --- where a large model is pretrained and various natural language processing (NLP) tasks are performed based on human prompts without re-training --- often demonstrate remarkable in-context learning abilities \cite{}; 
Text-to-image models contribute to major successes in image generators like DALL$\cdot$E 2, Stable Diffusion and Imagen \citep{ramesh2022hierarchical,rombach2022high,saharia2022photorealistic}, 
which offer remarkable platforms for users to generate vivid images by typing in a text prompt.
However, it has been observed that these models can oftentimes generate unrealistic or biased content, or not follow the users' instructions \citep{bommasani2021opportunities,luvcic2019high,weidinger2021ethical}. For this reason, various guided techniques have been developed to enhance the sampling qualities in accordance with users' intention \citep{ouyang2022training,dhariwal2021diffusion,ho2022classifier}. 
Despite the significant empirical improvements that are observed using these guidance approaches, parameters and models are trained mainly in a trial-and-error manner. 
The theoretical underpinnings of these methods are still far from being mature.

\subsection{Training with guidance for diffusion models}

To uncover the unreasonable power of these guided approaches and better assist practice, this paper takes the first step towards this goal in the context of diffusion models. 
Diffusion models, which convert noise into new data instances by learning to reverse a Markov diffusion process, have become a cornerstone in contemporary generative modeling \citep{song2020score,ho2020denoising,yang2023diffusion}.
Compared to alternative generative models, such as variational autoencoder or generative adversarial network, diffusion models are known to be more stable, and generate high-quality samples based on learning the gradient of the log-density function (also known as the score function). When data is multi-modal, namely, it potentially comes from multiple classes, a natural question is how to make use of these class labels for conditional synthesis.  
Towards this direction, \cite{dhariwal2021diffusion} put forward the idea of \emph{classifier guidance} --- an approach to enhance the sample quality with the aid of an extra trained classifier. The  classifier guidance approach combines an unconditional diffusion model’s score estimate with the gradient of the log probability of a classifier. 
Subsequently, \cite{ho2022classifier} presented the so-called \emph{classifier-free guidance}, which instead mixes the score estimates of an unconditional diffusion model with that of a conditional diffusion model jointly trained over the data and the label.
For both guidance methods, 
adjusting the mixing weights of the unconditional score estimate and the other component controls the trade-off between the Fr\'{e}chet Inception Distance (FID) and the Inception Score (IS) in the context of image synthesis. 
The resulting procedures are empirically verified to generate extremely high-fidelity samples that are at least comparable to, if not better than, other types of generative models.

One interesting feature observed for these guided procedures is an improvement in the sample quality and a decrease in the sample diversity as one increases the guidance strength (mixing weight of the other component). 
Specifically, \cite{ho2022classifier} illustrates such phenomenon numerically via a simple two-dimensional distribution comprising a mixture of three isotropic Gaussian distributions. 
In particular, with an increased guidance strength, the generated conditional distribution shifts its probability mass farther away from other classes, and most of the mass becomes concentrated in smaller regions, as can be seen in Figure~\ref{fig:three-components}. 
In this paper, we seek to theoretically explain this observation and provide some rigorous guarantees on how the guidance strength affects the confidence of classification and the in-class sample diversity. 

\begin{figure}
    \centering
    \includegraphics[width=\textwidth]{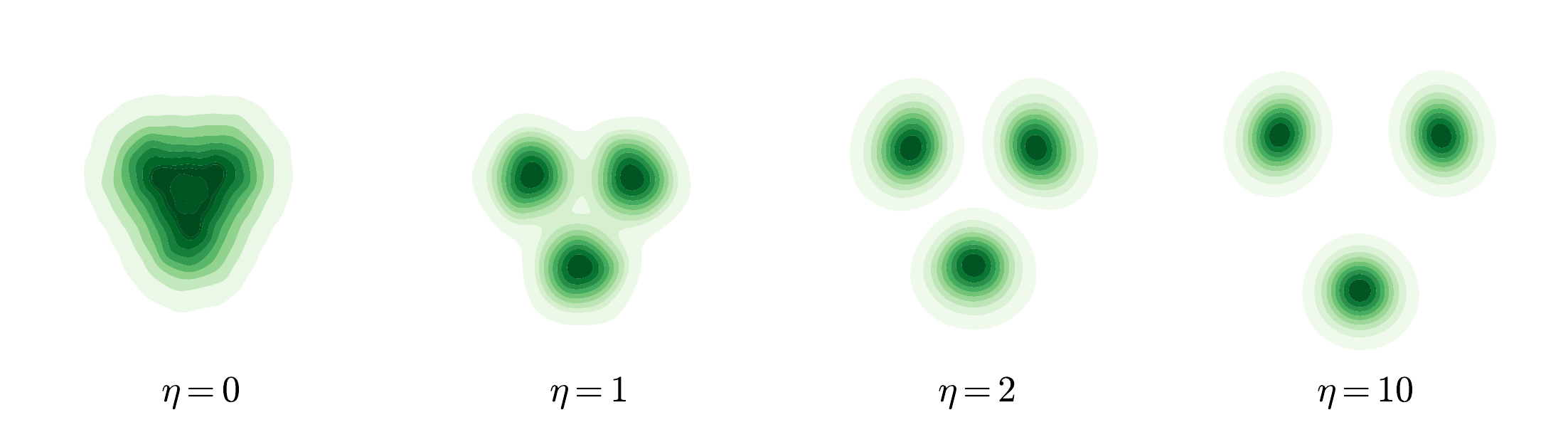}
    \caption{The effect of guidance on a three-component GMM in $\RR^2$. Each component has weight $1 / 3$ and identity covariance, and the component centers are $(\sqrt{3} / 2, 1 / 2)$, $(-\sqrt{3} / 2, 1 / 2)$ and $(0, -1)$. The leftmost panel displays the unguided density. We increase the guidance strength from left to right. 
    This plot imitates Figures 2 of \cite{ho2022classifier}. }
    \label{fig:three-components}
\end{figure}

\subsection{Sampling from Gaussian mixture models} 

To allow for precise theoretical characterizations, we shall focus on the prototypical problem of sampling from Gaussian mixture models (GMMs). Specifically, we consider the data distribution $p_{\ast}$ which takes the following form 
\begin{align}
\label{model:GMM}
    p_{\ast} \overset{d}{=} \sum_{y \in \cY} w_y \normal(\mu_y, \Sigma). 
\end{align}
Here, we use $y$ to denote the class label which takes value in a finite set $\cY := \{1, 2,\ldots, |\cY|\}$.
Given any class label $y \in \cY$, $(\mu_y,\Sigma) \in \RR^d \times \RR^{d\times d}$ gives the center and the covariance matrix for the  Gaussian component that corresponds to $y$. 
In addition, $w_y \in \RR_{\geq 0}$ stands for the component weight for class $y \in \cY$, which satisfies $\sum_{y \in \cY} w_y = 1$. 

In this work, we investigate two widely adopted sampling methods for diffusion models, including a stochastic differential equation (SDE) based approach called the \emph{denoising diffusion probabilistic models} (DDPMs) \citep{ho2020denoising} and an ordinary differential equation (ODE) based approach called \emph{denoising diffusion implicit models} (DDIMs) \citep{song2020denoising}. An overview of these two methods under both classifier guidance and classifier-free guidance is provided in Section~\ref{sec:preliminary}.
As shall be clear momentarily, both methods involve a tuning parameter $\gs > 0$ which controls the strength of the classifier guidance (\emph{resp.} full-model guidance) in the classifier guidance (\emph{resp.} classifier-free guidance) approach. The overarching goal is to understand how the guidance strength affects the sample qualities, in particular, the confidence of classification and the in-class diversity.


\subsection{A glimpse of main contributions}

In what follows, we highlight several of our key findings.
\begin{itemize}
    \item Consider a Gaussian mixture models with general positions. For both DDPM and DDIM samplers with diffusion guidance, we demonstrate in Section~\ref{sec:pre-confidence}, that the classification confidence --- which measures the posterior probability associated with the guided class given an output sample --- only increases when diffusion guidance is applied. These quantitative results (Theorems~\ref{thm:DDIM-confidence} and \ref{thm:DDPM-confidence}) are further accompanied by qualitative results (Theorems~\ref{thm:DDIM-confidence-quan} and \ref{thm:DDPM-confidence-quan}), titrating the exact level of influence of diffusion guidance for posterior classification accuracy. 
    These findings offer theoretical validation for employing diffusion guidance to enhance conditional sampling.

    \item As for the in-class diversity, in Section~\ref{sec:diversity}, we analyze the impact of guidance strength on the differential entropy of the resulting distribution for DDIM samplers. 
    It turns out that increasing the diffusion guidance always results in a reduction in the differential entropy. 
    This offers the first theoretical explanation for the benefit of the diffusion guidance in generating more homogeneous samples. 

    \item Finally, we exhibit that the role of the guidance strength can be complicated by an example of a  three-component GMM when their means are aligned. In this case, we reveal both theoretically and numerically the existence of a phase transition in the behavior of the classification confidence as one increases the guidance strength. Cautions thus need to be exercised in practice in terms of selecting a proper guidance strength. 
    More details can be found in Section~\ref{sec:example-curious}.
\end{itemize}


\begin{figure}%
    \centering
    \includegraphics[width=0.49\textwidth]{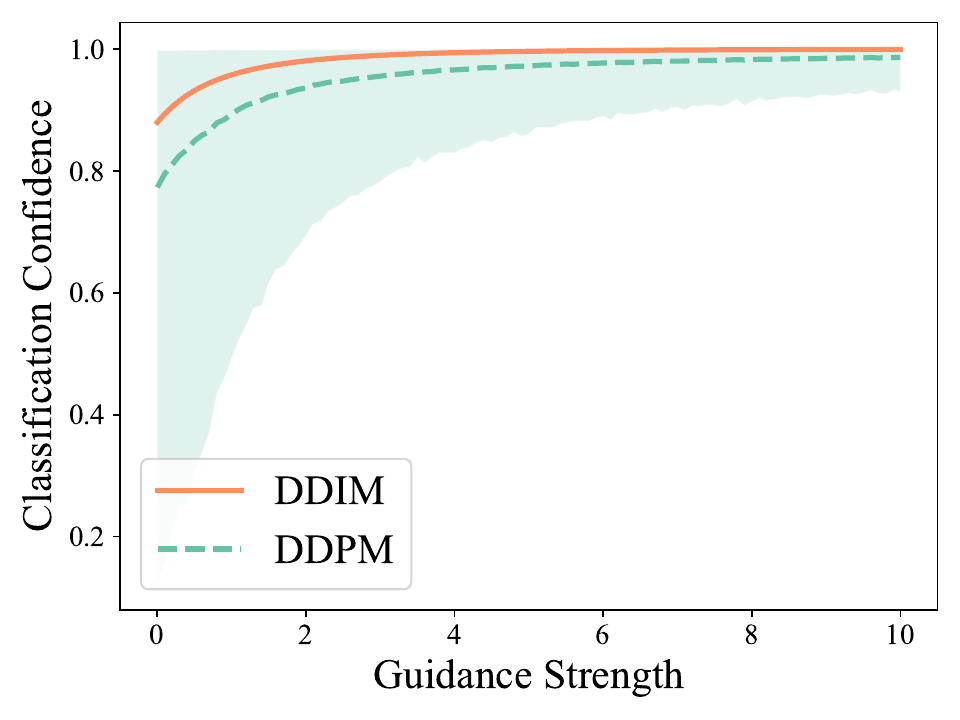} %
    \hfill
    \includegraphics[width=0.49\textwidth]{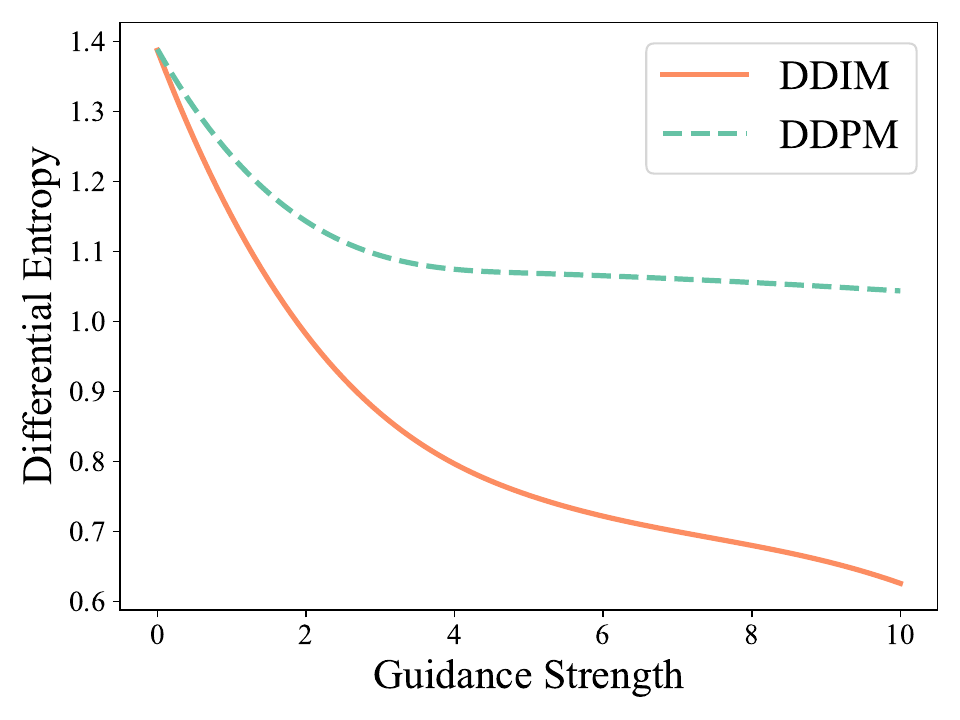} %
    \caption{The effect of guidance on a symmetric GMM: $p_{\ast} = \frac{1}{2} \normal(1, 1) + \frac{1}{2} \normal(-1, 1)$. (a) In the left panel, we initiate the reverse processes at the origin, and record the classification confidence (measured by the posterior probability of class label) under different levels of guidance. For the DDPM sampler the output sample is random. We generate $10^4$ samples for each guidance strength and plot the averaged classification confidence for both the DDPM and the DDIM samplers, as well as the $97.5\%$ and $2.5\%$ quantiles for the DDPM sampler. (b) In the right panel, we initiate the processes following a standard Gaussian distribution, and plot the differential entropy of the output distributions. For each guidance strength we also generate $10^4$ samples. We adopt the function $\mathtt{scipy.stats.differential\_entropy()}$ from the $\mathtt{scipy}$ module in Python to estimate the differential entropy based on these generated samples. }%
    \label{fig:prob-DP}%
\end{figure}

\paragraph{Notation.}
For two random objects $X$ and $Y$, we say $X \perp Y$ if and only if they are independent of each other. 
For $n \in \NN$, we define the set $[n] = \{1, 2, \cdots, n\}$, and make the convention that $[0] = \emptyset$.
We use $\sigma_{\min}(M)$ to denote the minimum eigenvalue of a matrix $M$.


\section{Preliminaries}
\label{sec:preliminary}

In this section, we introduce the basics of diffusion models, both with and without guidance. 
Our investigation encompasses both the DDPM and the DDIM samplers. 
As aforementioned, there exist two primary forms of guidance, namely, the classifier guidance and the classifier-free guidance. 
We shall delve into a separate discussion of these two guidance forms below. 
As we will observe, these two forms of guidance coincide when precise access to the ground truth probability distributions is available.
To enhance readers' understanding, we initiate our investigation with continuous-time processes.
We later offer generalizations to discrete processes in Section \ref{sec:discretization}.
\subsection{Diffusion model without guidance}


We begin by revisiting the concept of diffusion model without guidance. 
There has been a  surge of recent interest and theoretical advancements to understand sampling qualities of diffusion models (e.g.~\citet{block2020generative,de2021diffusion,liu2022let,de2022convergence,lee2023convergence,pidstrigach2022score,chen2022sampling,benton2023linear,chen2022improved,chen2023restoration,chen2023score,mei2023deep,tang2024contractive,li2023towards,li2024towards,li2024accelerating}). 
In this paper, we focus our attention on the task of conditional sampling. 
More specifically, let $p_{\ast}$ denote the data distribution over $(x, y)$, where $x$ is the data feature and $y$ stands for the data label.  
Our goal is to sample from the conditional distribution $p_{\ast}(\cdot \mid y)$, conditioning on a label realization $y$.
Throughout the paper, we use $y$ to represent the label we wish to condition on. 
The diffusion model consists of two processes: a forward process that converts the target distribution into noise, and a reverse process that sequentially denoises the process to reconstruct the target distribution.
Throughout this paper, we set the forward process to be an Ornstein–Uhlenbeck (OU) process: 
\begin{align}
\label{eq:OU-process} 
    \diff z_t^{\rightarrow} = - z_t^{\rightarrow} \diff t + \sqrt{2} \diff B_t, \qquad z_0^{\rightarrow} \sim p_{\ast}(\cdot \mid y), 
\end{align}
where $(B_t)_{0 \leq t \leq T}$ is a $d$-dimensional standard Brownian motion.
For $0 \leq t \leq T$, we denote by $p_t$ the distribution of $z_t^{\rightarrow}$. 
The reverse process of \eqref{eq:OU-process} can be constructed using either an ODE or SDE implementation, which we state below: 
\begin{align}
\label{eq:unguided}
\begin{split}
	& \diff z_t^{\leftarrow} = (z_t^{\leftarrow} + \nabla \log p_{T - t}(z_t^{\leftarrow} \mid y)) \diff t, \\
	& \diff \bar{z}_t^{\leftarrow} = (\bar{z}_t^{\leftarrow} + 2 \nabla \log p_{T - t}(\bar{z}_t^{\leftarrow} \mid  y)) \diff t + \sqrt{2} \diff B_t.
\end{split}
\end{align}
In the above display,  
$z_0^{\leftarrow}, \bar z_0^{\leftarrow} \sim p_{\mathrm{init}}$ for some initial distribution $p_{\mathrm{init}}$, and $(B_t)_{0 \leq t \leq T}$ once again is the standard Brownian motion in $\RR^d$.
Hereafter, unless stated otherwise, we always take the gradient with respect to the first argument. 
Classical findings in probability theory \citep{anderson1982reverse} implies that when $p_{\mathrm{init}} = p_T(\cdot \mid y)$, it holds that
\begin{align*}
z_t^{\leftarrow} \overset{d}{=} \bar z_t^{\leftarrow} \overset{d}{=} z_{T - t}^{\rightarrow}.
\end{align*}
As a consequence, if we can implement process \eqref{eq:unguided}, then in principle we shall be able to generate new samples from our target distribution. 
To design an implementable algorithm, practioners not only apply discretization to processes in  Eq.~\eqref{eq:unguided}, but also substitute the score functions and the theoretically ideal initial distribution $p_T(\cdot \mid y)$ with their respective estimates. 
A standard approach for approximating $p_T(\cdot \mid y)$ is by setting $p_{\mathrm{init}} = \normal(0, I_d)$.  

For the sake of simplicity, in the sequel we write $(z_t)_{0 \leq t \leq T} = (z_t^{\leftarrow})_{0 \leq t \leq T}$ and $(\bar z_t)_{0 \leq t \leq T} = (\bar 
 z_t^{\leftarrow})_{0 \leq t \leq T}$ without introducing any confusion.

%

\subsection{Classifier diffusion guidance}

Classifier guidance was first proposed by \cite{dhariwal2021diffusion} to improve the quality of images produced by diffusion models, with the aid of an extra trained classifier. 
To achieve this, they modify the score function to include the gradient of the logarithmic prediction probability of an auxiliary classifier. 
To be definite, the DDIM and the DDPM samplers under classifier guidance are as follows: 
\begin{align}
	& \diff x_t^c = \left( x_t^c + s_{T - t}(x_t^c, y) + \gs \nabla \log c_{T - t}(x_t^c, y) \right) \diff t, &  x_0^c \sim p_{\mathrm{init}},  
 \label{eq:DDIM-classifier} \\
	& \diff \bar x_t^c = \left( \bar x_t^c + 2s_{T - t}(\bar x_t^c, y) + 2\gs \nabla \log c_{T - t}(\bar x_t^c, y) \right) \diff t + \sqrt{2} \diff B_t, & \bar{x}_0^c \sim p_{\mathrm{init}}. 
 \label{eq:DDPM-classifier}
\end{align}
In the above display, $\gs \geq 0$ is a parameter that controls the strength of the classifier guidance,
$s_{T - t}(x, y)$ is an estimate to $\nabla \log p_{T - t}(x \mid y)$,
and $c_{T - t}(x, y)$ is a probabilistic classifier that is designed to estimate the conditional probability $p_{T - t}(y \mid x)$. 
When $\gs = 0$ and $s_{T - t}(x, y) = \nabla \log p_{T - t}(x \mid y)$, processes \eqref{eq:DDIM-classifier} and \eqref{eq:DDPM-classifier} reduce to their unguided counterparts. 

\subsection{Classifier-free diffusion guidance}

Classifier guidance effectively boosts the sample quality of diffusion models. 
However, it requires an extra classifier, potentially introducing complexity to the model training pipeline.
Classifier-free guidance is an alternative method of modifying the score functions to have the same effect as classifier guidance, but without a classifier \citep{ho2022classifier}. 
To be concrete, classifier-free guidance involves the following processes: 
\begin{align}
	& \diff x_t^f = \left( x_t^f + (1 + \gs)s_{T - t}(x_t^f, y) - \gs s_{T - t}(x_t^f) \right) \diff t, & x_0^f \sim p_{\mathrm{init}} 
 \label{eq:DDIM-free} \\
	& \diff \bar x_t^f = \left( \bar x_t^f + 2(1 + \gs)s_{T - t}(\bar x_t^f, y) - 2\gs  s_{T - t}(\bar x_t^f) \right) \diff t + \sqrt{2} \diff B_t, &\, \,\,\,\bar{x}_0^f \sim p_{\mathrm{init}}.  
 \label{eq:DDPM-free}
\end{align}
In the displayed content above, with a slight abuse of notation, we use $s_{t} (x)$ without the second argument to represent an estimate to the unconditional score function $\nabla_x \log p_t(x)$.
Note that in situations where we have exact access to the ground truth functionals (i.e., $s_{t}(x, y) = \nabla_x \log p_t(x \mid y)$, $s_t(x) = \nabla_x \log p_t(x)$, and $c_t(x, y) = p_t(y \mid x)$), one can verify that 
$$(x_t^f)_{0 \leq t \leq T} \overset{d}{=} (x_t^c)_{0 \leq t \leq T}, \qquad (\bar{x}_t^f)_{0 \leq t \leq T} \overset{d}{=} (\bar{x}_t^c)_{0 \leq t \leq T}.$$
This result is independent of the choice of the initial distribution $p_{\mathrm{init}}$. 
Similarly, by setting $\gs = 0$ and using the ground truth functionals, processes \eqref{eq:DDIM-free} and \eqref{eq:DDPM-free} reduce to the unguided ones.

%

It was observed that guidance for diffusion model, either classifier-based or classifier-free, has the effect of increasing classification confidence and decreasing sample diversity \citep{ho2022classifier}. 
This paper seeks to offer a theoretical explanation of this phenomenon within the framework of GMM.

\subsection{Guided diffusion for Gaussian mixture models}

Under the GMM as stated in Eq~\eqref{model:GMM}, both the score functions and the logarithmic class probabilities admit closed-form expressions, and we shall adopt these ground truth functionals to construct our samplers. Namely, throughout this paper, we set 
\begin{align}
\label{eq:true-cs}
    & s_t(x, y) = \nabla_x \log p_t(x \mid y) =   - \Sigma_t^{-1} x + e^{-t} \Sigma_t^{-1} \mu_y, \\
    & \nabla_x \log c_t(x, y) = \nabla_x \log p_t(y \mid x) =  e^{-t} \Sigma_t^{-1} \mu_y - \sum_{y' \in \cY} e^{-t} q_t(x, y') \Sigma_t^{-1} \mu_{y'}, \nonumber
\end{align} 
where $\Sigma_t := e^{-2t} \Sigma + (1 - e^{-2t}) I_d$, and 
\begin{align}
\label{eq:def-qt}
     q_t(x, y) :=  \frac{w_y \exp \left( e^{-t} \langle \Sigma_t^{-1} \mu_{y},   x \rangle - e^{-2t} \langle \mu_{y},  \Sigma_t^{-1} \mu_{y} \rangle / 2 \right) }{\sum_{y' \in \cY} w_{y'} \exp \left( e^{-t} \langle \Sigma_t^{-1} \mu_{y'},   x \rangle - e^{-2t} \langle \mu_{y'},  \Sigma_t^{-1} \mu_{y'} \rangle / 2\right)  }.
\end{align}
%
Note that $q_t(x, y)$ is the posterior probability of having label $y$, upon observing $x = e^{-t} x_{\ast} + \sqrt{1 - e^{-2t}} g$, where $x_{\ast} \sim p_{\ast}$, $g \sim \normal(0, I_d)$ and $x_{\ast} \perp g$. 
When the functionals listed in Eq.~\eqref{eq:true-cs} are adopted to construct the diffusion model samplers as listed in Eq.~\eqref{eq:DDIM-classifier}-\eqref{eq:DDPM-free}, obviously we have 
\begin{align*}
    (x_t^f)_{0 \leq t \leq T} \overset{d}{=} (x_t^c)_{0 \leq t \leq T}, \qquad (\bar{x}_t^f)_{0 \leq t \leq T} \overset{d}{=} (\bar{x}_t^c)_{0 \leq t \leq T}. 
\end{align*}
In fact, in this case the classifier-based and the classifier-free diffusion models share the same diffusion and drift terms. 
Due to this observation, in the remainder of the paper we unify the notations by setting 
\begin{align*}
    (\bar{x}_t)_{0 \leq t \leq T} = (\bar{x}_t^c)_{0 \leq t \leq T} = (\bar{x}_t^f)_{0 \leq t \leq T}, \qquad (x_t)_{0 \leq t \leq T} = (x_t^c)_{0 \leq t \leq T} = (x_t^f)_{0 \leq t \leq T},
\end{align*}
treating classifier-based and classifier-free guidance as the same algorithm. 

Plugging Eq.~\eqref{eq:true-cs} into Eq.~\eqref{eq:DDIM-classifier}-\eqref{eq:DDPM-free}, we obtain 
\begin{align}
    & \diff x_t = \Big( x_t - \Sigma_{T - t}^{-1} x_t + e^{-T + t} \Sigma_{T - t}^{-1} \mu_y + \gs e^{-T + t} \Sigma_{T - t}^{-1} \mu_y -\gs e^{-T + t}  \sum_{y' \in \cY}  q_{T - t}(x_t, y') \Sigma_{T - t}^{-1} \mu_{y'} \Big) \diff t, \label{eq:dxt-long} \\
    & \diff \bar x_t = \Big( \bar x_t - 2\Sigma_{T - t}^{-1} \bar x_t + 2e^{-T + t} \Sigma_{T - t}^{-1} \mu_y + 2\gs e^{-T + t} \Sigma_{T - t}^{-1} \mu_y -2\gs e^{-T + t}  \sum_{y' \in \cY}  q_{T - t}(\bar x_t, y') \Sigma_{T - t}^{-1} \mu_{y'} \Big) \diff t  + \sqrt{2} \diff B_t, \label{eq:dxt-bar-long}
\end{align}
for any guidance level $\gs.$

\section{Effect of guidance on classification confidence}
\label{sec:pre-confidence}

As our first contribution, we offer a theoretical explanation for the phenomenon where a diffusion model with guidance directs generated samples toward a region with higher confidence, in contrast to samples generated by the unguided counterpart.  
To measure such confidence, we propose to examine the posterior probability
\begin{align}
    \cP(x, y) := q_0(x, y) = \frac{w_y \exp \left( \langle \Sigma^{-1} \mu_{y},   x \rangle - \langle \mu_{y},  \Sigma^{-1} \mu_{y} \rangle / 2 \right) }{\sum_{y' \in \cY} w_{y'} \exp \left(  \langle \Sigma^{-1} \mu_{y'},   x \rangle -  \langle \mu_{y'},  \Sigma^{-1} \mu_{y'} \rangle / 2\right)  }
\end{align}
along the trajectory of the diffusion process as defined in Eq.~\eqref{eq:def-qt}. 
We show that diffusion guidance with a non-negative guidance strength can only increase the posterior probability, given that the component centers exhibit limited correlation.  
Our formal assumptions are provided below.
\begin{assumption}
\label{assumption:confidence}
    We impose the following conditions on model \eqref{model:GMM}: 
    \begin{enumerate}
        \item There exists $\mu_0 \in \RR^d$, such that for all $y' \in \cY$, it holds that $|\langle \mu_y - \mu_0, \mu_{y'} - \mu_0  \rangle| \leq \varepsilon$, for some small positive constant $\varepsilon$. We further assume that $\varepsilon \leq \|\mu_y - \mu_0\|_2^2 / 3$. 
        \item The prior probability $w_{y'}$ is strictly positive for all $y' \in \cY$. 
        \item The GMM has an isotropic common covariance: $\Sigma = I_d$.  
    \end{enumerate}
\end{assumption}

\begin{remark}
    If $d$ is large, then the first point of Assumption \ref{assumption:confidence} is typically satisfied when the component centers are independently generated from certain prior distribution. 
    For instance, one can verify that the assumption is satisfied with high probability if $(\mu_{y'})_{y' \in \cY} \sim_{i.i.d.} \mbox{Unif}(\mathbb{S}^{d - 1})$ with a sufficiently large $d$, where $\mathbb{S}^{d - 1}$ is the unit sphere in $\RR^d$. 
\end{remark}


The dynamics of $\cP(x_t, y)$ can be represented through either an ODE or an SDE, depending on whether we utilize the DDIM or the DDPM framework.
We explore further details in the remainder of this section.

\subsection{Effect on the DDIM sampler}
\label{sec:effect-DDIM}

In this section, we analyze the impact of guidance on the DDIM sampler, as defined in Eq.~\eqref{eq:dxt-long}. 
Our main result for this part delineates the impact of guidance on the DDIM sampler in terms of classification confidence, which we present as Theorem \ref{thm:DDIM-confidence} below.
%
%
\begin{theorem}
\label{thm:DDIM-confidence}
    We assume model \eqref{model:GMM} and Assumption \ref{assumption:confidence}. Recall that $x_0$ and $z_0$ are the initializations of the DDIM samplers as defined in Eq.~\eqref{eq:unguided} and \eqref{eq:dxt-long}, respectively. In addition, we assume $\langle x_0, \mu_y - \mu_{y'} \rangle \geq \langle z_0, \mu_y - \mu_{y'} \rangle$ for all $y' \in \cY$\footnote{Note that this conditions is fulfilled when $x_0 = z_0$. }. Then for any $\gs \geq 0$ and all $t \in [0, T]$, it holds that $$\cP(x_t, y) \geq \cP(z_t, y).$$
\end{theorem}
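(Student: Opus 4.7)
The plan is to reduce the statement to a componentwise comparison: I will show that $\Delta_t(y') := \langle \mu_y - \mu_{y'}, x_t - z_t\rangle \geq 0$ for every $y' \in \cY$ and every $t \in [0, T]$. The initial condition $\Delta_0(y') \geq 0$ is exactly the hypothesis relating $x_0$ and $z_0$. This reduction suffices because, with $\Sigma = I_d$ (hence $\Sigma_t = I_d$ for all $t$), the posterior admits the softmax representation
$$\frac{1}{\cP(x, y)} \;=\; \sum_{y' \in \cY} \frac{w_{y'}}{w_y} \exp\!\bigl(\langle \mu_{y'} - \mu_y, x\rangle - (\|\mu_{y'}\|^2 - \|\mu_y\|^2)/2\bigr),$$
so that termwise dominance of the exponents at $x_t$ by those at $z_t$ transfers directly to $\cP(x_t, y) \geq \cP(z_t, y)$.

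To analyse the dynamics of $\Delta_t(y')$, I would specialise Eq.~\eqref{eq:dxt-long} to $\Sigma_{T-t} = I_d$ and subtract the unguided DDIM drift from Eq.~\eqref{eq:unguided}. The linear $x_t$ term and the plain $e^{-T+t}\mu_y$ term cancel, leaving the pure guidance contribution
$$\dot x_t - \dot z_t \;=\; \gs\, e^{-T+t}\!\sum_{y'' \neq y} q_{T-t}(x_t, y'')\,(\mu_y - \mu_{y''}),$$
where the restriction to $y'' \neq y$ uses $\sum_{y''} q_{T-t}(x_t, y'') = 1$. Pairing with $\mu_y - \mu_{y'}$ then yields
$$\dot \Delta_t(y') \;=\; \gs\, e^{-T+t} \sum_{y'' \neq y} q_{T-t}(x_t, y'')\, \langle \mu_y - \mu_{y'},\, \mu_y - \mu_{y''}\rangle.$$
Since $\gs \geq 0$ and the posterior weights are nonnegative, proving $\dot\Delta_t(y') \geq 0$ reduces to checking nonnegativity of each inner product $\langle \mu_y - \mu_{y'}, \mu_y - \mu_{y''}\rangle$ with $y'' \neq y$.

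For this geometric fact I would centre at $\mu_0$, setting $v_{y'} := \mu_{y'} - \mu_0$ and expanding
$$\langle \mu_y - \mu_{y'},\, \mu_y - \mu_{y''}\rangle \;=\; \|v_y\|^2 - \langle v_y, v_{y''}\rangle - \langle v_{y'}, v_y\rangle + \langle v_{y'}, v_{y''}\rangle.$$
Assumption~\ref{assumption:confidence} bounds each $v_y$-anchored cross term by $\varepsilon$ in absolute value and guarantees $\|v_y\|^2 \geq 3\varepsilon$, so it only remains to ensure $\langle v_{y'}, v_{y''}\rangle \geq -\varepsilon$. This is automatic when $y' = y''$ (the term becomes $\|v_{y'}\|^2 \geq 0$) and when $y' = y$ (where $\Delta_t(y) \equiv 0$ and there is nothing to prove); in the remaining case $y' \neq y'' \neq y$ it follows from the natural pairwise reading of the incoherence condition, which the random-center example in the remark implicitly invokes. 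A scalar ODE comparison then lifts $\dot\Delta_t(y') \geq 0$ together with $\Delta_0(y') \geq 0$ to $\Delta_t(y') \geq 0$ on $[0, T]$, and the softmax display from the first paragraph concludes.

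The main obstacle is the cross term $\langle v_{y'}, v_{y''}\rangle$ when both indices differ from $y$, since Assumption~\ref{assumption:confidence} as literally written only constrains inner products anchored at $v_y$. The cleanest fix is to read the assumption pairwise, consistent with the remark; otherwise the missing bound must be recovered through a joint argument on the tuple $\{\Delta_t(y'')\}_{y''}$ that exploits the convex-combination structure of $\mu_y - \bar\mu(x_t, T-t)$. Everything else in the argument is routine.
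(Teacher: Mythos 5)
Your proposal follows essentially the same route as the paper: reduce the claim to $\Delta_t(y') := \langle x_t - z_t, \mu_y - \mu_{y'}\rangle \geq 0$ for all $y'$, show $\dot\Delta_t(y') \geq 0$, and conclude via the softmax representation of $\cP$. This is precisely Lemma~\ref{lemma:compare-inner-product} and the one-line argument in Appendix~\ref{sec:proof-thm:DDIM-confidence}. The only variation is in how the guidance drift is estimated: you decompose termwise in $y''$ and show each coefficient $\langle \mu_y - \mu_{y'}, \mu_y - \mu_{y''}\rangle$ is nonnegative, whereas the paper groups into a net positive piece $(1-q)\|\mu_y-\mu_0\|_2^2 + q(y')\|\mu_{y'}-\mu_0\|_2^2$ plus an error $\cE_t$ bounded by $3\gs e^{-(T-t)}(1-q)\varepsilon$; both give the same lower bound. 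Concerning the ``obstacle'' you flag — the cross term $\langle v_{y'}, v_{y''}\rangle$ with $y', y'' \neq y$ not being controlled by Assumption~\ref{assumption:confidence} as literally stated — note that the paper's own $\cE_t$ (see Eq.~\eqref{eq:diff-x-mu-mu-equality}) contains exactly the same $\sum_{y'' \neq y, y'} q_{T-t}(x_t,y'')\langle \mu_{y'}-\mu_0, \mu_{y''}-\mu_0\rangle$ term and its bound $|\cE_t| \leq 3\gs e^{-(T-t)}(1-q)\varepsilon$ implicitly uses the pairwise reading; so you are not missing anything the paper supplies, you are simply being explicit about an imprecision in the statement of the assumption. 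With that pairwise reading, your argument is complete and correct.
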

Theorem \ref{thm:DDIM-confidence} implies that when the processes have the same initialization, 
the classification confidence associated with the guided process remains no smaller than that associated with the unguided process along the entire diffusion trajectory. It therefore validates the empirical observation regarding diffusion guidance.

In order to offer some theoretical insights while at the same time maintaining brevity, we present a proof sketch of Theorem \ref{thm:DDIM-confidence} here, and delay  the majority of technical details to Appendix \ref{sec:proof-thm:DDIM-confidence}.
First, taking the inner product of the derivative given in Eq.~\eqref{eq:dxt-long} and the mean vector difference $\mu_y - \mu_{y'}$ for some $y' \in \cY$, we obtain
\begin{align}
\label{eq:xt-muy-muyp}
\begin{split}
    & \frac{\diff}{\diff t}  \langle x_t, \mu_y - \mu_{y'} \rangle \\
    = & e^{-(T  - t)}\|\mu_y\|_2^2 - e^{-(T - t)} \langle \mu_y, \mu_{y'} \rangle + \gs e^{-(T - t)} (1 - q_{T - t}(x_t, y)) \|\mu_y - \mu_0\|_2^2 \\
    & + \gs e^{-(T - t)} q_{T - t}(x_t, y') \|\mu_{y'} - \mu_0\|_2^2 + \cE_t,
\end{split}
\end{align}
where $\cE_t$ is a function of $(x_t, t)$, satisfying $|\cE_t| \leq 3\gs e^{-(T - t)} (1 - q_{T - t}(x_t, y)) \varepsilon$.  
A detailed derivation of Eq.~\eqref{eq:xt-muy-muyp} is given in Appendix \ref{sec:increase-confidence}. 
Using the assumption that $\varepsilon \leq \|\mu_y\|_2^2 / 3$, one can obtain 
%
%
%
%
%
\begin{align}
\label{eq:diff-x-mu-mu}
\begin{split}
    & \frac{\diff}{\diff t} \langle x_t, \mu_y - \mu_{y'} \rangle \\
    \geq & e^{-(T - t)} \|\mu_y\|_2^2 - e^{-(T - t)} \langle \mu_y, \mu_{y'} \rangle + \gs e^{-(T - t)} (1 - q_{T - t}(x_t, y)) (\|\mu_y - \mu_0\|_2^2 - 3 \varepsilon). 
\end{split}
\end{align}
As for the unguided process $(z_t)_{0 \leq t \leq T}$, similarly, we derive 
\begin{align}
\label{eq:diff-z-mu-mu}
    \frac{\diff}{\diff t} \langle z_t, \mu_{y} - \mu_{y'} \rangle = e^{-(T - t)} \|\mu_y\|_2^2 - e^{-(T - t)} \langle \mu_y, \mu_{y'} \rangle. 
\end{align}
Putting Eqs.~\eqref{eq:diff-x-mu-mu} and \eqref{eq:diff-z-mu-mu} together motivates us to employ the ODE comparison theorem \citep{mcnabb1986comparison} to study these two dynamics. 
For readers' convenience, we include the comparison theorem below. 
%
\begin{lemma}[ODE comparison theorem]
	\label{lemma:comparison-theorem}
	Suppose $f(t, u)$ is continuous in $(t, u)$ and Lipschitz continuous in $u$. Suppose $u(t)$, $v(t)$ are $C^1$ for $t \in [0, T]$, and satisfy 
	\begin{align*}
		u'(t) \leq f(t, u(t)), \qquad v'(t) = f(t, v(t)). 
	\end{align*}
	In addition, we assume $u(0) \leq v(0)$. Then $u(t) \leq v(t)$ for all $t \in [0, T]$. 
\end{lemma}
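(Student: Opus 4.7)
The plan is to prove the comparison statement by the standard perturbation trick: enlarge the right-hand side of the $v$-equation by a small amount $\epsilon>0$, show that the resulting solution strictly dominates $u$ on the whole interval, and then send $\epsilon\to 0$. This sidesteps the need to analyze the sign of $u-v$ directly via Gr\"onwall, which is awkward because the differential inequality on $u$ only gives one-sided control.

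First I would set up the perturbed problem. Let $L$ denote the Lipschitz constant of $f$ in $u$. For each $\epsilon>0$, let $v_\epsilon$ be the $C^1$ solution on $[0,T]$ of
\begin{equation*}
v'_\epsilon(t) = f(t,v_\epsilon(t))+\epsilon, \qquad v_\epsilon(0)=v(0)+\epsilon,
\end{equation*}
whose existence and uniqueness on the full interval follow from continuity of $f$ and the global Lipschitz assumption (Picard--Lindel\"of plus the usual Gr\"onwall-based extension argument). In particular $v_\epsilon(0)=v(0)+\epsilon>v(0)\ge u(0)$, so $u(0)<v_\epsilon(0)$ strictly.

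The core step is the strict inequality $u(t)<v_\epsilon(t)$ for every $t\in[0,T]$. I would argue by contradiction: set $t^\star:=\inf\{t\in[0,T]: u(t)\ge v_\epsilon(t)\}$. Continuity of $u-v_\epsilon$ together with the strict initial gap gives $t^\star>0$, and since the sub-level set $\{u-v_\epsilon\ge 0\}$ is closed while $u(s)<v_\epsilon(s)$ for every $s<t^\star$, we must have $u(t^\star)=v_\epsilon(t^\star)$. Writing $w:=u-v_\epsilon$, the left difference quotient $\tfrac{w(t^\star)-w(s)}{t^\star-s}=\tfrac{-w(s)}{t^\star-s}$ is nonnegative for $s<t^\star$, so $w'(t^\star)\ge 0$. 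But using $u'(t^\star)\le f(t^\star,u(t^\star))$ and $u(t^\star)=v_\epsilon(t^\star)$,
\begin{equation*}
w'(t^\star)=u'(t^\star)-v'_\epsilon(t^\star)\le f(t^\star,u(t^\star))-f(t^\star,v_\epsilon(t^\star))-\epsilon=-\epsilon<0,
\end{equation*}
a contradiction. Hence $u(t)<v_\epsilon(t)$ throughout $[0,T]$.

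Finally I would let $\epsilon\to 0$. A direct Gr\"onwall estimate on $\psi(t):=|v_\epsilon(t)-v(t)|$ using $\psi'\le L\psi+\epsilon$ and $\psi(0)=\epsilon$ gives $\sup_{t\in[0,T]}\psi(t)\le \epsilon(1+T)e^{LT}\to 0$, so $v_\epsilon\to v$ uniformly. Passing to the limit in $u(t)<v_\epsilon(t)$ yields $u(t)\le v(t)$ on $[0,T]$, as claimed. The one delicate point is the derivative inequality $w'(t^\star)\ge 0$ at the first touching time; this is exactly why the strict initial separation introduced by the $\epsilon$-perturbation is essential, as without it one cannot rule out tangential contact at $t=0$ where the comparison would be vacuous.
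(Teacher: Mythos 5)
Your proof is correct. Note that the paper does not actually prove Lemma~\ref{lemma:comparison-theorem}; it invokes it as a known result with a citation to McNabb (1986), so there is no in-paper argument to compare against. Your $\epsilon$-perturbation argument is the classical route to this statement: the strict initial gap plus the extra $+\epsilon$ in the drift rules out a first touching time via the sign contradiction $0\le w'(t^\star)\le-\epsilon$, and the Gr\"onwall estimate $\sup_{[0,T]}|v_\epsilon-v|\le\epsilon(1+T)e^{LT}$ lets you pass to the limit. Two harmless technical remarks: in the contradiction step you should state explicitly that you assume the set $\{t: u(t)\ge v_\epsilon(t)\}$ is nonempty (otherwise there is nothing to prove), and for the Gr\"onwall step it is cleaner to work with the integral form $|v_\epsilon(t)-v(t)|\le\epsilon+\int_0^t\bigl(L|v_\epsilon(s)-v(s)|+\epsilon\bigr)\,\diff s$, since $|v_\epsilon-v|$ need not be differentiable where the two curves cross; neither point affects the validity of the argument.
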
 
As a direct consequence of Lemma \ref{lemma:comparison-theorem}, Eq.~\eqref{eq:diff-x-mu-mu} and \eqref{eq:diff-z-mu-mu}, we derive the following lemma: 
\begin{lemma}
\label{lemma:compare-inner-product}
   Under model \eqref{model:GMM} and Assumption \ref{assumption:confidence}, suppose in addition that $\langle z_0, \mu_y - \mu_{y'}  \rangle \leq \langle x_0, \mu_y - \mu_{y'} \rangle$. Then, it holds that
   \begin{align}
    	\langle x_t, \mu_y - \mu_{y'} \rangle \geq \langle z_t, \mu_y - \mu_{y'} \rangle,
    \end{align} 
   for all $t \in [0, T]$.   
\end{lemma}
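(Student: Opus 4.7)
The plan is to apply the ODE comparison theorem (Lemma \ref{lemma:comparison-theorem}) to the one-dimensional projections $u(t) := \langle z_t, \mu_y - \mu_{y'}\rangle$ and $v(t) := \langle x_t, \mu_y - \mu_{y'}\rangle$, viewed as scalar functions of $t$. The hypothesis $\langle z_0, \mu_y - \mu_{y'}\rangle \leq \langle x_0, \mu_y - \mu_{y'}\rangle$ supplies the needed initial ordering $u(0) \leq v(0)$, after which the comparison theorem will deliver the desired inequality for every $t \in [0, T]$.

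To set up the comparison, I would define
\[
f(t, w) := e^{-(T - t)} \|\mu_y\|_2^2 - e^{-(T - t)} \langle \mu_y, \mu_{y'} \rangle,
\]
which depends on $t$ alone and is hence trivially Lipschitz in $w$. By Eq.~\eqref{eq:diff-z-mu-mu}, the unguided projection satisfies $u'(t) = f(t, u(t))$ with equality, while the lower bound in Eq.~\eqref{eq:diff-x-mu-mu} can be rephrased as
\[
v'(t) \;\geq\; f(t, v(t)) + \gs e^{-(T - t)} \bigl(1 - q_{T - t}(x_t, y)\bigr) \bigl(\|\mu_y - \mu_0\|_2^2 - 3\varepsilon\bigr).
\]
The next step is to check that the extra term on the right is nonnegative: $\gs \geq 0$ by assumption, $1 - q_{T - t}(x_t, y) \in [0, 1]$ because $q_{T - t}(x_t, \cdot)$ is a probability vector over $\cY$, and $\|\mu_y - \mu_0\|_2^2 - 3\varepsilon \geq 0$ by the quantitative condition $\varepsilon \leq \|\mu_y - \mu_0\|_2^2 / 3$ in Assumption \ref{assumption:confidence}. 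Consequently $v'(t) \geq f(t, v(t))$.

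With $f$ independent of its second argument, the comparison is especially clean: subtracting the two differential inequalities yields $v'(t) - u'(t) \geq 0$ pointwise, and integrating on $[0, t]$ together with $v(0) \geq u(0)$ gives $v(t) \geq u(t)$ throughout $[0, T]$, which is exactly the claimed inequality. Equivalently, this is a direct application of Lemma \ref{lemma:comparison-theorem} (taking Lipschitz constant zero in $w$). There is no genuine obstacle once the bounds \eqref{eq:diff-x-mu-mu} and \eqref{eq:diff-z-mu-mu} are granted; the only items needing verification are the sign of the residual term and the Lipschitz hypothesis, both of which are immediate under the stated assumptions and the restriction $\gs \geq 0$.
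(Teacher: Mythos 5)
Your proof is correct and takes essentially the same route as the paper, which also derives Lemma~\ref{lemma:compare-inner-product} as a direct consequence of Lemma~\ref{lemma:comparison-theorem} applied to Eqs.~\eqref{eq:diff-x-mu-mu} and \eqref{eq:diff-z-mu-mu}. Your explicit check that the guidance residual is nonnegative (using $\gs \geq 0$, the fact that $q_{T-t}(x_t, \cdot)$ is a probability vector, and the bound $\varepsilon \leq \|\mu_y - \mu_0\|_2^2 / 3$), together with the direct integration argument made possible because $f$ does not depend on its second argument, merely fills in details the paper leaves implicit.
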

Our proof of Theorem \ref{thm:DDIM-confidence} makes key use of Lemma \ref{lemma:compare-inner-product}, and 
offers a qualitative comparison between diffusion model with guidance and the original diffusion model. 
We refer the readers to Appendix \ref{sec:proof-thm:DDIM-confidence} for a complete proof of Theorem \ref{thm:DDIM-confidence}. 
We also prove a result below which quantitatively measures the role of guidance. 
We provide the proof of Theorem~\ref{thm:DDIM-confidence-quan} in Appendix \ref{sec:proof-thm:DDIM-confidence-quan}.

\begin{theorem}
\label{thm:DDIM-confidence-quan}
Under the assumptions of Theorem \ref{thm:DDIM-confidence}, for any $\gs \geq 0$, it holds that 
\begin{align*}
    \cP(x_T, y) \geq \frac{\cP(z_T, y)}{\cP(z_T, y) + (1 - \cP(z_T, y) )\cdot \exp(-\cU)} \geq \cP(z_T, y). 
\end{align*}
In the above display, $\cU \in \RR_{\geq 0}$ is any real number that satisfies
\begin{align*}
	\cU < \langle x_0 - z_0, \mu_y - \mu_{y'} \rangle + (1 - e^{-T}) \cdot \gs e^{-\Delta / 8} (\|\mu_y - \mu_0\|_2^2 - 3 \varepsilon) \cdot \min \Big\{\cF\big( \max_{0 \leq t \leq T} \cP(z_t, y), \cU \big),\, \xi_w \Big\}. 
\end{align*} 
%
%
%
%
Here, 
\begin{align}
\label{eq:cF-xi-Delta}
	\cF(p, u ) = \frac{(1 - p)e^{-u}}{p + (1 - p) e^{-u}}, \qquad \xi_w = 1 - w_y / (w_y + \min_{y' \neq y} w_{y'} ), \qquad \Delta = \max_{y' \in \cY} |\|\mu_y\|_2^2 - \|\mu_{y'}\|_2^2|.
\end{align}
%
%
%
Note that the lower bound above (with an optimal choice of $\cU$) converges to 1 as $\eta \to \infty$.
In addition, for a sufficiently large $\eta$ it holds that
\begin{align*}
	\cP(x_T, y) \geq 1 - \frac{-C_0 - \mbox{logit} (\cP(x_0, y)) + \log \eta}{\eta C_1}, 
\end{align*}  
where $C_0 = \min_{y' \in \cY} (1 - e^{-T}) \langle \mu_y, \mu_y - \mu_{y'} \rangle$, $C_1 = e^{-\Delta / 8} (\|\mu_y - \mu_0\|_2^2 - 3 \varepsilon)$, and $\mbox{logit}(p) = \log (p / (1 - p))$.  
\end{theorem}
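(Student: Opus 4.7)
The plan is to establish the equivalent logit-space bound
$\mathrm{logit}(\cP(x_T, y)) - \mathrm{logit}(\cP(z_T, y)) \geq \cU$,
which corresponds to the first displayed lower bound via the identity $\mathrm{logit}^{-1}(\mathrm{logit}(p) + \cU) = p/(p + (1-p)e^{-\cU})$. The starting point is to integrate the ODE inequality Eq.~\eqref{eq:diff-x-mu-mu} minus the ODE equality Eq.~\eqref{eq:diff-z-mu-mu} over $[0, T]$, yielding, for every $y' \in \cY \setminus \{y\}$,
\begin{align*}
\langle x_T - z_T,\, \mu_y - \mu_{y'} \rangle
\;\geq\; \langle x_0 - z_0,\, \mu_y - \mu_{y'} \rangle
+ \gs\bigl(\|\mu_y - \mu_0\|_2^2 - 3\varepsilon\bigr)\int_0^T e^{-(T-t)}\bigl(1 - q_{T-t}(x_t, y)\bigr)\diff t,
\end{align*}
where the integral term is independent of $y'$, and $\|\mu_y - \mu_0\|_2^2 - 3\varepsilon \geq 0$ by Assumption~\ref{assumption:confidence}.

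I would next pass from inner products to logits. Under the closed form of $\cP$ (with $\Sigma = I_d$), applying the log-sum-exp inequality $\log\sum_{y'}\alpha_{y'}e^{\beta_{y'}} \leq \max_{y'}\beta_{y'}$ with softmax weights $\alpha_{y'}$ evaluated at $z_T$ gives
\begin{align*}
\mathrm{logit}(\cP(x_T, y)) - \mathrm{logit}(\cP(z_T, y)) \;\geq\; \min_{y' \neq y}\langle x_T - z_T,\, \mu_y - \mu_{y'}\rangle,
\end{align*}
thus reducing the proof to a lower bound on $\int_0^T e^{-(T-t)}(1 - q_{T-t}(x_t, y))\diff t$. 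Writing $q_{T-t}$ as a softmax with effective temperature $s = e^{-(T-t)} \in [0, 1]$ and separating its exponent as $s\bigl(\langle \mu_{y'}, x_t\rangle - \|\mu_{y'}\|_2^2/2\bigr) + s(1-s)\|\mu_{y'}\|_2^2/2$, the inequality $s(1-s) \leq 1/4$ bounds the discrepancy between $q_{T-t}$ and $\cP$ by a factor of $e^{\Delta/8}$, while the prior weights supply a uniform floor. A short manipulation then produces $1 - q_{T-t}(x_t, y) \geq e^{-\Delta/8}\min\{1 - \cP(x_t, y),\, \xi_w\}$.

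The delicate part, and the main obstacle, is that the integrand depends on $\cP(x_t, y)$ itself, which is precisely why $\cU$ appears on both sides of the hypothesis: a self-consistency bootstrap is required to close the loop. I would proceed by contradiction, assuming $\mathrm{logit}(\cP(x_T, y)) - \mathrm{logit}(\cP(z_T, y)) < \cU$. Exploiting the monotonicity of $\phi(t) := \min_{y'}\langle x_t - z_t, \mu_y - \mu_{y'}\rangle$ --- which is non-decreasing because the right-hand side of Eq.~\eqref{eq:diff-x-mu-mu} exceeds that of Eq.~\eqref{eq:diff-z-mu-mu} --- together with a continuity argument, one shows that $\cP(x_t, y)$ remains below the threshold $1 - \cF(\max_{0 \leq t \leq T}\cP(z_t, y), \cU)$ on a sufficiently large sub-interval. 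Combining this with the integrand estimate of the previous step and the identity $\int_0^T e^{-(T-t)}\diff t = 1 - e^{-T}$, the strict inequality in the hypothesis forces $\mathrm{logit}(\cP(x_T, y)) - \mathrm{logit}(\cP(z_T, y)) > \cU$, a contradiction.

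For the asymptotic bound in the large-$\eta$ regime, I would optimize $\cU$ within the self-consistent range. Taking $\cU \asymp \log\eta$, the quantity $\cF(\max_t\cP(z_t, y), \cU) \approx (1-p^*) e^{-\cU}/p^*$ becomes the active minimum (dominating $\xi_w$), so the hypothesis reduces to $\cU < A + \eta\, C_1'\,e^{-\cU}$ for an explicit constant. Solving for the largest admissible $\cU$ yields $\cU \approx \log\eta - \log\bigl(-C_0 - \mathrm{logit}(\cP(x_0, y)) + \log\eta\bigr)$ up to lower-order terms, where $C_0 = \min_{y'}(1-e^{-T})\langle \mu_y, \mu_y - \mu_{y'}\rangle$ arises from analogously integrating Eq.~\eqref{eq:diff-z-mu-mu} to relate $\mathrm{logit}(\cP(z_T, y))$ back to $\mathrm{logit}(\cP(x_0, y))$ under the initialization condition, and $C_1 = e^{-\Delta/8}(\|\mu_y - \mu_0\|_2^2 - 3\varepsilon)$ comes directly from the integrand bound. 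Inverting the logit then recovers the claimed $\cP(x_T, y) \geq 1 - (-C_0 - \mathrm{logit}(\cP(x_0, y)) + \log\eta)/(\eta C_1)$.
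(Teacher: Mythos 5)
For the first claim your strategy is structurally the same as the paper's, only recast in logit space: the paper bounds the guidance term via $1 - q_{T-t}(x_t,y) \geq e^{-\Delta/8}\min\{1-\cP(x_t,y),\xi_w\}$ (their function $G$), integrates the growth inequality, and runs a self-consistency bootstrap in which they suppose $\langle x_t - z_t,\mu_y-\mu_{y'}\rangle \in [0,\cU]$ for all $t$ and $y'$, derive $\cP(x_t,y) \leq 1-\cF(\max_t\cP(z_t,y),\cU)$, and pass to the contrapositive. Your logit identity and log-sum-exp step are equivalent reformulations. However, there is a substantive issue with your contradiction argument as sketched: you track $\phi(t) := \min_{y'\neq y}\langle x_t-z_t,\mu_y-\mu_{y'}\rangle$ and observe $\mathrm{logit}(\cP(x_T,y)) - \mathrm{logit}(\cP(z_T,y)) \geq \phi(T)$, so assuming the logit gap is $<\cU$ only gives $\phi(T) < \cU$. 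But the bound $\cP(x_t,y) \leq 1-\cF(\cdot,\cU)$ that feeds back into the growth rate requires $\max_{y'\neq y}\langle x_t-z_t,\mu_y-\mu_{y'}\rangle \leq \cU$ (one must lower-bound every term in the softmax denominator of $\cP(x_t,y)$), which controlling the min does not supply. In $|\cY|=2$ this distinction vanishes, but for general $|\cY|$ you cannot close the loop with $\phi$ alone; you need to control the max (which is what the paper's phrasing supposes), and the argument relating the eventual conclusion to the minimum inner product also needs care. This is a genuine gap, not just a packaging difference.

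For the second claim your route is genuinely different from the paper's. You propose optimizing $\cU$ in the first-part bound and then chaining $\mathrm{logit}(\cP(z_T,y)) \geq \mathrm{logit}(\cP(x_0,y)) + C_0$ by integrating Eq.~\eqref{eq:diff-z-mu-mu} for the unguided process. The paper instead works directly with the guided process $x_t$: it introduces $p(\eta)$ as an a posteriori upper bound for $\cP(x_T,y)$, shows $\cP(x_t,y)$ is monotone in $t$ when $\langle\mu_y,\mu_y-\mu_{y'}\rangle \geq 0$ for all $y'$ (so that $q_{T-t}(x_t,y) \leq \max\{G(p(\eta)),G(1-\xi_w)\}$), integrates, and solves the resulting self-consistent relation in $p(\eta)$; a separate argument then handles the case where some $\langle\mu_y,\mu_y-\mu_{y'}\rangle$ is negative, by showing that once $1-\cP(x_t,y)$ drops below a threshold of order $\eta^{-1}$ it stays there. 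Your modular route is cleaner conceptually, but it inherits the constant $\max_{0\leq t\leq T}\cP(z_t,y)$ inside $\cF$, which the paper's direct argument (bounding $\cP(x_T,y)$ by itself rather than by $\cP(z_T,y)$) avoids, and your sketch also does not address the case-split on the sign of $\langle\mu_y,\mu_y-\mu_{y'}\rangle$, which is needed to make $t\mapsto\cP(x_t,y)$ monotone (or, failing that, to establish an absorbing-region argument). If you want to pursue the modular route you would need to check that the $\max_t\cP(z_t,y)$ dependence still yields the stated $1-O(\eta^{-1}\log\eta)$ rate and handle both sign cases; as written the sketch does not quite recover the paper's theorem.
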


The first part of Theorem \ref{thm:DDIM-confidence-quan} quantifies the effect of guidance strength on $\cP(x_t, y)$ and provides lower bounds with respect to the non-guided process $\cP(z_t, y)$ everywhere along the diffusion path. 
We note that this lower bound serves as an initial attempt and might be still far from tight. We leave the improvement to future works. 
The second part of Theorem \ref{thm:DDIM-confidence-quan} implies that $\cP(x_T, y) \to 1$ as $\eta \to \infty$, and the convergence rate is at least $1 - O(\eta^{-1} \log \eta)$.  
In another word, if the guidance strength is chosen to be very large, then the classification confidence will be close to one.

\subsection{Effect on the DDPM sampler}
\label{sec:effect-DDPM}

We then switch to consider the DDPM sampler, and we  compare in this section $\cP(\bar{x}_t, y)$ and $\cP(\bar{z}_t, y)$, where we recall that $\{\bar x_t\}_{0 \leq t \leq T}$ and $\{\bar z_t\}_{0 \leq t \leq T}$ are defined respectively in Eq.~\eqref{eq:dxt-bar-long} and \eqref{eq:unguided}. A notable distinction with the DDIM result arises in the need for an SDE comparison theorem, which we state as Lemma \ref{lemma:SDE-comparison-theorem} in the appendix. 
Lemma \ref{lemma:SDE-comparison-theorem} enables us to establish the following theorem, the proof of which can be found in Appendix \ref{sec:proof-thm:DDPM-confidence}. 
\begin{theorem}
\label{thm:DDPM-confidence}
   We assume the assumptions of Theorem \ref{thm:DDIM-confidence}, 
   then for any $\gs \geq 0$, almost surely we have $$\cP(\bar x_t, y) \geq \cP(\bar z_t, y)$$ for all $t \in [0, T]$.  
\end{theorem}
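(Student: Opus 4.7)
My plan is to mirror the DDIM argument of Theorem~\ref{thm:DDIM-confidence}, reducing the posterior comparison $\cP(\bar x_t, y) \geq \cP(\bar z_t, y)$ to pathwise inequalities on the scalar projections $U_t^{(y')} := \langle \bar x_t, \mu_y - \mu_{y'} \rangle$ and $V_t^{(y')} := \langle \bar z_t, \mu_y - \mu_{y'} \rangle$ for every $y' \in \cY \setminus \{y\}$. The key new ingredient relative to DDIM is to \emph{couple} the two samplers by driving them with a common Brownian motion $(B_t)$; because the diffusion coefficients of $\bar x_t$ and $\bar z_t$ are both the deterministic constant $\sqrt{2}\,I_d$, this coupling makes the noise cancel cleanly in the difference and avoids having to handle a genuinely state-dependent diffusion in Lemma~\ref{lemma:SDE-comparison-theorem}.

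Concretely, I would take the inner product of Eq.~\eqref{eq:dxt-bar-long} with $\mu_y - \mu_{y'}$ and repeat the manipulations leading to \eqref{eq:diff-x-mu-mu}. Under Assumption~\ref{assumption:confidence}, which enforces $\Sigma_{T-t} = I_d$ and permits the same $\varepsilon$-control on the remainder $\cE_t$ used in the DDIM sketch, this should yield
\begin{align*}
dU_t^{(y')} \geq \bigl( -U_t^{(y')} + 2 e^{-(T-t)} \langle \mu_y, \mu_y - \mu_{y'} \rangle \bigr)\, dt + \sqrt{2}\, \langle \mu_y - \mu_{y'}, dB_t \rangle,
\end{align*}
while the unguided projection $V_t^{(y')}$ satisfies the same relation with equality (i.e.\ $\gs = 0$). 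Setting $D_t := U_t^{(y')} - V_t^{(y')}$ and subtracting, the Brownian increments cancel, and $D_t$ solves the pathwise linear ODE $\dot D_t = -D_t + R_t$ with
\begin{align*}
R_t \;\geq\; 2\gs e^{-(T-t)} \bigl(1 - q_{T-t}(\bar x_t, y)\bigr) \bigl(\|\mu_y - \mu_0\|_2^2 - 3\varepsilon\bigr) \;\geq\; 0,
\end{align*}
where non-negativity uses the bound $\varepsilon \leq \|\mu_y - \mu_0\|_2^2 / 3$ in Assumption~\ref{assumption:confidence}. Together with $D_0 \geq 0$ (which is exactly the hypothesis inherited from Theorem~\ref{thm:DDIM-confidence}), an integrating-factor argument gives $D_t \geq 0$ for all $t \in [0,T]$ on the full-probability event on which both SDE paths are continuous; this is exactly the conclusion one obtains by invoking Lemma~\ref{lemma:SDE-comparison-theorem} on the scalar pair $(U_t^{(y')}, V_t^{(y')})$.

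Finally, to convert projection bounds into the posterior statement, I write
\begin{align*}
\cP(x, y)^{-1} = 1 + \sum_{y' \neq y} \frac{w_{y'}}{w_y} \exp\!\Bigl( -\langle \mu_y - \mu_{y'}, x \rangle + \tfrac{1}{2}(\|\mu_y\|_2^2 - \|\mu_{y'}\|_2^2) \Bigr),
\end{align*}
whose right-hand side is monotonically decreasing in each $\langle \mu_y - \mu_{y'}, x \rangle$; hence $U_t^{(y')} \geq V_t^{(y')}$ for every $y' \neq y$ immediately produces $\cP(\bar x_t, y) \geq \cP(\bar z_t, y)$. The main obstacle I anticipate is not the comparison itself but rather justifying that the cancellation of the noise in $D_t$ really is valid despite the drift of $\bar x_t$ depending nonlinearly on $\bar x_t$ through $q_{T-t}(\bar x_t, \cdot)$: one has to argue that the drift bound leading to $R_t \geq 0$ is purely pointwise in the state, so that after coupling it survives as a pathwise almost-sure inequality rather than only in expectation. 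A secondary subtlety is upgrading the a.s.\ comparison at each fixed $t$ to an a.s.\ comparison simultaneously for all $t \in [0,T]$, which is handled by continuity of the SDE solutions on a countable dense subset.
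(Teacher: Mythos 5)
Your argument is correct and reaches the same conclusion as the paper, but it replaces the paper's key technical ingredient with a more elementary one. The paper reduces to the same projection inequalities $\langle \bar x_t, \mu_y - \mu_{y'}\rangle \geq \langle \bar z_t, \mu_y - \mu_{y'}\rangle$, derives the same drift lower bound (your $R_t \geq 2\gs e^{-(T-t)}(1-q_{T-t}(\bar x_t,y))(\|\mu_y-\mu_0\|_2^2 - 3\varepsilon) \geq 0$ is precisely the bound appearing in the paper's Eq.~\eqref{eq:SDE-long}), and then invokes the multi-dimensional SDE comparison theorem of Zhu (2010), stated as Lemma~\ref{lemma:SDE-comparison-theorem}. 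You instead observe directly that since both $(\bar x_t)$ and $(\bar z_t)$ share the constant diffusion $\sqrt{2}\,I_d$ and are coupled to the same Brownian path, the noise cancels in the difference, so $D_t = \langle \bar x_t - \bar z_t, \mu_y - \mu_{y'}\rangle$ obeys a \emph{pathwise} linear ODE $\dot D_t = -D_t + R_t$ with non-negative forcing and non-negative initial data, whence $D_t \geq 0$ by an integrating factor. This buys a self-contained, elementary argument that sidesteps the black-box lemma entirely; it is in fact somewhat cleaner than the paper's route, because the drift of $\langle \bar x_t, \mu_y - \mu_{y'}\rangle$ depends on all of $\bar x_t$ through $q_{T-t}(\bar x_t,\cdot)$ rather than on the scalar projection alone, so treating the comparison as a textbook scalar-SDE application of Lemma~\ref{lemma:SDE-comparison-theorem} requires some care, whereas your difference argument only needs the pointwise sign of the extra drift. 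The two worries you flag at the end are both handled automatically by your formulation: the drift bound is indeed pointwise in the state, and once $D_t$ is recognized as satisfying an ODE for each fixed $\omega$, the simultaneity over $t \in [0,T]$ is deterministic per path and needs no countable-dense-subset argument. (Minor typos: $q_{T-1}$ should read $q_{T-t}$, and the displayed differential inequality for $dU_t^{(y')}$ drops the non-negative $\gs$-term before re-introducing it via $R_t$, which is harmless but could be streamlined.)
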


We also develop a quantitative comparison, presented as Theorem \ref{thm:DDPM-confidence-quan} below, the proof of which is deferred to Appendix \ref{sec:proof-thm:DDPM-confidence-quan}.  

\begin{theorem}
    \label{thm:DDPM-confidence-quan}
    We assume the conditions of Theorem \ref{thm:DDIM-confidence}. Then, for any $\gs \geq 0$, almost surely we have 
    \begin{align*}
       \cP(\bar x_T, y) \geq \frac{\cP(\bar z_T, y)}{\cP(\bar z_T, y) + (1 - \cP(\bar z_T, y)) \cdot \exp(-\bar \cU)} \geq \cP(\bar z_T, y),  
    \end{align*}
    where $\bar \cU$ is any non-negative number that satisfies 
    \begin{align*}
    	\bar\cU < e^{-T}\langle \bar x_0 - \bar z_0, \mu_y - \mu_{y'} \rangle + \gs (1 - e^{-2T}) e^{-\Delta / 8} \min \big\{\cF\big( \max_{0 \leq t \leq T} \cP(\bar z_t, y),\, e^{T} \bar \cU \big),\, \xi_w\big\} (\|\mu_y - \mu_0\|_2^2 - 3 \varepsilon),
    \end{align*}
    where we recall that $(\cF, \xi_w, \Delta)$ are defined in Eq.~\eqref{eq:cF-xi-Delta}. One can verify that the above lower bound (with an optimal choice of $\bar \cU$) approaches 1 as $\gs$ tends to infinity. If we fix the path initialization and the Brownian motion realization and only set $\gs \to \infty$, then the convergence rate is at least $1 - O(\gs^{-e^{-T}} (\log \gs)^{2e^{-T}})$. 
    %
    %
\end{theorem}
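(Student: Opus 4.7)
The plan is to mirror the DDIM proof of Theorem~\ref{thm:DDIM-confidence-quan}, replacing the ODE comparison with a pathwise SDE coupling. First I would couple $\bar x_t$ and $\bar z_t$ via a common Brownian motion $(B_t)_{0 \leq t \leq T}$ and, for an arbitrary competitor $y' \in \cY$, study $D_t := \langle \bar x_t - \bar z_t,\, \mu_y - \mu_{y'}\rangle$. Because $\Sigma = I_d$ makes $\Sigma_{T-t}^{-1} = I_d$, the $\sqrt{2}\,\diff B_t$ terms cancel upon subtraction, and the identity-plus-linear-score drifts combine to leave a $-D_t$ piece, so $D_t$ obeys the random linear ODE
\begin{align*}
	\diff D_t = \bigl(-D_t + 2\gs \langle \nabla\log c_{T-t}(\bar x_t, y),\; \mu_y - \mu_{y'}\rangle\bigr)\diff t.
\end{align*}
Under Assumption~\ref{assumption:confidence}, the same bookkeeping that produced Eq.~\eqref{eq:xt-muy-muyp} yields
\begin{align*}
	\langle \nabla\log c_{T-t}(\bar x_t, y),\, \mu_y - \mu_{y'}\rangle \;\geq\; e^{-(T-t)}\bigl(1 - q_{T-t}(\bar x_t, y)\bigr)\bigl(\|\mu_y - \mu_0\|_2^2 - 3\varepsilon\bigr).
\end{align*}

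Next, solving the linear ODE with integrating factor $e^t$ and bounding the integrand below by its running minimum --- using $\int_0^T e^{2s - 2T}\diff s = (1 - e^{-2T})/2$ --- gives
\begin{align*}
	D_T \;\geq\; e^{-T} D_0 + \gs(1 - e^{-2T})(\|\mu_y - \mu_0\|_2^2 - 3\varepsilon)\,\min_{0 \leq s \leq T}\bigl(1 - q_{T-s}(\bar x_s, y)\bigr),
\end{align*}
which already exhibits the prefactor $\gs(1 - e^{-2T})$ advertised in the statement. To lower-bound this minimum I would invoke the Softmax formula in Eq.~\eqref{eq:def-qt} to link the noised posterior $q_{T-s}(\bar x_s, y)$ to the terminal confidence $\cP(\bar z_s, y)$: the factor $e^{-\Delta / 8}$ absorbs the quadratic correction $e^{-2(T-s)}(\|\mu_y\|_2^2 - \|\mu_{y'}\|_2^2)/2$ in the log-odds uniformly in $s$, while $\xi_w$ serves as the unconditional prior-weight ceiling that dominates when the $\cF$-based clause is slack. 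The $e^T$ multiplier on $\bar\cU$ inside $\cF$ arises from the integrating-factor representation itself: since $e^s D_s$ is non-decreasing in $s$, a terminal bound $D_T \geq \bar\cU$ is consistent with a trajectory-wide budget $D_s \leq e^{T-s}\bar\cU \leq e^T \bar\cU$, and this budget must be fed into $\cF$ when converting a $D_T$-level guarantee into a pathwise lower bound on $1 - q_{T-s}(\bar x_s, y)$.

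Finally, $\langle \bar x_T - \bar z_T, \mu_y - \mu_{y'}\rangle \geq \bar\cU$ for every $y'$ translates into the posterior bound through the Softmax identity $\log\bigl(\cP(\bar x_T, y)/\cP(\bar x_T, y')\bigr) - \log\bigl(\cP(\bar z_T, y)/\cP(\bar z_T, y')\bigr) = \langle \bar x_T - \bar z_T, \mu_y - \mu_{y'}\rangle$, summed over $y' \neq y$, yielding the stated sigmoid-type inequality. For the asymptotic rate as $\gs \to \infty$ with the Brownian path and $D_0$ fixed, the $\cF$-clause dominates the minimum: $\cF(p, e^T\bar\cU) \asymp (1-p)p^{-1}e^{-e^T\bar\cU}$, so the implicit inequality becomes $\bar\cU \lesssim \gs\cdot e^{-e^T\bar\cU}$, which inverts (taking $\log$ twice) to $e^T\bar\cU = \log\gs - 2\log\log\gs - O(1)$, producing $e^{-\bar\cU} = O\bigl(\gs^{-e^{-T}}(\log\gs)^{2e^{-T}}\bigr)$ and matching the advertised rate.

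The main technical obstacle is the self-referential definition of $\bar\cU$: lower-bounding $D_T$ requires a trajectory-wide upper bound on $q_{T-s}(\bar x_s, y)$, which in turn requires controlling the very projection $D_s$ that one is trying to lower-bound. Untangling this circularity --- and in particular arriving at the correct $e^T$ multiplier inside $\cF$ that produces the characteristic $\gs^{-e^{-T}}$ exponent in the rate --- is the delicate step, and is what forces the implicit form of the statement rather than a cleaner closed-form bound.
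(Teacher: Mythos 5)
Your proposal is correct and follows essentially the same route as the paper's Appendix~\ref{sec:proof-thm:DDPM-confidence-quan}: couple the two DDPM trajectories through a common Brownian motion, observe that the noise cancels so $D_t = \langle \bar x_t - \bar z_t, \mu_y - \mu_{y'}\rangle$ satisfies a linear ODE with the $-D_t$ piece, apply the integrating factor $e^t$ so that $e^t D_t$ is nondecreasing with a drift that is lower-bounded using the same $q_{T-t} \leq G(\cdot)$ machinery from the DDIM proof (giving $e^{-\Delta/8}$ and $\xi_w$), and resolve the circularity of the trajectory-wide control via the contrapositive/implicit inequality, which is exactly what produces the $e^T\bar\cU$ argument inside $\cF$ and the prefactor $\gs(1-e^{-2T})$. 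The rate argument via $e^{-\cU} = \gs^{-1}(\log\gs)^2$ and $\bar\cU = e^{-T}\cU$ is also identical. One sentence in your write-up is slightly imprecisely worded — ``a terminal bound $D_T \geq \bar\cU$ is consistent with a trajectory-wide budget $D_s \leq e^{T-s}\bar\cU$'' should read that the \emph{negation} $D_T < \bar\cU$ forces, via monotonicity of $e^s D_s$, the budget $D_s \leq e^{T-s}\bar\cU \leq e^T\bar\cU$, which then feeds into $\cF$ and yields the contradiction — but the logic you carry out is the correct contrapositive and matches the paper.
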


Theorems~\ref{thm:DDPM-confidence} and \ref{thm:DDPM-confidence-quan} are counterparts of the results for the DDIM sampler that we have established in Section \ref{sec:effect-DDIM}, indicating that adding guidance only increases the classification confidence for the DDPM sampler. Due to the stochastic nature of the DDPM sampler, the results in this section only hold almost surely.

%
%

	%

\subsection{Special case: GMM with two clusters}

The results presented in Sections \ref{sec:effect-DDIM} and \ref{sec:effect-DDPM} are derived based on Assumption \ref{assumption:confidence}. 
It turns out that we can further relax our assumptions when the number of Gaussian components is two (i.e., $|\cY| = 2$), which we report in this section. 

Without loss, we let $\cY = \{1, 2\}$, and assume guidance is towards the cluster that has label $1$. Correspondingly, the GMM considered here admits the following representation: 
\begin{align*}
    w_1 \normal(\mu_1, I_d) + w_2 \normal(\mu_2, I_d), 
\end{align*}
where $w_1, w_2 \in \RR_{\geq 0}$ satisfies $w_1 + w_2 = 1$. 

To summarize, in order to establish a similar set of results for the two-component GMM, we only require the second and the third points of Assumption \ref{assumption:confidence}. 
We collect results for the DDIM and the DDPM samplers separately below as Theorems \ref{thm:DDIM-two-clusters} and \ref{thm:DDPM-two-clusters}. 
We prove them in Appendices \ref{sec:proof-thm:DDIM-two-clusters} and \ref{sec:proof-thm:DDPM-two-clusters}, respectively. 

\begin{theorem}
\label{thm:DDIM-two-clusters}
    We assume $|\cY| = 2$, as well as the second and the third points of Assumption \ref{assumption:confidence}. Then the following statements regarding the DDIM sampler are true: 
    \begin{enumerate}
        \item If $\langle x_0, \mu_1 - \mu_2 \rangle \geq \langle z_0, \mu_1 - \mu_2 \rangle$, then $\cP(x_t, 1) \geq \cP(z_t, 1)$ for all $t \in [0, T]$. 
        \item If $\langle x_0, \mu_1 - \mu_2 \rangle \geq \langle z_0, \mu_1 - \mu_2 \rangle$, then 
        \begin{align*}
            \cP(x_T, 1) \geq \frac{\cP(z_T, 1)}{\cP(z_T, 1) + (1 - \cP(z_T, 1)) \cdot \exp(-\cU)} \geq \cP(z_T, 1), 
        \end{align*}
        where $\cU$ is any non-negative number that satisfies 
        \begin{align*}
        	\cU < 2\langle x_0 - z_0, \mu \rangle + 4 \gs e^{-\Delta_1 / 8} \|\mu\|_2^2 (1 - e^{-T}) \min \big\{\cF\big( \max_{0 \leq t \leq T} \cP(z_t, 1),\, \cU \big), \, w_2 \big\}. 
        \end{align*}
        In the above display, $\cF(\cdot)$ is defined in Eq.~\eqref{eq:cF-xi-Delta}, and $\Delta_1 = |\|\mu_1\|_2^2 - \|\mu_2\|_2^2|$. The lower bound above approaches one as $\gs \to \infty$. Furthermore, the convergence rate is at least $1 - O(\gs^{-1}(\log \gs)^2)$.
        %
    \end{enumerate}
\end{theorem}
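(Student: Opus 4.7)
The strategy for both parts exploits the special structure of the two-cluster case: when $|\cY| = 2$ and $\Sigma = I_d$, the posterior $\cP(x, 1)$ is a sigmoid of the single scalar $\langle \mu_1 - \mu_2, x\rangle$, and so is $q_{T-t}(\cdot, 1)$ with rescaled coefficients. By monotonicity of the sigmoid, both claims reduce to one-dimensional comparisons of $\langle x_t, \mu_1 - \mu_2\rangle$ against $\langle z_t, \mu_1 - \mu_2\rangle$. This is precisely why the two-cluster case dispenses with the small-cross-term assumption of Theorem~\ref{thm:DDIM-confidence}: with only one pair of clusters, the cross-term $\langle \mu_1, \mu_2\rangle$ is absorbed into the sigmoid constant rather than contributing an unsigned residual of the type $\cE_t$ that needed to be controlled in the multi-cluster proof.

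For Part 1, I would take the inner product of the DDIM drift in Eq.~\eqref{eq:dxt-long} with $\mu_1 - \mu_2$. Using $\Sigma_{T-t} = I_d$ together with $q_{T-t}(x_t, 1) + q_{T-t}(x_t, 2) = 1$, the guidance-dependent terms collapse cleanly to $\gs e^{-(T-t)}(1 - q_{T-t}(x_t, 1))(\mu_1 - \mu_2)$, yielding
\begin{align*}
\frac{\diff}{\diff t}\langle x_t, \mu_1 - \mu_2\rangle = e^{-(T-t)}\langle \mu_1, \mu_1 - \mu_2\rangle + \gs e^{-(T-t)}(1 - q_{T-t}(x_t, 1))\|\mu_1 - \mu_2\|_2^2.
\end{align*}
The corresponding ODE for $\langle z_t, \mu_1 - \mu_2\rangle$ obtained from Eq.~\eqref{eq:unguided} is identical but without the guidance term. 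Since the extra term is non-negative and $\langle x_0, \mu_1 - \mu_2\rangle \geq \langle z_0, \mu_1 - \mu_2\rangle$ by assumption, the ODE comparison theorem (Lemma~\ref{lemma:comparison-theorem}) gives $\langle x_t, \mu_1 - \mu_2\rangle \geq \langle z_t, \mu_1 - \mu_2\rangle$ throughout $[0, T]$, equivalent to $\cP(x_t, 1) \geq \cP(z_t, 1)$.

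For Part 2 I would integrate the guidance gap and close a self-consistent loop. Integrating the above ODE gives
\begin{align*}
\langle x_T - z_T, \mu_1 - \mu_2\rangle = \langle x_0 - z_0, \mu_1 - \mu_2\rangle + \int_0^T \gs e^{-(T-s)}(1 - q_{T-s}(x_s, 1))\|\mu_1 - \mu_2\|_2^2 \,\diff s.
\end{align*}
To lower bound $1 - q_{T-s}(x_s, 1)$ uniformly in $s$, I would write the logit of $q_{T-s}(x_s, 1)$ as $e^{-(T-s)}$ times the logit of $\cP(x_s, 1)$ plus an offset bounded in magnitude by $\Delta_1/8$, using $\max_{\tau \geq 0} e^{-\tau}(1-e^{-\tau}) = 1/4$ applied to the difference between the $e^{-2(T-s)}$ and $1$ coefficients of the squared-norm term. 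Combining with Part 1 and an auxiliary assumed gap $\cU$ between the terminal logits of $x_\cdot$ and $z_\cdot$ produces $1 - q_{T-s}(x_s, 1) \geq e^{-\Delta_1/8}\min\{\cF(\max_{0 \leq t \leq T}\cP(z_t, 1), \cU), w_2\}$, where the $\cF$-branch is the sigmoid-shifted class-$2$ posterior and the $w_2$-branch is the prior floor that dominates when the signal is too small for $\cF$ to be informative. Substituting into the integral (using $\int_0^T e^{-(T-s)}\diff s = 1 - e^{-T}$ and $\|\mu_1 - \mu_2\|_2^2 = 4\|\mu\|_2^2$ with $\mu = (\mu_1 - \mu_2)/2$) reproduces the displayed inequality defining $\cU$, and undoing the terminal sigmoid yields the claimed bound on $\cP(x_T, 1)$.

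The main obstacle is the self-consistency of the inequality for $\cU$: the pointwise lower bound on the integrand depends on $\cU$, which is in turn determined by the integral. I would resolve this by a continuation/maximality argument over admissible $\cU$: the strict inequality in the theorem statement ensures openness, and one selects the supremum of admissible values. For the asymptotic rate as $\gs \to \infty$, choosing $\cU = \Theta(\log \gs)$ makes $\cF(p, \cU) \asymp (1-p)e^{-\cU}$ dominate $w_2$, so the fixed-point condition reduces to $\cU \lesssim \gs e^{-\cU}$, which admits $\cU = \log \gs - O(\log\log \gs)$; propagating this through the sigmoid bound yields $1 - \cP(x_T, 1) = O(\gs^{-1}(\log \gs)^2)$ as claimed.
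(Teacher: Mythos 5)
Your proposal is correct and follows essentially the same route as the paper: Part~1 reduces $\cP$ to the scalar inner product $\langle x_t, \mu_1-\mu_2\rangle$ and applies the ODE comparison theorem, and Part~2 bounds $1-q_{T-t}(x_t,1)$ by $e^{-\Delta_1/8}\min\{\cF(\max_t\cP(z_t,1),\cU),\,w_2\}$, integrates, and closes the self-consistency loop with the choice $e^{-\cU}=\gs^{-1}(\log\gs)^2$. One small imprecision in your description of Part~2: the logit of $q_{T-s}(x_s,1)$ is not simply $e^{-(T-s)}$ times the logit of $\cP(x_s,1)$ plus a $\Delta_1/8$-bounded offset, since the prior contribution $(1-e^{-(T-s)})\log(w_1/w_2)$ is unbounded in general; the paper instead compares $q$ against an intermediate quantity $\tilde q$ (Eq.~\eqref{eq:def-tilde-q}) and splits into two cases according to the sign of $\langle x,\mu_1-\mu_2\rangle - (\|\mu_1\|_2^2-\|\mu_2\|_2^2)/2$, which is exactly the role played by the $w_2$ ``prior floor'' branch you correctly invoke.
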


\begin{theorem}
\label{thm:DDPM-two-clusters}
    We assume the conditions of Theorem \ref{thm:DDIM-two-clusters}, and consider the DDPM sampler. Then the following statements hold almost surely: 
    \begin{enumerate}
        \item If $\langle \bar x_0, \mu_1 - \mu_2 \rangle \geq \langle \bar z_0, \mu_1 - \mu_2\rangle$, then $\cP(\bar x_t, 1) \geq \cP(\bar z_t, 1)$ for all $t \in [0, T]$. 
        \item If $\langle \bar x_0, \mu_1 - \mu_2 \rangle \geq \langle \bar z_0, \mu_1 - \mu_2\rangle$, then  for all $t \in [0, T]$
        \begin{align*}
            \cP(\bar x_T, 1) \geq \frac{\cP(\bar z_T, 1)}{\cP(\bar z_T, 1) + (1 - \cP(\bar z_T, 1)) \cdot \exp(-\bar \cU)} \geq \cP(\bar z_T, 1),  
        \end{align*}
        where $\bar \cU$ is any non-negative number such that 
        \begin{align*}
        	\bar\cU < 2e^{-T}\langle \bar x_0 - \bar z_0, \mu \rangle + 4\gs e^{-\Delta_1 / 8} \|\mu\|_2^2  (1 - e^{-2T}) \min \Big\{\cF \big( \max_{0 \leq t \leq T} \cP(\bar z_t, 1), e^T\bar \cU \big), w_2\Big\},
        \end{align*}
        where we recall that $\cF(\cdot)$ is defined in Eq.~\eqref{eq:cF-xi-Delta}, and $\Delta_1 = |\|\mu_1\|_2^2 - \|\mu_2\|_2^2|$. The lower bound in the theorem converges to 1 as $\gs \to \infty$. Furthermore, the convergence rate is at least $1 - O(\gs^{-e^{-T}}(\log \gs)^{2 e^{-T}})$. 
        %
    \end{enumerate}
\end{theorem}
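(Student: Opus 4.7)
The plan is to exploit the simplifying structure of the two-cluster, isotropic-covariance case: with $|\cY| = 2$ and $\Sigma = I_d$, the posterior $\cP(\cdot, 1)$ is a sigmoid of the single scalar functional $\langle \cdot, \mu_1 - \mu_2 \rangle$, so the multi-dimensional comparison collapses to a one-dimensional one. This removes the need for the first clause of Assumption~\ref{assumption:confidence}, and the overall argument closely parallels, but is strictly simpler than, the proofs of Theorems~\ref{thm:DDPM-confidence} and \ref{thm:DDPM-confidence-quan}.

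For Part 1, I would couple $(\bar x_t)$ and $(\bar z_t)$ through a common Brownian motion $(B_t)_{t \in [0,T]}$. Taking inner products of \eqref{eq:dxt-bar-long} and the SDE form of \eqref{eq:unguided} with $\mu_1 - \mu_2$, the diffusion terms cancel exactly because both equal $\sqrt{2}\,\langle dB_t, \mu_1 - \mu_2 \rangle$. Consequently, the difference $\bar u_t := \langle \bar x_t - \bar z_t, \mu_1 - \mu_2 \rangle$ satisfies the pathwise deterministic ODE
\begin{equation*}
\frac{d \bar u_t}{d t} \;=\; -\bar u_t + 2\gs e^{-(T-t)}\, q_{T-t}(\bar x_t, 2)\, \|\mu_1 - \mu_2\|_2^2,
\end{equation*}
whose forcing term is nonnegative for every $\gs \geq 0$. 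Lemma~\ref{lemma:comparison-theorem} then yields $\bar u_t \geq e^{-t} \bar u_0 \geq 0$ almost surely, and the monotonicity of the sigmoid representation of $\cP(\cdot, 1)$ delivers $\cP(\bar x_t, 1) \geq \cP(\bar z_t, 1)$ almost surely.

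For Part 2, I would integrate the ODE explicitly, obtaining
\begin{equation*}
\bar u_T \;=\; e^{-T} \bar u_0 \;+\; 2\gs \|\mu_1 - \mu_2\|_2^2 \int_0^T e^{-2(T-s)}\, q_{T-s}(\bar x_s, 2)\, ds,
\end{equation*}
and then lower bound the integrand in two complementary ways. First, the exact sigmoid identity $q_{T-s}(\bar x_s, 2) = \cF\bigl( q_{T-s}(\bar z_s, 1),\, e^{-(T-s)} \bar u_s \bigr)$ reduces the task to converting the intermediate-time posterior $q_{T-s}(\bar z_s, 1)$ into the clean-scale posterior $\cP(\bar z_\cdot, 1)$, absorbing the discrepancy between the logits $a_{T-s}$ and $a_0$ into a single multiplicative factor $e^{-\Delta_1/8}$ arising from the worst-case offset of $e^{-2(T-s)}(\|\mu_1\|_2^2 - \|\mu_2\|_2^2)/2$. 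Second, a crude prior-based bound $q_{T-s}(\bar x_s, 2) \gtrsim w_2 \, e^{-\Delta_1/8}$ dominates in the saturation regime, and taking the minimum of the two produces the form appearing in the theorem. The dependence on $\bar u_s$ inside $\cF$ is handled by the envelope $\bar u_s \leq e^{T-s} \bar u_T$ (a direct consequence of the ODE), so replacing $\bar u_s$ by the candidate $\bar \cU$ and using that $\cF$ is decreasing in its second argument closes the self-consistent inequality. The asymptotic rate follows by solving this fixed point for large $\gs$: the $\cF$-branch eventually saturates at $\bar \cU \asymp e^{-T} \log \gs$, and the sigmoid tail bound $1 - \cP(\bar x_T, 1) \lesssim e^{-\bar u_T}$ then produces the rate $O\bigl( \gs^{-e^{-T}} (\log \gs)^{2 e^{-T}} \bigr)$.

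The main obstacle is the intermediate-to-terminal posterior reduction above, since $q_{T-s}(\bar z_s, 1)$ is in general distinct from $\cP(\bar z_s, 1)$ and the relation between them is nontrivial; the $e^{-\Delta_1/8}$ factor quantifies precisely this gap. In the multi-cluster analogue (Theorem~\ref{thm:DDPM-confidence-quan}), the corresponding step additionally needs the small-correlation clause of Assumption~\ref{assumption:confidence} to suppress cross terms $\langle \mu_y - \mu_0, \mu_{y'} - \mu_0 \rangle$; the two-cluster case eliminates them by structure, which is what affords the clean statement of Theorem~\ref{thm:DDPM-two-clusters}.
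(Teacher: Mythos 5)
Your proof is correct and reaches the same conclusion, but takes a genuinely different route from the paper in both parts, which is worth recording. For Part~1, the paper applies the SDE comparison theorem (Lemma~\ref{lemma:SDE-comparison-theorem}) to the pair of one-dimensional SDEs obeyed by $\langle \bar x_t, \mu \rangle$ and $\langle \bar z_t, \mu \rangle$ (where $\mu = (\mu_1 - \mu_2)/2$), relying on the observation that $q_{T-t}(\bar x_t, 1)$ depends on $\bar x_t$ only through this scalar so that both drifts are genuine one-dimensional functions. You instead subtract the two coupled SDEs directly: because the noise is identical, additive, and constant, it cancels, leaving the pathwise deterministic ODE $\bar u_t' = -\bar u_t + 2\gs e^{-(T-t)} q_{T-t}(\bar x_t, 2) \|\mu_1 - \mu_2\|_2^2$, whose nonnegative forcing integrates to $\bar u_t \geq e^{-t}\bar u_0 \geq 0$. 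This is more elementary, sidesteps the hypotheses of Lemma~\ref{lemma:SDE-comparison-theorem}, and is in effect a direct re-derivation of that theorem's content in the constant-additive-noise case. For Part~2, your exact sigmoid identity $q_{T-s}(\bar x_s, 2) = \cF\bigl(q_{T-s}(\bar z_s, 1),\, e^{-(T-s)}\bar u_s\bigr)$ is a structural observation the paper does not exploit; the paper instead passes through the one-sided inequality $1 - q_{T-t}(\bar x_t, 1) \geq e^{-\Delta_1/8}\min\{1 - \cP(\bar x_t, 1),\, w_2\}$ and then uses a uniform bound on $e^t \bar u_t$ to control $1 - \cP(\bar x_t, 1)$. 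Your identity, combined with the envelope $\bar u_s \leq e^{T-s}\bar u_T$, produces a fixed-point inequality featuring $\cF(\cdot, \bar\cU)$ rather than the paper's $\cF(\cdot, e^T\bar\cU)$, which is marginally tighter since $\cF$ is decreasing in its second argument, and certainly suffices. The one step your sketch compresses is the reduction of the intermediate-time posterior $q_{T-s}(\bar z_s, 1)$ to the clean-scale $\cP(\bar z_s, 1)$: the factor $e^{-\Delta_1/8}$ handles only the squared-norm bias offset, whereas the $e^{-(T-s)}$ compression of the inner-product logit forces a sign split (Eq.~\eqref{eq:two-cluster-q-bound1}--\eqref{eq:two-cluster-q-bound2}), which is exactly what introduces the $G_1(w_1)$ branch and hence the $w_2$ in the final minimum; your ``crude prior-based bound in the saturation regime'' is this case stated informally, so the argument is sound once that split is written out.
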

The results above confirm that diffusion model with guidance always promotes classification confidence in two-component GMMs. It is interesting to note that augmenting a center component to the two-component GMM leads to complicated consequence in terms of guidance; see details in Section~\ref{sec:example-curious}.


\section{Effect of guidance on distribution diversity}
\label{sec:diversity}

In this section, we investigate the impact of guidance on distribution diversity. 
%
We propose to employ the \emph{differential entropy} of probability distributions to measure diversity \citep{shannon1948mathematical}.
This section exclusively concentrates on the DDIM sampler.
To define differential entropy, we denote by $Q(t, x)$ the probability density function of $x_t$, where we recall that $(x_t)_{0 \leq t \leq T}$ is defined in Eq.~\eqref{eq:dxt-long}.
For comparison, we also denote by $Q_0(t, x)$ the probability density function of the unguided process $(z_t)_{0 \leq t \leq T}$ defined in Eq.~\eqref{eq:unguided}.
We shall prove in appendix that the probability density functions exist for all $t \in [0, T]$ if we assume it exists at $t = 0$. 
Our objective is to delineate the influence of diffusion guidance on the entropy functionals, as defined below:
\begin{align}
\label{eq:differentiable-entropy}
\begin{split}
	 & H(t) := - \int Q (t, x) \log Q (t, x) \diff x, \qquad 0 \leq t \leq T, \\
      & H_0(t) := - \int Q_0(t, x) \log Q_0(t, x) \diff x, \qquad 0 \leq t \leq T.
\end{split}
\end{align}
Intuitively, a high entropy indicates that the distribution is spread in the space, while on the contrary, a low entropy is oftentimes associated with relatively concentrated distributions.

We propose to analyze the evolution of the entropy using the Fokker-Planck equation \citep{fokker1914mittlere}, which characterizes the distributional evolution of the DDIM sampler. Readers may refer to Lemma~\ref{lemma:Fokker-Planck} for a detailed exposure. 
%

\begin{lemma}[Fokker–Planck equation]
\label{lemma:Fokker-Planck}
	Consider the $d$-dimensional SDE
	\begin{align*}
		\diff X_t = \mu(t, X_t) \diff t + \sigma(t, X_t) \diff B_t, 
	\end{align*}
 where $\mu, \sigma: \RR_{\geq 0} \times \RR^d \mapsto \RR^d$ satisfies $\|\mu(t, x) - \mu(t, y)\|_2 + \|\sigma(t, x) - \sigma(t, y)\|_2 \leq C \|x - y\|_2$ for some constant $C$ and all $x, y \in \RR^d$.
 Assume that the probability density function (w.r.t. the Lebesgue measure) of $X_t$ exists for all $t \in [0, T]$, and denote by $p(t, x)$ the probability density function for $X_t$. We also assume all the relevant functions are continuously differentiable, then 
	\begin{align*}
		\frac{\partial}{\partial t} p(t, x) = - \sum_{i = 1}^d \frac{\partial}{\partial x_i} \left[ \mu_i(t, x) p(t, x) \right] + \sum_{i = 1}^d \sum_{j = 1}^d \frac{\partial^2}{\partial x_i \partial x_j} \left[ D_{ij}(t, x) p(t, x) \right], 
	\end{align*}
	where $D(t, x) = \sigma(t, x) \sigma(t, x)^{\top} / 2$. 
	
\end{lemma}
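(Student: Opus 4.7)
The plan is to derive the Fokker--Planck equation via the standard ``test function'' (weak formulation) argument, using It\^o's formula and then integration by parts to shift derivatives from the test function onto the density. I would first fix an arbitrary test function $\varphi \in C_c^{\infty}(\RR^d)$ (smooth with compact support), so that all boundary terms appearing in later integration-by-parts steps vanish automatically, and so that all integrals in sight are finite.

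The first step is to apply It\^o's formula to $\varphi(X_t)$:
\begin{align*}
	\diff \varphi(X_t) = \sum_{i = 1}^d \mu_i(t, X_t) \frac{\partial \varphi}{\partial x_i}(X_t) \diff t + \sum_{i = 1}^d \sum_{j = 1}^d D_{ij}(t, X_t) \frac{\partial^2 \varphi}{\partial x_i \partial x_j}(X_t) \diff t + \diff M_t,
\end{align*}
where $D = \sigma\sigma^{\top} / 2$ and $M_t$ is a local martingale driven by the Brownian term. The Lipschitz assumption on $\mu$ and $\sigma$ combined with the compact support of $\varphi$ ensures that $M_t$ is a true martingale with zero mean. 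Taking expectations and rewriting via the density $p(t, x)$, this becomes
\begin{align*}
	\frac{\diff}{\diff t} \int \varphi(x) p(t, x) \diff x = \int \Big[ \sum_{i} \mu_i(t, x) \frac{\partial \varphi}{\partial x_i}(x) + \sum_{i, j} D_{ij}(t, x) \frac{\partial^2 \varphi}{\partial x_i \partial x_j}(x) \Big] p(t, x) \diff x.
\end{align*}

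The second step is to interchange the time derivative and the spatial integral on the left-hand side (justified by the $C^1$ assumption on $p$ stated in the lemma), and then to integrate by parts on the right-hand side. Since $\varphi$ has compact support, each integration by parts produces no boundary contribution, giving
\begin{align*}
	\int \mu_i(t, x) \frac{\partial \varphi}{\partial x_i}(x) p(t, x) \diff x = - \int \varphi(x) \frac{\partial}{\partial x_i} \big[ \mu_i(t, x) p(t, x) \big] \diff x,
\end{align*}
and, after two integrations by parts,
\begin{align*}
	\int D_{ij}(t, x) \frac{\partial^2 \varphi}{\partial x_i \partial x_j}(x) p(t, x) \diff x = \int \varphi(x) \frac{\partial^2}{\partial x_i \partial x_j} \big[ D_{ij}(t, x) p(t, x) \big] \diff x.
\end{align*}
Combining these identities yields
\begin{align*}
	\int \varphi(x) \Big\{ \frac{\partial p}{\partial t}(t, x) + \sum_{i} \frac{\partial}{\partial x_i}[\mu_i p] - \sum_{i, j} \frac{\partial^2}{\partial x_i \partial x_j}[D_{ij} p] \Big\} \diff x = 0.
\end{align*}
Since $\varphi \in C_c^{\infty}(\RR^d)$ was arbitrary and the bracketed quantity is continuous, the fundamental lemma of the calculus of variations gives the claimed pointwise identity.

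The only real obstacle is justifying the analytical steps rigorously: interchanging $\partial_t$ with $\int$ on the left-hand side, and verifying that the stochastic integral contributes no drift (so that the expectation identity is exact, not merely an inequality). Both are standard under the Lipschitz/$C^1$ hypotheses stated in the lemma --- the Lipschitz bound controls the growth of $\mu$ and $\sigma$ at infinity, ensuring square-integrability of the It\^o term and hence the martingale property, while the $C^1$ regularity of $p$ and the compact support of $\varphi$ control the dominated-convergence argument for the time derivative. Nothing in the argument uses the particular structure of the guided diffusion, so the lemma can then be applied to the DDIM sampler by reading off $\mu$ and $\sigma$ from Eq.~\eqref{eq:dxt-long}.
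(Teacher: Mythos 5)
The paper never actually proves Lemma~\ref{lemma:Fokker-Planck}: it is stated as a classical fact with a citation to \citep{fokker1914mittlere}, and the only place in the appendix that touches it (Appendix~\ref{sec:FK-derivation}) \emph{uses} the Fokker--Planck equation for a heuristic derivation and then explicitly avoids it in the rigorous proof of Theorem~\ref{thm:DDIM-diversity}. So there is no in-paper argument to compare against. Your weak-formulation derivation --- It\^o's formula on $\varphi(X_t)$ for $\varphi \in C_c^\infty$, taking expectations so the stochastic integral drops out, rewriting in terms of $p(t,x)$, integrating by parts (once on the drift term, twice on the diffusion term, with boundary terms vanishing by compact support), and then invoking the fundamental lemma of the calculus of variations --- is the standard textbook proof, and it is correct under the regularity the lemma assumes ($C^1$ density, continuously differentiable coefficients, Lipschitz bounds to ensure the It\^o integral is a true martingale). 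The two small points you flag at the end, interchanging $\partial_t$ with the spatial integral and verifying the martingale property, are handled adequately: boundedness of $\nabla\varphi$ plus continuity of $\sigma$ on the compact support controls the It\^o integrand, and the $C^1$ assumption on $p$ together with compact support of $\varphi$ justifies differentiating under the integral. Nothing is missing, and the argument is self-contained and more rigorous than the paper's treatment, which simply appeals to the classical reference.
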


%


Our theorem is stated below. A heuristic derivation based on the Fokker-Planck equation is in Appendix \ref{sec:FK-derivation}, and a formal proof of the theorem is postponed to Appendix \ref{sec:proof-thm:DDIM-diversity}.   
\begin{theorem}
\label{thm:DDIM-diversity}
	We assume that both $x_0$ and $z_0$ have 
	probability density functions with respect to the Lebesgue measure,  and the corresponding differential entropies exist and are finite, satisfying $H_0(0) \geq H(0)$. We also assume model \eqref{model:GMM}, $\Sigma$ is non-degenerate, as well as the second point of Assumption \ref{assumption:confidence}. Then for all $0 \leq t \leq T$, it holds that $H_0(t) \geq H(t)$. 
\end{theorem}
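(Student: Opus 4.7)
The plan is to track the entropy gap $H_0(t) - H(t)$ via the Fokker--Planck equation (Lemma \ref{lemma:Fokker-Planck}). Since both $(x_t)$ and $(z_t)$ are DDIM (ODE) processes with zero diffusion term, their densities $Q(t,\cdot)$ and $Q_0(t,\cdot)$ satisfy pure transport equations $\partial_t Q + \nabla_x \cdot (b\, Q) = 0$ and $\partial_t Q_0 + \nabla_x \cdot (b_0\, Q_0) = 0$, where $b$ and $b_0$ are the drifts appearing in Eqs.~\eqref{eq:dxt-long} and \eqref{eq:unguided}. A standard integration-by-parts computation (using decay of $Q$ at infinity) then yields the clean identities
\begin{align*}
	\frac{\diff}{\diff t} H(t) \;=\; \mathbb{E}\bigl[ \nabla_x \cdot b(t, x_t) \bigr], \qquad \frac{\diff}{\diff t} H_0(t) \;=\; \mathbb{E}\bigl[ \nabla_z \cdot b_0(t, z_t) \bigr],
\end{align*}
reducing the theorem to a pointwise comparison of the two divergences.

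A direct calculation for the unguided drift $b_0(t, x) = (I_d - \Sigma_{T-t}^{-1}) x + e^{-(T-t)} \Sigma_{T-t}^{-1} \mu_y$ gives $\nabla_x \cdot b_0(t, x) = d - \mathrm{tr}(\Sigma_{T-t}^{-1})$, which is deterministic in $x$. The guided drift differs from $b_0$ by the guidance correction $\gs e^{-(T-t)} \Sigma_{T-t}^{-1}\bigl(\mu_y - \bar\mu_{T-t}(x)\bigr)$, where $\bar\mu_s(x) := \sum_{y' \in \cY} q_s(x, y') \mu_{y'}$. Differentiating the softmax expression \eqref{eq:def-qt} gives $\nabla_x q_s(x, y') = q_s(x, y')\, e^{-s}\, \Sigma_s^{-1}\bigl(\mu_{y'} - \bar\mu_s(x)\bigr)$, and a short trace manipulation then shows
\begin{align*}
	\nabla_x \cdot b(t, x) \;-\; \nabla_x \cdot b_0(t, x) \;=\; -\,\gs\, e^{-2(T-t)} \, \mathrm{Var}_{Y \sim q_{T-t}(x, \cdot)}\!\bigl[ \Sigma_{T-t}^{-1} \mu_Y \bigr],
\end{align*}
where the right-hand side denotes the (coordinate-wise) trace of the covariance of the random vector $\Sigma_{T-t}^{-1} \mu_Y$ and is therefore non-positive for every $\gs \geq 0$.

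Assembling these three displays gives
\begin{align*}
	\frac{\diff}{\diff t}\bigl[ H_0(t) - H(t) \bigr] \;=\; \gs\, e^{-2(T-t)} \, \mathbb{E}\Bigl[ \mathrm{Var}_{Y \sim q_{T-t}(x_t, \cdot)}\!\bigl[ \Sigma_{T-t}^{-1} \mu_Y \bigr] \Bigr] \;\geq\; 0,
\end{align*}
so $H_0 - H$ is nondecreasing on $[0, T]$, and the hypothesis $H_0(0) \geq H(0)$ closes the argument. The main technical hurdle is not the algebra above but the regularity bookkeeping needed to make the Fokker--Planck manipulations rigorous: one must verify that $Q(t, \cdot)$ admits a density throughout $[0, T]$, that the integration-by-parts formula for $\frac{\diff}{\diff t} H(t)$ is valid (which requires sufficient tail decay of $Q \log Q$ and of $b\, \nabla Q$), and that $\nabla_x \cdot b(t, x)$ is integrable against $Q(t, \cdot)$. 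These can be handled by exploiting non-degeneracy of $\Sigma$ together with the fact that $\bar\mu_s$ and its Jacobian are uniformly bounded, so that $b$ is globally Lipschitz and induces a smooth pushforward of the initial density; I expect this step, rather than the variance computation, to consume the bulk of the formal proof.
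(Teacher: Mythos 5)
Your argument reproduces, essentially verbatim, the paper's own \emph{heuristic} derivation of Theorem~\ref{thm:DDIM-diversity} (Appendix~\ref{sec:FK-derivation}): pass to the Fokker--Planck/transport equation, integrate by parts to write $\tfrac{\diff}{\diff t}H(t)=\EE[\nabla_x\cdot b(t,x_t)]$, observe that the divergence of the guidance correction is $-\gs\,e^{-2(T-t)}$ times the trace of the covariance of $\Sigma_{T-t}^{-1}\mu_Y$ under the posterior $q_{T-t}(x,\cdot)$, hence $\leq 0$, and close with an ODE-type comparison. Your variance identity is algebraically correct and matches the paper's computation (the paper's version of the Laplacian formula drops the $e^{-2(T-t)}$ prefactor, but the sign is the same and that is all that is used).

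The gap is precisely the one you flag at the end and then leave as a ``to do.'' The paper is explicit that this Fokker--Planck route is \emph{not rigorous}: it is unclear whether the density satisfies a Fokker--Planck equation in the required sense, and the exchange of $\partial_t$ with $\int$ and the integration by parts (vanishing boundary terms, integrability of $Q\log Q$ and $b\,\nabla Q$) are unjustified. In the actual proof (Appendices~\ref{sec:proof-lemma:density-exist} and \ref{sec:proof-thm:DDIM-diversity}) the paper does \emph{not} patch these up; it abandons the Fokker--Planck calculation entirely. Instead, it exploits that the unguided ODE flow is affine, $z_t=M_tz_0+\xi_t$, so $H_0(t)=H_0(0)+\log\det M_t$; it then shows the guided flow map $G_t$ has an everywhere nondegenerate Jacobian $J_{0\to t}(x)$ with $\det J_{0\to t}(x)\le\det M_t$ (this is where the same negative-semidefinite Hessian of $\log p_{T-t}(y\mid x)$ enters, but through the flow Jacobian $J_{0\to t}=\exp(\int_0^t\nabla_x b\,\diff s)$, not through an entropy derivative); and finally it runs a local-inverse-function / Heine--Borel covering argument plus a change of variables for the pushforward density to conclude $H(t)\le H(0)+\log\det M_t\le H_0(t)$, handling potential non-injectivity of $G_t$ explicitly. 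This sidesteps every one of the regularity issues you anticipated. So while your algebra is right and the intuition is the correct one, the ``regularity bookkeeping'' you defer is not a routine appendix item: it is the entire content of the paper's rigorous argument, and the paper judged it easier to replace the Fokker--Planck identity with a Jacobian-determinant comparison than to justify the integration by parts. As written, your proposal is a correct heuristic sketch, not a proof.
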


In contrast to the results presented in Section \ref{sec:pre-confidence}, Theorem \ref{thm:DDIM-diversity} does not require an isotropic covariance matrix, and places no assumptions on the component centers. 
Mild regularity condition is imposed on the process initialization to ensure the existence of the differential entropy. 

Setting $t = T$, Theorem~\ref{thm:DDIM-diversity} says that the generated distribution under diffusion guidance has lower entropy compared to that without guidance. This corroborates the common observation displayed in Figure~\ref{fig:three-components}: diffusion guidance reduces diversity of the generated samples.





%

\section{Effect of guidance on discretized process}
\label{sec:discretization}

In practice, it is essential to employ discretization to approximate the continuous-time processes. To be specific, the algorithmic implementations of processes \eqref{eq:dxt-long} and \eqref{eq:dxt-bar-long} are as follows: 
\begin{align}
    & X_{k + 1} = X_k + \delta_k \left( X_k + \nabla_x \log p_{T - t_k} (X_k, y) + \gs \nabla_x \log p_{T - t_k} (y \mid X_k) \right), \label{eq:X-k+1-X-k}\\
    & \bar X_{k + 1} = \bar X_k + \delta_k \left( \bar X_k + 2\nabla_x \log p_{T - t_k}(\bar X_k, y) + 2\gs \nabla_x \log p_{T - t_k}(y \mid \bar X_k) \right) + \sqrt{2 \delta_k} W_k. \label{eq:bar-X-k+1-X-k}
\end{align}
In the above display, $W_k \sim \normal(0, I_d)$ and is independent of the previous iterates, $0 = t_0 < t_1 < \cdots < t_K \leq T$, $\delta_k 
> 0$ and $t_{k + 1} = \sum_{i = 0}^{k} \delta_i$ for all $k = 0, 1, \cdots, K - 1$.  

Analogously, to set up comparison, we also consider the discretized processes without guidance: 
\begin{align}
    & Z_{k + 1} = Z_k + \delta_k \left(Z_k + \nabla_x \log p_{T - t_k} (Z_k, y) \right), \\
    & \bar Z_{k + 1} = \bar Z_k + \delta_k \left( \bar Z_k + 2 \nabla_x \log p_{T - t_k} (\bar Z_k, y) \right) + \sqrt{2 \delta_k} W_k. 
\end{align}
We unify the discretization schemes for both the guided and the unguided processes to facilitate meaningful comparisons.
In the current regime, we are able to establish results related to classification confidence and distribution diversity, which we collect below. 
We utilize the widely recognized Euler discretization scheme to present our results. 
However, we note that with minimal adjustments, our findings can extend to accommodate other discretization schemes, for instance the ones based on the exponential integrator.  

\subsection{Results for the DDIM sampler}

We first investigate the classification confidence, and establish the following theorem. 
We postpone the proof of the theorem to Appendix \ref{sec:proof-thm:DDIM-confidence-dis}.  
\begin{theorem}
    \label{thm:DDIM-confidence-dis}
    We assume model \eqref{model:GMM} and Assumption \ref{assumption:confidence}.
    We also assume $\langle X_0, \mu_y - \mu_{y'} \rangle \geq \langle Z_0, \mu_y - \mu_{y'}\rangle$ for all $y' \in \cY$. 
    Then the following statements are true:
    \begin{enumerate}
        \item For all $k \in \{0\} \cup [K]$, it holds that $\cP(X_k, y) \geq \cP(Z_k, y)$.
        \item We let $\Delta_{\max} = \max_{j \in \{0\} \cup [K - 1]} \delta_j$, then for any $\gs \geq 0$, it holds that 
        \begin{align*}
        	\cP(X_K, y) \geq \frac{\cP(Z_K, y)}{\cP(Z_K, y) + (1 - \cP(Z_K, y)) \cdot \exp(-\cU)} \geq \cP(Z_K, y), 
        \end{align*}
        where $\cU > 0$ is any number that satisfies
        \begin{align*}
        	& \cU - \langle X_0 - Z_0, \mu_y - \mu_{y'} \rangle  \\
	< \,&  e^{-\Delta_{\max}} (1 - e^{-T})\Big(  \gs e^{-\Delta / 8}(\|\mu_y - \mu_0\|_2^2 - 3 \varepsilon) \cdot \min \{\cF(\max_{0 \leq k \leq K} \cP(Z_k, y), \cU), \xi_w\} \Big),
        \end{align*}
        where we recall that $(\cF, \xi_w, \Delta)$ are defined in Eq.~\eqref{eq:cF-xi-Delta}. Furthermore, as $\gs \to \infty$, we have $ \cP(X_K, y) \geq 1 - O(\gs^{-1} (\log \gs)^2)$. 
        %
        %
    \end{enumerate}
\end{theorem}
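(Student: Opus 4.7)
The plan is to adapt the continuous-time argument sketched for Theorem~\ref{thm:DDIM-confidence} and Theorem~\ref{thm:DDIM-confidence-quan} to the discrete recursion~\eqref{eq:X-k+1-X-k}. Since $\Sigma = I_d$ by Assumption~\ref{assumption:confidence}, we have $\Sigma_t = I_d$ for all $t$, so the score and classifier gradient from Eq.~\eqref{eq:true-cs} simplify to $\nabla_x \log p_t(x\mid y) = -x + e^{-t}\mu_y$ and $\nabla_x \log c_t(x,y) = e^{-t}\mu_y - e^{-t}\sum_{y'} q_t(x,y')\mu_{y'}$. Substituting these into~\eqref{eq:X-k+1-X-k} yields the explicit one-step update $X_{k+1} = X_k + \delta_k e^{-(T - t_k)}\bigl[\mu_y + \gs \sum_{y''\in\cY} q_{T-t_k}(X_k,y'')(\mu_y - \mu_{y''})\bigr]$, and analogously $Z_{k+1} = Z_k + \delta_k e^{-(T-t_k)}\mu_y$.

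Taking the inner product against $\mu_y - \mu_{y'}$ and mimicking the decomposition leading to Eq.~\eqref{eq:xt-muy-muyp}, we obtain a discrete analogue
\begin{align*}
    & \langle X_{k+1} - X_k,\, \mu_y - \mu_{y'}\rangle \\
     =\,\,& \delta_k e^{-(T-t_k)}\Big[\|\mu_y\|_2^2 - \langle \mu_y,\mu_{y'}\rangle + \gs(1 - q_{T-t_k}(X_k,y))\|\mu_y - \mu_0\|_2^2  + \gs q_{T-t_k}(X_k,y')\|\mu_{y'}-\mu_0\|_2^2 + \cE_k\Big],
\end{align*}
with $|\cE_k| \le 3\gs(1 - q_{T-t_k}(X_k,y))\varepsilon$, where the estimate on $\cE_k$ is identical to the one used in Appendix~\ref{sec:increase-confidence} since it is purely algebraic. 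Combined with the analogous (cleaner) update for $Z_k$, this gives the per-step comparison $\langle X_{k+1} - X_k - (Z_{k+1} - Z_k),\, \mu_y - \mu_{y'}\rangle \ge \delta_k e^{-(T-t_k)}\gs(1 - q_{T-t_k}(X_k,y))(\|\mu_y - \mu_0\|_2^2 - 3\varepsilon) \ge 0$. A straightforward induction on $k$, using the hypothesis $\langle X_0 - Z_0, \mu_y - \mu_{y'}\rangle \ge 0$, yields $\langle X_k - Z_k, \mu_y - \mu_{y'}\rangle \ge 0$ for every $y'\in\cY$ and every $k$. Since $\log(\cP(x,y)/\cP(x,y')) = \log(w_y/w_{y'}) + \langle\mu_y - \mu_{y'}, x\rangle - (\|\mu_y\|_2^2 - \|\mu_{y'}\|_2^2)/2$, this inner-product inequality immediately gives $\cP(X_k,y)/\cP(X_k,y') \ge \cP(Z_k,y)/\cP(Z_k,y')$, and summing $1/\cP(\cdot,y) = \sum_{y'} \cP(\cdot,y')/\cP(\cdot,y)$ gives $\cP(X_k,y) \ge \cP(Z_k,y)$, which is part 1.

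For the quantitative bound in part 2, I would telescope the one-step inequalities to obtain
\begin{align*}
    \langle X_K - Z_K, \mu_y - \mu_{y'}\rangle \ge \langle X_0 - Z_0, \mu_y - \mu_{y'}\rangle + \sum_{k=0}^{K-1} \delta_k e^{-(T-t_k)}\gs (1-q_{T-t_k}(X_k,y))(\|\mu_y-\mu_0\|_2^2 - 3\varepsilon).
\end{align*}
To pass from this Riemann-type sum back to the clean factor $(1 - e^{-T})$ appearing in Theorem~\ref{thm:DDIM-confidence-dis}, I use the mean value estimate $\int_{t_k}^{t_{k+1}} e^{-(T-t)}\,\diff t = e^{-(T-t_k)}(e^{\delta_k} - 1) \le \delta_k e^{\Delta_{\max}} e^{-(T-t_k)}$, which after summing gives $\sum_k \delta_k e^{-(T-t_k)} \ge e^{-\Delta_{\max}}(1 - e^{-T})$. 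This is exactly where the prefactor $e^{-\Delta_{\max}}$ in the statement originates. Next, following the continuous case, I would bound $1 - q_{T-t_k}(X_k,y)$ below by either $\xi_w$ (which is a trivial bound from the prior) or by $\cF(\cP(X_k,y), \text{logit-gap})$, after passing from the more-concentrated posterior $q_t$ to the less-concentrated $q_0 = \cP$ at the cost of a factor $e^{-\Delta/8}$ coming from the quadratic correction $e^{-2t}(\|\mu_y\|_2^2 - \|\mu_{y'}\|_2^2)/2$ in the log-odds (this is the precise place Lemma \ref{lemma:comparison-theorem}'s discrete counterpart reads off $\Delta$). Since $\cP(X_k,y) \ge \cP(Z_k,y)$ from part 1, we can replace $\cP(X_k,y)$ with $\max_{0\le k\le K}\cP(Z_k, y)$ in the monotonic bound to decouple the estimate.

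The final piece is the self-consistent (fixed-point) characterization of $\cU$: writing $\cU = \langle\mu_y - \mu_{y'}, X_K - Z_K\rangle$, the chain of inequalities above produces $\cU \ge \langle X_0 - Z_0, \mu_y - \mu_{y'}\rangle + e^{-\Delta_{\max}}(1 - e^{-T})\gs e^{-\Delta/8}(\|\mu_y-\mu_0\|_2^2 - 3\varepsilon)\min\{\cF(\max_k\cP(Z_k,y), \cU), \xi_w\}$, which is exactly the defining inequality for $\cU$ in the theorem statement; combined with the logit identity for $\cP$, this yields $\cP(X_K,y)/(1-\cP(X_K,y)) \ge e^{\cU}\cP(Z_K,y)/(1-\cP(Z_K,y))$, equivalent to the stated lower bound. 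The large-$\gs$ rate $1 - O(\gs^{-1}(\log\gs)^2)$ follows by choosing $\cU = \log\gs - C_0 - \log\log\gs$ (a valid feasible $\cU$ for large $\gs$) and unpacking the logit. The step I expect to be the main technical obstacle is the reduction from $1 - q_{T-t_k}(X_k,y)$ to a function of $\cP(X_k,y)$ with only the mild $e^{-\Delta/8}$ loss; getting the constants right requires carefully tracking how the scaling $e^{-(T-t_k)}$ of the logits interacts with the $\Delta$-dependent quadratic correction uniformly over $k$, which is where the details most sensitive to discretization sit.
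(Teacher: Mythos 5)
Your proposal is correct and follows essentially the same route as the paper's proof: a per-step inner-product comparison (the discrete analogue of Eq.~\eqref{eq:xt-muy-muyp} with the identical $\cE_k$ bound), induction for the monotonicity in part 1, and a telescoping sum combined with the fixed-point characterization of $\cU$ via $q_{T-t_k}\leq G(\cP)$ and $1-G(p)\geq e^{-\Delta/8}(1-p)$ for part 2. The only cosmetic difference is that you derive the Riemann-sum prefactor $e^{-\Delta_{\max}}(1-e^{-T})$ explicitly rather than quoting it, and you avoid a typo in the paper's Eq.~\eqref{eq:C5-new}-style step where the $\|\mu_y\|_2^2 - \langle\mu_y,\mu_{y'}\rangle$ term is carried into the difference process although it should have cancelled; the substance is unchanged.
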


From Theorem \ref{thm:DDIM-confidence-dis}, we see that the application of discretization preserves the boosting effect on classification confidence induced by diffusion guidance. 
Yet we note that the discretization step sizes $\{\delta_k\}_{0 \leq k \leq K - 1}$ interact with the increment of the classification confidence: Large step size leads to a marginal increase, as demonstrated by the second point of Theorem~\ref{thm:DDIM-confidence-dis}. 

In terms of the effect of guidance on distribution diversity,  under mild additional assumptions on the discretization scheme, we are able to establish results on differential entropy for the discretized DDIM sampler that is similar to Theorem \ref{thm:DDIM-diversity}. To set up the stage, for $k \in \{0, 1, \cdots, K\}$, we denote by $\cH(k)$ the differential entropy of $X_k$\footnote{Namely, $\cH(k) = -\int p_k(x)\log p_k(x) \diff x$, where $p_k(\cdot)$ is the density function of $X_k$. We shall prove in Appendix \ref{sec:proof-thm:DDIM-entropy-dis} that under mild assumptions such differential entropy exists.} and denote by $\cH_0(k)$ that of $Z_k$. 
Our main theorem for this part shows that under mild regularity conditions, it holds that $\cH(k) \leq \cH_0(k)$ for all $k \in \{0, 1, \cdots, K\}$. 
Theorem~\ref{thm:DDIM-entropy-dis} resembles the conclusion of Theorem~\ref{thm:DDIM-diversity} by requiring relatively small step sizes.
The proof of Theorem \ref{thm:DDIM-entropy-dis} is postponed to Appendix \ref{sec:proof-thm:DDIM-entropy-dis}.

\begin{theorem}
\label{thm:DDIM-entropy-dis}
    We assume both $X_0$ and $Z_0$ have  
    probability density functions with respect to the Lebesgue measure, and the corresponding differential entropies exist and are finite, satisfying $\cH(0) \leq \cH_0(0)$.
    We also assume model \eqref{model:GMM}, the second point of Assumption \ref{assumption:confidence}, and that $\Sigma$ is non-degenerate. 
    In addition, for all $k \in \{0\} \cup [K - 1]$, we require the step sizes are small enough such that
    \begin{align*}
        1 + \delta_k > \frac{\delta_k}{\sigma_{\min}(\Sigma) \wedge 1 } + \frac{\delta_k \gs \sup_{y' \in \cY} \|\mu_{y'}\|_2^2}{\sigma_{\min}(\Sigma)^2 \wedge 1}, \qquad \delta_k + \frac{\delta_k}{\sigma_{\min}(\Sigma) \wedge 1 } + \frac{\delta_k \gs \sup_{y' \in \cY} \|\mu_{y'}\|_2^2}{\sigma_{\min}(\Sigma)^2 \wedge 1} < 1 / 2, 
    \end{align*}
    where $\sigma_{\min}(\Sigma)$ is the minimum eigenvalue of $\Sigma$. 
    Then, for all $k \in \{0\} \cup [K]$ we have $\cH(k) \leq \cH_0(k)$. 
\end{theorem}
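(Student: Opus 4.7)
The plan is to run an induction on $k$, propagating the entropy inequality through the deterministic DDIM updates. Writing $X_{k+1} = F_k(X_k)$ and $Z_{k+1} = G_k(Z_k)$ with
\[
F_k(x) = (1 + \delta_k) x + \delta_k \nabla_x \log p_{T - t_k}(x \mid y) + \delta_k \gs \nabla_x \log p_{T - t_k}(y \mid x), \quad G_k(x) = (1 + \delta_k) x + \delta_k \nabla_x \log p_{T - t_k}(x \mid y),
\]
the updates are deterministic maps. Once I establish that $F_k$ and $G_k$ are $C^1$ diffeomorphisms of $\RR^d$, the change-of-variables formula for differential entropy gives
\[
\cH(k + 1) = \cH(k) + \mathbb{E}\bigl[\log |\det \nabla F_k(X_k)|\bigr], \qquad \cH_0(k + 1) = \cH_0(k) + \mathbb{E}\bigl[\log |\det \nabla G_k(Z_k)|\bigr].
\]
It then suffices to prove the step-wise inequality $\mathbb{E}[\log|\det \nabla F_k(X_k)|] \le \mathbb{E}[\log|\det \nabla G_k(Z_k)|]$, after which induction closes the proof from $\cH(0) \le \cH_0(0)$.

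The algebraic core is to compute the Jacobians from the closed forms in Eq.~\eqref{eq:true-cs}. A direct calculation yields $\nabla^2_x \log p_t(x \mid y) = -\Sigma_t^{-1}$, while differentiating the softmax form \eqref{eq:def-qt} and recognizing the resulting matrix as the Hessian of a log-partition function gives
\[
\nabla^2_x \log p_t(y \mid x) = -e^{-2t}\, \Sigma_t^{-1}\, \mathrm{Cov}_{q_t(x, \cdot)}(\mu_{Y})\, \Sigma_t^{-1} \preceq 0,
\]
an expression that, importantly, does not depend on the conditioning label $y$. Consequently $\nabla G_k$ is independent of $x$ and
\[
\nabla G_k - \nabla F_k(x) = \delta_k \gs\, e^{-2(T - t_k)}\, \Sigma_{T - t_k}^{-1}\, \mathrm{Cov}_{q_{T - t_k}(x, \cdot)}(\mu_Y)\, \Sigma_{T - t_k}^{-1} \succeq 0
\]
for every $x$, which yields the pointwise Loewner comparison $\nabla F_k(x) \preceq \nabla G_k$.

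The two step-size conditions are exactly what converts this Loewner inequality into a determinant inequality. Using $\|\Sigma_{T-t_k}^{-1}\|_{\mathrm{op}} \le 1/(\sigma_{\min}(\Sigma) \wedge 1)$ and $\|\mathrm{Cov}_{q_t(x,\cdot)}(\mu_Y)\|_{\mathrm{op}} \le \sup_{y' \in \cY}\|\mu_{y'}\|_2^2$, the first condition yields $\nabla F_k(x) \succ 0$ uniformly in $x$, and the second strengthens this to $\tfrac{1}{2} I \preceq \nabla F_k(x) \preceq (1 + \delta_k) I$. With both $\nabla F_k(x)$ and $\nabla G_k$ positive definite and $\nabla F_k(x) \preceq \nabla G_k$, the standard observation that $\nabla G_k^{-1/2} \nabla F_k(x) \nabla G_k^{-1/2} \preceq I$ forces its determinant into $(0, 1]$, hence $\det \nabla F_k(x) \le \det \nabla G_k$, so $\log |\det \nabla F_k(x)| \le \log |\det \nabla G_k|$ pointwise in $x$. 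Since the right-hand side is a constant, taking expectations under the two different laws delivers the required step-wise inequality at no extra cost.

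The main obstacle will be the regularity prerequisites for invoking the change-of-variables formula: verifying that each $F_k$ is a global (not merely local) $C^1$ diffeomorphism, that the law of $X_k$ remains absolutely continuous at every step, and that $\cH(k)$ stays finite throughout. The uniform strong monotonicity $\nabla F_k(x) \succeq \tfrac{1}{2} I$ from the second step-size condition yields global bijectivity via a Hadamard-type argument, and the two-sided bound $\tfrac{1}{2} I \preceq \nabla F_k(x) \preceq (1 + \delta_k) I$ renders $\log |\det \nabla F_k(\cdot)|$ uniformly bounded, so the relevant expectations are finite irrespective of the underlying density, and finiteness of $\cH(k)$ propagates inductively. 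These regularity checks---rather than the Loewner algebra itself---are where the two step-size conditions do their real work.
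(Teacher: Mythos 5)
Your proposal is correct, and it is in fact cleaner than the paper's own argument by exploiting a structural fact the paper leaves on the table. The paper writes down the same Jacobian
$\nabla_x F_k(x,\gs)=I_d+\delta_k\big(I_d-\Sigma_{T-t_k}^{-1}-\gs e^{-T+t_k}\Omega_{T-t_k}(x)\big)$,
establishes via the first step-size condition that it is positive definite, and then remarks that ``$F_k(x,\gs)$ is not necessarily one-to-one.'' Consequently it proceeds through an auxiliary affine surrogate $X'_{k+1}=A_kX_0+\beta_k$, an existence lemma for the density of $X_k$, and a Heine--Borel covering of $\RR^d$ into pieces on which $G_k(\cdot,\gs)$ is locally injective, summing entropies over pieces using $\det(A_k)^{-1}\leq\det(\nabla G_k(z,\gs))^{-1}$. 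You sidestep this machinery: since $F_k=\nabla\Phi_k$ with $\Phi_k(x)=\tfrac{1+\delta_k}{2}\|x\|_2^2+\delta_k\log p_{T-t_k}(x\mid y)+\delta_k\gs\log p_{T-t_k}(y\mid x)$, the Jacobian $\nabla F_k$ is automatically symmetric; the second step-size condition upgrades the positivity bound to $\nabla F_k(x)\succeq\big(\tfrac12+2\delta_k\big)I_d$, so $\Phi_k$ is strongly convex and $F_k$ is a global $C^1$ diffeomorphism (indeed $1$--$1$, contrary to the paper's caution). This makes the one-step entropy change-of-variables $\cH(k+1)=\cH(k)+\EE[\log\det\nabla F_k(X_k)]$ directly available, and the Loewner comparison $\nabla F_k(x)\preceq\nabla G_k$ (both positive definite) then gives the determinant inequality pointwise, closing the induction in one stroke. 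The two-sided eigenvalue bound also keeps $\log\det\nabla F_k$ uniformly bounded, so finiteness of $\cH(k)$ propagates without additional measure-theoretic care. The only thing worth making explicit, which you gesture at but should state, is that the symmetry of $\nabla F_k$ is what licenses the ``Hadamard-type'' conclusion: a uniform lower eigenvalue bound on a nonsymmetric Jacobian does not by itself give injectivity (rotations being the canonical obstruction). With that detail pinned down, your route is a genuine simplification of the paper's proof, buying you a shorter, self-contained inductive argument at the cost of an observation the paper appears to have missed.
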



\subsection{Results for the DDPM sampler}

As for the DDPM sampler, we can only establish results for classification confidence. The proof is deferred to Appendix \ref{sec:proof-thm:DDPM-confidence-dis}. 

\begin{theorem}
    \label{thm:DDPM-confidence-dis}
    We assume the conditions of Theorem \ref{thm:DDIM-confidence-dis}. Then we have the following results: 
    \begin{enumerate}
        \item For all $k \in \{0\} \cup [K]$, it holds that $\cP(\bar X_k, y) \geq \cP(\bar Z_k, y)$. 
        \item If additionally we assume $\Delta_{\max} = \max_{k \in \{0\} \cup[K - 1] } \delta_k \leq 1 / 2$, then 
        \begin{align*}
        	\cP(\bar X_K, y) \geq \frac{\cP(\bar Z_K, y)}{\cP(\bar Z_K, y) + (1 - \cP(\bar Z_K, y)) \cdot \exp(- e^{-2T} \bar \cU)} \geq \cP(\bar Z_K, y),
        \end{align*}
        where $\bar \cU \in \RR_{\geq 0}$ is any number that satisfies 
        \begin{align*}
	& \bar\cU - e^{-T - \Delta_{\max}} \langle \bar X_0 -  \bar Z_{0}, \mu_y - \mu_{y'} \rangle \\
	 < &\, \gs e^{-\Delta / 8} (e^{-T} - e^{-3T}) \min \{\cF\big( \max_{0 \leq k \leq K} \cP(\bar Z_k, y), \bar \cU \big), \xi_w\}  (\|\mu_y - \mu_0\|_2^2 - 3 \varepsilon). \nonumber 
\end{align*}
We recall that $(\cF, \Delta, \xi_w)$ are defined in Eq.~\eqref{eq:cF-xi-Delta}. In addition, as $\gs \to \infty$, 
the convergence rate is at least $1 - O(\gs^{-e^{-T}}(\log \gs)^{2 e^{-T}})$. 
        %
        %
        %
        %
    \end{enumerate}
\end{theorem}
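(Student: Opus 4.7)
The plan is to adapt the SDE comparison analysis underlying Theorems~\ref{thm:DDPM-confidence} and \ref{thm:DDPM-confidence-quan} to the discretized setting by coupling the guided iterates $(\bar X_k)$ with the unguided iterates $(\bar Z_k)$ through identical Brownian increments $W_k$. This coupling replaces the continuous-time SDE comparison theorem with a pathwise deterministic scalar recursion that can be analyzed by induction, exactly as the ODE-based Lemma~\ref{lemma:compare-inner-product} was used in the DDIM proof.

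First I would substitute the closed-form expressions from Eq.~\eqref{eq:true-cs} into the update \eqref{eq:bar-X-k+1-X-k} and the analogous unguided recursion for $\bar Z_{k+1}$. Under Assumption~\ref{assumption:confidence} one has $\Sigma = I_d$, so $\Sigma_{T - t_k}^{-1} = I_d$, and the guidance term simplifies to $\nabla_x\log p_{T - t_k}(y \mid x) = e^{-(T - t_k)}\sum_{y'' \in \cY} q_{T - t_k}(x, y'')(\mu_y - \mu_{y''})$. Taking the inner product of both recursions with $\mu_y - \mu_{y'}$ for each $y' \in \cY$ and subtracting cancels the shared noise term $\sqrt{2\delta_k}\langle W_k,\, \mu_y - \mu_{y'}\rangle$, yielding
\[
    \langle \bar X_{k + 1} - \bar Z_{k + 1},\, \mu_y - \mu_{y'}\rangle = (1 - \delta_k)\langle \bar X_k - \bar Z_k,\, \mu_y - \mu_{y'}\rangle + 2\delta_k\gs e^{-(T - t_k)} G_k,
\]
where $G_k := \sum_{y'' \in \cY} q_{T - t_k}(\bar X_k, y'')\langle \mu_y - \mu_{y''},\, \mu_y - \mu_{y'}\rangle$. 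Expanding each inner product in $G_k$ around $\mu_0$ and invoking $|\langle \mu_y - \mu_0,\, \mu_{y''} - \mu_0\rangle| \leq \varepsilon$ together with $\varepsilon \leq \|\mu_y - \mu_0\|_2^2 / 3$ (exactly the derivation that yielded Eq.~\eqref{eq:xt-muy-muyp}) produces the pointwise bound $G_k \geq (1 - q_{T - t_k}(\bar X_k, y))(\|\mu_y - \mu_0\|_2^2 - 3\varepsilon) \geq 0$.

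For the qualitative claim~(1), I would induct on $k$: starting from $\langle \bar X_0 - \bar Z_0,\, \mu_y - \mu_{y'}\rangle \geq 0$ and using $G_k \geq 0$ together with nonnegativity of $(1 - \delta_k)$, I conclude $\langle \bar X_k - \bar Z_k,\, \mu_y - \mu_{y'}\rangle \geq 0$ for every $y' \in \cY$ and every $k$ almost surely. Monotonicity of $\cP(\cdot, y)$ in each coordinate $\langle \cdot, \mu_y - \mu_{y'}\rangle$, immediate from the softmax form \eqref{eq:def-qt}, then yields $\cP(\bar X_k, y) \geq \cP(\bar Z_k, y)$. For the quantitative claim~(2), I would unroll the recursion to obtain
\[
    \langle \bar X_K - \bar Z_K,\, \mu_y - \mu_{y'}\rangle \geq \prod_{j = 0}^{K - 1}(1 - \delta_j)\,\langle \bar X_0 - \bar Z_0,\, \mu_y - \mu_{y'}\rangle + 2\gs\sum_{k = 0}^{K - 1}\delta_k e^{-(T - t_k)}\prod_{j = k + 1}^{K - 1}(1 - \delta_j)\, G_k,
\]
and use the elementary two-sided estimates $e^{-\delta_k - \delta_k \Delta_{\max}} \leq 1 - \delta_k \leq e^{-\delta_k}$, valid for $\delta_k \leq \Delta_{\max} \leq 1/2$, to approximate the products by their continuous exponential counterparts. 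The resulting discrete sum is a Riemann-type approximation to the integral $\int_0^T e^{-2(T - s)}\diff s$ that appears in the proof of Theorem~\ref{thm:DDPM-confidence-quan}, and after combining with the initialization factor one recovers the coefficient $(e^{-T} - e^{-3T})$ stated in the theorem. Plugging the resulting lower bound on $\langle \bar X_K, \mu_y - \mu_{y'}\rangle$ into the softmax expression for $\cP(\bar X_K, y)$, and using the coupled monotonicity of Step~(1) to lower-bound $1 - q_{T - t_k}(\bar X_k, y)$ by $\cF(\max_k \cP(\bar Z_k, y),\, \bar\cU)$, produces the implicit inequality on $\bar\cU$; the asymptotic rate $1 - O(\gs^{-e^{-T}}(\log\gs)^{2 e^{-T}})$ follows by solving this inequality as $\gs \to \infty$, mirroring Theorem~\ref{thm:DDPM-confidence-quan}.

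The main obstacle is the discrete-to-continuous bookkeeping for the product $\prod_j (1 - \delta_j)$: the condition $\Delta_{\max} \leq 1/2$ is exactly what both preserves the sign of $(1 - \delta_k)$ throughout the iteration and makes the quadratic corrections in $\log(1 - \delta_k)$ small enough to recover the sharp exponential factor $(e^{-T} - e^{-3T})$ rather than a lossy $e^{-2T}$ bound. A secondary technical point is verifying that the coupling-based inequality from Step~(1) can be applied uniformly along the trajectory to bound $1 - q_{T - t_k}(\bar X_k, y)$ by $\cF(\max_k \cP(\bar Z_k, y),\, \bar\cU)$; this is where the iterated qualitative comparison feeds back into the quantitative bound, just as in the continuous case.
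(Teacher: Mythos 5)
Your proposal matches the paper's proof in essence: both exploit the shared Brownian increments $W_k$ to turn the SDE comparison into a pathwise deterministic scalar recursion for $\langle \bar X_k - \bar Z_k, \mu_y - \mu_{y'}\rangle$, establish Claim 1 by induction using the sign of $G_k$, and prove Claim 2 by unrolling the recursion with lower bounds of the form $1 - \delta_k \geq e^{-c\delta_k}$ (yours with $c = 1 + \Delta_{\max}$, the paper's with $c = 2$, both valid for $\delta_k \leq 1/2$) together with a Riemann-sum comparison to recover the $(e^{-T}-e^{-3T})$ coefficient. One small inaccuracy in your closing commentary: the $\Delta_{\max} \leq 1/2$ hypothesis is what makes the bound $1-\delta_k \geq e^{-2\delta_k}$ available and thereby \emph{introduces} (rather than avoids) the lossy $e^{-2T}$ factor that shows up in $\exp(-e^{-2T}\bar\cU)$, but this does not affect the correctness of your argument.
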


\begin{remark}
    We can eliminate the assumptions on the component centers for Theorem \ref{thm:DDIM-confidence-dis} and \ref{thm:DDPM-confidence-dis} when $|\cY| = 2$. The proof is similar to that of Theorem \ref{thm:DDPM-confidence-quan} and \ref{thm:DDPM-two-clusters}, and we skip it for the sake of simplicity. 
\end{remark}

\section{A curious example of strong guidance}
\label{sec:example-curious}

In this section, we illustrate a possible negative impact of strong guidance under discretized backward sampling in a three-component GMM. 
This discovery complements the theoretical study in the preceding sections, and reveals that strong guidance can lead to heavy unexpected distribution distortions, such as splitting one Gaussian component into two. We utilize the following GMM with mean vectors symmetric about zero and aligned, i.e.,
\begin{align}
\label{eq:gmm_negative}
\mu_{\rm neg} \overset{d}{=} \frac{1}{3} {\sf N}(-\mu, I_d) + \frac{1}{3} {\sf N}(0, I_d) + \frac{1}{3} {\sf N}(\mu, I_d).
\end{align}
Here, $\mu \neq \vec{0}_d$ is one of the cluster mean vectors in $\RR^d$, and its magnitude determines the separation between the three clusters. 
Note that the first item of Assumption~\ref{assumption:confidence} does not hold for $\mu_{\rm neg}$, when the process is guided towards the central component. Intuitively, the first item of Assumption~\ref{assumption:confidence} implies that the cluster mean vectors in the GMM should be approximately orthogonal to each other. 
However, it is clear that the mean vectors are on the same line in $\mu_{\rm neg}$. 
A formal verification of this claim can be found in Appendix~\ref{pf:negative_result}.

We study the generation of samples corresponding to the center component ${\sf N}(0, I_d)$. For simplicity, we shall focus on the discretized DDIM backward process \eqref{eq:X-k+1-X-k}. The following result demonstrates a phase transition in the behavior of $\inner{X_k}{\mu}$ as the strength of guidance gradually increases.
\begin{proposition}
\label{prop:phase_change}
Consider the Gaussian mixture model in \eqref{eq:gmm_negative}. There exist constants $\gs_0 \leq \frac{1}{\norm{\mu}_2^2 \max \delta_k}$ and $\gs_0'$ that depend on the discretization step sizes $\{\delta_k\}_{k=0}^{K-1}$, such that for any $k$ verifying $e^{-T + t_k} \geq 1/2$,

\begin{itemize}
	\item {\it (Convergent phase)} when $\gs \leq \gs_0$, $|\inner{X_{k+1}}{\mu}| < |\inner{X_k}{\mu}|$ for $\inner{X_k}{\mu} \neq 0$;

	\item {\it (Splitting phase)} when $\gs \geq \gs_0'$,
\begin{align*}
|\inner{X_{k+1}}{\mu}| > |\inner{X_k}{\mu}| & \quad \text{if}\quad |\inner{X_k}{\mu}| \in (0, a]; \\
|\inner{X_{k+1}}{\mu}| < |\inner{X_k}{\mu}| & \quad \text{if} \quad |\inner{X_k}{\mu}| > b,
\end{align*}
where $a$ and $b$ are positive and increase as the strength $\gs$ increases. 
\end{itemize}
\end{proposition}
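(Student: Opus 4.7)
The strategy is to collapse the vector recursion \eqref{eq:X-k+1-X-k} to a scalar dynamical system for $s_k := \inner{X_k}{\mu}$, and then read off the phase diagram from the shape of a single scalar contraction factor. With $\mu_y = 0$ (the central component) and $\Sigma = I_d$, the formulas in \eqref{eq:true-cs} collapse to $s_{T-t_k}(x, y) = -x$ and $\nabla_x\log c_{T-t_k}(x, y) = -e^{-(T-t_k)}(q_+ - q_-)\mu$, where $q_\pm := q_{T-t_k}(x, \pm)$ are the posterior probabilities of the two outer clusters (the central one contributes $0$, the outer ones contribute $\pm\mu$), and by symmetry $q_+ - q_-$ depends on $x$ only through $\inner{x}{\mu}$. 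Substituting into \eqref{eq:X-k+1-X-k} and projecting onto $\mu$ produces the autonomous scalar recursion
\begin{align*}
s_{k+1} = s_k\bigl(1 - \alpha_k(s_k)\bigr), \qquad \alpha_k(s) := \delta_k \gs e^{-(T-t_k)} \|\mu\|_2^2\, \frac{\phi_k(s)}{s},
\end{align*}
with
\begin{align*}
\phi_k(s) = \frac{2\sinh\bigl(e^{-(T-t_k)} s\bigr)}{2\cosh\bigl(e^{-(T-t_k)} s\bigr) + \exp\bigl(e^{-2(T-t_k)} \|\mu\|_2^2/2\bigr)}.
\end{align*}
Since $\alpha_k(s) > 0$ for $s \neq 0$, the identity $|s_{k+1}| = |s_k|\cdot|1-\alpha_k(s_k)|$ yields $|s_{k+1}| < |s_k|$ iff $\alpha_k(s_k) < 2$ and $|s_{k+1}| > |s_k|$ iff $\alpha_k(s_k) > 2$, so the proposition reduces to locating the level sets of $s \mapsto \alpha_k(s)$ relative to $2$.

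For the \emph{convergent phase}, the elementary inequality $\sinh u \le u\cosh u$ gives $\phi_k(s)/s \le e^{-(T-t_k)}$, hence $\alpha_k(s) \le \delta_k \gs e^{-2(T-t_k)} \|\mu\|_2^2 \le \delta_k \gs \|\mu\|_2^2$; the choice $\gs_0 = 1/(\|\mu\|_2^2 \max_j \delta_j)$ forces $\alpha_k(s) \le 1 < 2$, strictly for $s \neq 0$ since $\tanh u / u < 1$, giving $|s_{k+1}| < |s_k|$. For the \emph{splitting phase}, a direct Taylor expansion at $s=0$ and the obvious limit $\phi_k(s) \to \pm 1$ at $\pm \infty$ give
\begin{align*}
\alpha_k(0^+) = \frac{2\delta_k \gs e^{-2(T-t_k)} \|\mu\|_2^2}{2 + \exp(e^{-2(T-t_k)} \|\mu\|_2^2/2)}, \qquad \lim_{|s| \to \infty} \alpha_k(s) = 0.
\end{align*}
Under $e^{-(T-t_k)} \ge 1/2$ the numerator is at least $\tfrac12 \delta_k \gs \|\mu\|_2^2$ and the denominator at most $2 + \exp(\|\mu\|_2^2/2)$, so a threshold $\gs_0'$ proportional to $(2 + \exp(\|\mu\|_2^2/2))/(\|\mu\|_2^2 \min_j \delta_j)$ enforces $\alpha_k(0^+) > 2$ for every admissible $k$ once $\gs \ge \gs_0'$. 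Continuity of $\alpha_k$ on $(0, \infty)$ together with these two limits produces $a_k := \sup\{a > 0 : \alpha_k(s) > 2\ \forall s \in (0, a]\} > 0$ and $b_k := \sup\{s > 0 : \alpha_k(s) \ge 2\} < \infty$, and setting $a := \min_k a_k$, $b := \max_k b_k$ (extrema over admissible indices) yields the two sub-items. The monotone growth of $a, b$ in $\gs$ is immediate from the linear factorization $\alpha_k(s) = \gs \cdot \tilde\alpha_k(s)$: the super-level set $\{\alpha_k > 2\} = \{\tilde\alpha_k > 2/\gs\}$ is monotone non-decreasing in $\gs$, hence so are the right endpoint of its initial subinterval starting from $0$ and its overall supremum.

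The main obstacle is that I neither prove nor need to prove that $\alpha_k(s)$ is itself monotone in $s$: in principle $\{\alpha_k > 2\}$ could be disconnected, with several crossings of the level $2$. Fortunately the proposition only asks for a pair $(a, b)$ bracketing an unstable neighborhood of the origin and a stable far-field, both of which are cleanly pinned down by the two boundary limits $\alpha_k(0^+) > 2$ and $\lim_{|s|\to\infty}\alpha_k(s) = 0$ together with continuity, so the geometry of the middle regime is irrelevant. The remaining delicate bookkeeping is uniformity across admissible indices $k$ with $e^{-(T-t_k)} \ge 1/2$; this is handled by substituting index-dependent quantities with their worst-case counterparts ($\max_j \delta_j$ in $\gs_0$ and $\min_j \delta_j$ in $\gs_0'$), which is why the two thresholds depend on the whole step-size profile $\{\delta_k\}$.
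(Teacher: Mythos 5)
Your proposal is correct, and it takes a genuinely different and cleaner route than the paper. Both arguments begin by collapsing the vector recursion to a scalar one for $v_k = \inner{X_k}{\mu}$ and ultimately track the same quantity, namely when the effective step overshoots the reflection $v_{k+1} = -v_k$. Where the two diverge is in what object gets analyzed: the paper defines $h(v_k, k) = -v_{k+1} - v_k$ and works with its $v_k$-derivative, requiring an explicit (and somewhat bulky) computation of $\partial h/\partial v_k$, sign analysis of that expression, and the solution of quadratic equations in the auxiliary variable $s(v_k,k) = e^{cv_k}+e^{-cv_k}$ to locate the regions where $h$ is increasing or decreasing. You instead factorize the recursion as $v_{k+1} = v_k(1-\alpha_k(v_k))$, which makes the dichotomy $|v_{k+1}| \lessgtr |v_k|$ equivalent to $\alpha_k(v_k) \lessgtr 2$ by a pure absolute-value identity, with no derivative computation at all. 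The convergent phase then follows from the elementary bound $\sinh u \le u\cosh u$ (equivalently $\tanh u \le u$), giving $\alpha_k(s) < \delta_k\gs\|\mu\|_2^2 \le 1 < 2$ uniformly in $s$; the splitting phase follows from computing just the two boundary values $\alpha_k(0^+)$ and $\lim_{|s|\to\infty}\alpha_k(s)=0$ and invoking continuity, which — as you correctly observe — is all that is needed, since the proposition does not require characterizing the sign on the middle regime.

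What the two approaches buy differs slightly. Yours is more economical and conceptually transparent: the contraction factor $\alpha_k$ is the natural object, and the phase transition at $\alpha_k = 2$ reads off directly from $|1-\alpha|\lessgtr 1$. The paper's heavier machinery yields explicit closed-form thresholds $s_0, s_1$ and also proves a strengthening (the $(1-\gamma)$ geometric contraction in Lemma~\ref{lemma:phase_change}, first item), which your plan does not pursue and is not required by Proposition~\ref{prop:phase_change} itself. One cosmetic point you should patch: your $a_k$ is defined as a supremum, so by continuity $\alpha_k(a_k)=2$ generically, in which case the strict inequality $|v_{k+1}|>|v_k|$ fails at $s=a_k$ exactly. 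The fix is standard — take $a$ strictly interior to $(0,\min_k a_k)$, or note that the set $\{a>0:\alpha_k>2\text{ on }(0,a]\}$ is open at its right end and pick any element of it — and does not affect the substance of the argument or the monotonicity claim for $a,b$ in $\gs$, which you handle correctly via the superlevel-set argument $\{\alpha_k>2\}=\{\tilde\alpha_k>2/\gs\}$.
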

The proof is deferred to Appendix~\ref{pf:negative_result}. We discuss interpretations of Proposition~\ref{prop:phase_change} below.
\paragraph{A phase shift due to strong guidance} A large value of $|\inner{X_k}{\mu}|$ indicates a strong likelihood that $X_k$ will be classified into one of the side components ${\sf N}(\pm \mu, I_d)$ rather than the center component. Therefore, Proposition~\ref{prop:phase_change} implies that there exists a phase shift for the placement of the probability mass corresponding to the center component. With weak guidance, the center component becomes condensed. However, under too strong guidance, the center component tends to vanish as the generated samples are pushed towards side centers, as illustrated in Figure~\ref{fig:negative_effect}.

\paragraph{Influence of the discretization step size and strong guidance} We also observe in Figure~\ref{fig:negative_effect} that the phase shift phenomenon entangles with the discretization step size: Coarse discretization is prone to enter the {\it (Splitting phase)} with strong guidance. This supports our theory on the ranges of the thresholds $\gs_0$ and $\gs_0'$. In the extreme case with $\delta_k = 0$ for all $k$, that is, we are using the exact continuous-time backward process for generating samples, only the {\it (Convergent phase)} stays.

On the other hand, with strong guidance, the center component is split into two symmetric components. The separation between the two symmetric components increases as guidance $\gs$ increases. This also corroborates our theory on the values of $a$ and $b$.

We remark that the convergence in the {\it (Convergent phase)} can be geometrically fast as shown in an improved result in Lemma~\ref{lemma:phase_change}. Under a discretized DDPM sampling scheme, we can also observe the phase shift subject to strong guidance as shown in Figure~\ref{fig:3d-align-ddpm}.

\begin{figure}[h]
\centering
\includegraphics[width = 0.65\textwidth]{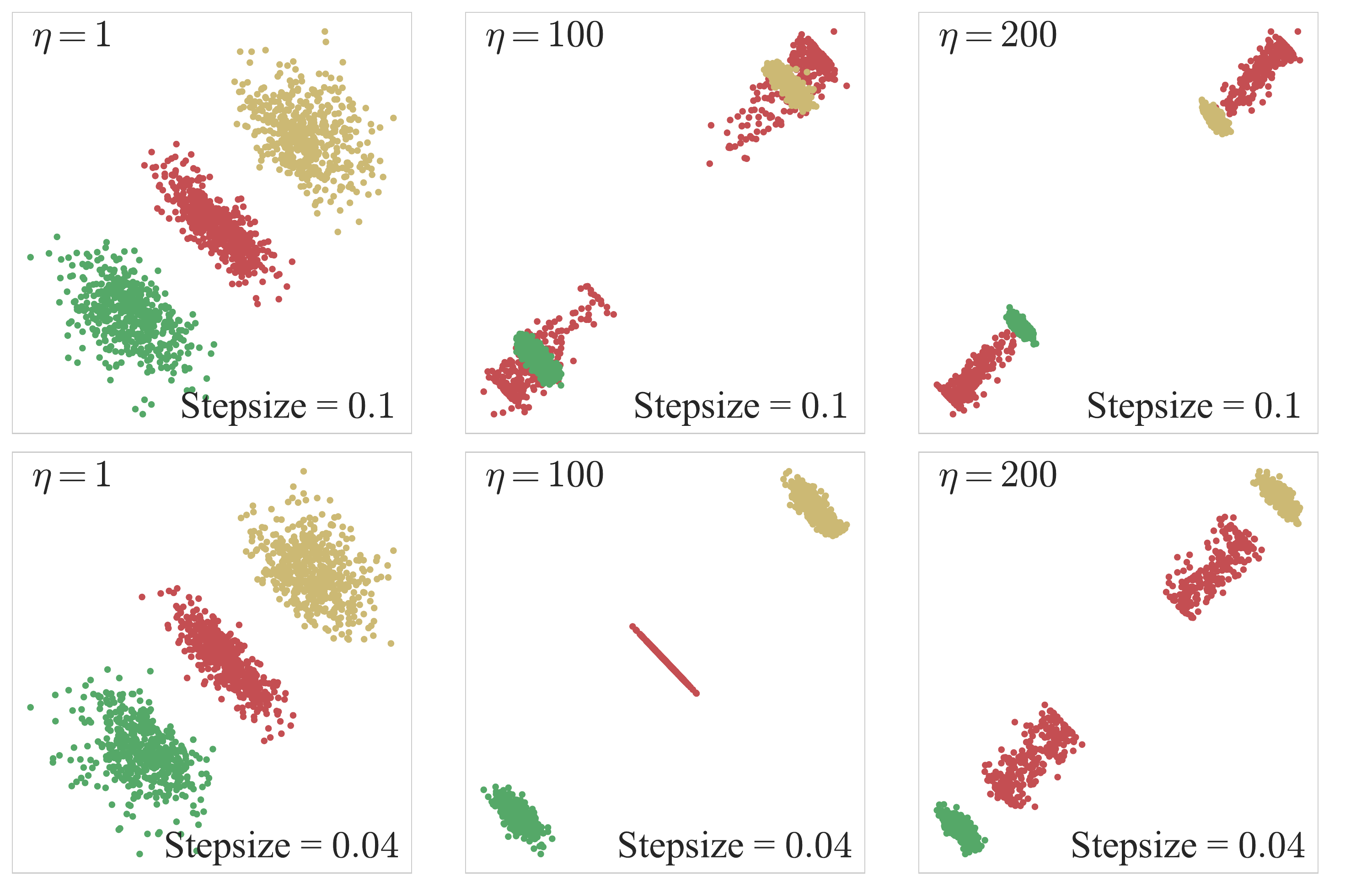}
\caption{Illustration of the negative effect of large guidance. In this plot, we set $\mu = [2, 2]^\top$ in $\mu_{\rm neg}$, and increase the guidance strength $\gs$ from left to right. The upper row uses a relatively large discretization step size ($\delta_k = 0.1$ for all $k \in \{0\} \cup [K - 1]$). Under strong guidance, the center component splits into two clusters at an earlier stage. The bottom row uses a much smaller discretization step size ($\delta_k = 0.04$ for all $k \in \{0\} \cup [K - 1]$); the center component then splits only with a much larger guidance strength.}
\label{fig:negative_effect}
\end{figure}

\section{Conclusion and discussion}

In this paper, we establish the theoretical foundation 
for diffusion guidance in the context of sampling from Gaussian mixture models with shared covariance matrices. 
Under a set of mild regularity conditions, we show that guidance increases the prediction confidence along every realized path, while decreasing the overall distribution diversity.
Our analysis is based on ODE and SDE comparison theorems, along with the Fokker-Planck equation that depicts the evolution of probability density functions. 

We list here several interesting future directions that deserve further investigation. First, the quantitative lower bounds we present in the paper might not be tight, and a more careful examination of the guidance effect is worthy of future studies. Secondly, due to technical reasons, we currently lack a characterization of the reduction in diversity that arises from guidance for the DDPM sampler. It is of great interest to derive similar guarantees for the DDPM sampler. 
Finally, we expect our framework to go beyond sampling from GMMs and we leave this extension to future work.

\subsection*{Acknowlegements}

Y.~Wei is supported in part by the NSF grants DMS-2147546/2015447, CAREER award DMS-2143215, CCF-2106778, and the Google Research Scholar Award.




%
%
%
%

\bibliographystyle{plainnat}
\bibliography{ref.bib,reference-consistency}

\appendix

\section{Proofs related to confidence enhancement}
\label{sec:increase-confidence}

This section contains proofs pertinent to results on guidance improving prediction confidence.
We first prove Eq.~\eqref{eq:xt-muy-muyp}. Note that
\begin{align}
\label{eq:diff-x-mu-mu-equality}
\begin{split}
    & \frac{\diff}{\diff t}  \langle x_t, \mu_y - \mu_{y'} \rangle \\
    = & e^{-(T - t)} \|\mu_y\|_2^2 + \gs e^{-(T - t)} \|\mu_y - \mu_0\|_2^2  - \gs e^{-(T - t)} \sum_{y'' \in \cY} q_{T - t}(x_t, y'') \langle \mu_y - \mu_0, \mu_{y''} - \mu_0 \rangle \\
    & - e^{-(T - t)} \langle \mu_y, \mu_{y'} \rangle - \gs e^{-(T - t)} \langle \mu_y - \mu_0, \mu_{y'} - \mu_0 \rangle + \gs e^{-(T - t)} \sum_{y'' \in \cY} q_{T - t}(x_t, y'') \langle \mu_{y'} - \mu_0, \mu_{y''} - \mu_0 \rangle \\
    = & e^{-(T - t)} \|\mu_y\|_2^2 - e^{-(T - t)} \langle \mu_y, \mu_{y'} \rangle + \gs e^{-(T - t)} (1 - q_{T - t}(x_t, y)) \|\mu_y - \mu_0\|_2^2 \\
    & + \gs e^{-(T - t)} q_{T - t}(x_t, y') \|\mu_{y'} - \mu_0\|_2^2 + \cE_t,  
\end{split}
\end{align}
where 
\begin{align*}
    \cE_t = & - \gs e^{-(T - t)} \sum_{y'' \in \cY \backslash \{y\}} q_{T - t}(x_t, y'') \langle \mu_y - \mu_0, \mu_{y''} - \mu_0 \rangle - \gs  e^{-(T - t)}(1 - q_{T - t}(x_t, y)) \langle \mu_y - \mu_0, \mu_{y'} - \mu_0 \rangle
 \\
 & + \gs e^{-(T - t)} \sum_{y'' \in \cY \backslash \{y, y'\}} q_{T - t}(x_t, y'') \langle \mu_{y'} - \mu_0, \mu_{y''} - \mu_0 \rangle. 
\end{align*}
By triangle inequality and Assumption \ref{assumption:confidence} it holds that $|\cE_t| \leq 3\gs e^{-(T - t)} (1 - q_{T - t}(x_t, y)) \varepsilon$. This completes the proof of Eq.~\eqref{eq:xt-muy-muyp}. 


\subsection{Proof of Theorem \ref{thm:DDIM-confidence}}
\label{sec:proof-thm:DDIM-confidence}

Observe that
\begin{align*}
    & \cP(x_t, y) = \frac{w_y  }{w_y + \sum_{y' \in \cY, y' \neq y} w_{y'} \exp \left( \langle x_t, \mu_{y'} - \mu_y \rangle - \|\mu_{y'}\|_2^2 / 2 + \|\mu_y\|_2^2 / 2 \right)}, \\
    & \cP(z_t, y) = \frac{w_y  }{w_y + \sum_{y' \in \cY, y' \neq y} w_{y'} \exp \left( \langle z_t, \mu_{y'} - \mu_y \rangle - \|\mu_{y'}\|_2^2 / 2 + \|\mu_y\|_2^2 / 2 \right)}. 
\end{align*}
According to Lemma \ref{lemma:compare-inner-product}, we have $\langle x_t, \mu_y - \mu_{y'} \rangle \geq \langle z_t, \mu_y - \mu_{y'} \rangle$ for all $y' \in \cY \backslash \{y\}$, hence $\cP(x_t, y) \geq \cP(z_t, y)$ for all $t \in [0, T]$. This completes the proof of Theorem \ref{thm:DDIM-confidence}.

\subsection{Proof of Theorem \ref{thm:DDIM-confidence-quan}}
\label{sec:proof-thm:DDIM-confidence-quan}

\subsubsection*{Proof of the first result}

The idea is to first establish an upper bound for $q_{T - t}(x_t, y)$, then in turn use it to lower bound the effect of guidance. 
Notice that
\begin{align}
\label{eq:upper-bound-tilde-q}
\begin{split}
    q_{T - t}(x_t, y) = & \frac{w_y}{w_y + \sum_{y' \neq y} w_{y'} \exp \left( e^{-(T - t)} \langle x_t, \mu_{y'} - \mu_y\rangle - e^{-2(T - t)} (\|\mu_{y'}\|_2^2 - \|\mu_y\|_2^2) / 2 \right) } \\
    = & \frac{\tilde{q}_{T - t}(x_t, y)}{\tilde{q}_{T - t}(x_t, y) + (1 - \tilde{q}_{T - t}(x_t, y)) \cdot \exp \left( - (e^{-2(T - t)} - e^{-(T - t)})(\|\mu_{y'}\|_2^2 - \|\mu_y\|_2^2) / 2 \right)} \\
    \leq & \frac{\tilde{q}_{T - t}(x_t, y)}{\tilde{q}_{T - t}(x_t, y) + (1 - \tilde{q}_{T - t}(x_t, y)) \cdot \exp\left( -\Delta / 8 \right)}, 
\end{split}
\end{align}
where
\begin{align}
\label{eq:def-tilde-q}
   \tilde q_{T - t}(x_t, y) = \frac{w_y}{w_y + \sum_{y' \neq y} w_{y'} \exp \left( e^{-(T - t)} \langle x_t, \mu_{y'} - \mu_y\rangle - e^{-(T - t)} (\|\mu_{y'}\|_2^2 - \|\mu_y\|_2^2) / 2 \right)}.  
\end{align}
If $\exp(\langle x_t, \mu_y \rangle - \|\mu_y\|_2^2 / 2) = \max_{y' \in \cY} \exp(\langle x_t, \mu_{y'} \rangle - \|\mu_{y'}\|_2^2 / 2)$, then one can verify that
\begin{align*}
	\tilde q_{T - t}(x_t, y) = \frac{w_y \exp \left( e^{-(T - t)} \langle x_t, \mu_y \rangle - e^{-2(T - t)} \|\mu_y\|_2^2 / 2 \right)}{\sum_{y' \in \cY} w_{y'} \exp \left( e^{-(T - t)} \langle x_t, \mu_{y'} \rangle - e^{-2(T - t)} \|\mu_{y'}\|_2^2 / 2 \right)} \leq \cP(x_t, y)
\end{align*}
Plugging this upper bound back into Eq.~\eqref{eq:upper-bound-tilde-q}, we get
%
\begin{align}
\label{eq:qT-t-upper1}
\begin{split}
    q_{T - t}(x_t, y) \leq & \frac{\cP(x_t, y)}{\cP(x_t, y) + (1 - \cP(x_t, y)) \cdot \exp(-\Delta / 8)}. 
\end{split}
\end{align}
On the other hand, if $\exp(\langle x_t, \mu_y\rangle - \|\mu_y\|_2^2 / 2) \neq \max_{y' \in \cY} \exp(\langle x_t, \mu_{y'}\rangle - \|\mu_{y'}\|_2^2 / 2)$, then one can verify that $\tilde{q}_{T - t}(x_t, y) \leq w_y / (w_y + \min_{y' \neq y} w_{y'})$, which together with Eq.~\eqref{eq:upper-bound-tilde-q} further implies that 
%
\begin{align}
\label{eq:qT-t-upper2}
    q_{T - t}(x_t, y) \leq \frac{w_y}{w_y + \min_{y' \neq y} w_{y'} \exp (-\Delta / 8)}. 
\end{align}
Putting together Eq.~\eqref{eq:qT-t-upper1} and \eqref{eq:qT-t-upper2}, we conclude that 
\begin{align}
\label{eq:q-T-t-max-G-G}
	q_{T - t}(x_t, y) \leq \max\left\{ G(\cP(x_t, y)),\, G(w_y/(w_y + \min_{y' \neq y} w_{y'})) \right\}, 
\end{align}
%
where $G(x) := x / (x + (1 - x) \cdot \exp(-\Delta / 8))$ is a function that maps $[0, 1]$ to $[0, 1]$. 
Taking the derivative of $G$, we see that for all $x \in [0, 1]$, 
\begin{align}
\label{eq:Gp}
	G'(x) = \frac{\exp(-\Delta / 8)}{ [x + (1 - x) \cdot \exp(-\Delta / 8)]^2} \in \big[\exp(-\Delta / 8), \exp(\Delta / 8) \big]. 
\end{align}
Let $\xi_w := 1 - w_y / (w_y + \min_{y' \neq y} w_{y'} ) > 0$.
Note that $G(1) = 1$, hence by Eq.~\eqref{eq:Gp} we obtain that $1 - G(\cP(x_t, y)) \geq \exp(-\Delta / 8) \cdot (1 - \cP(x_t, y))$ and $1 - G(1 - \xi_w) \geq \exp(-\Delta / 8) \cdot \xi_w$.
Substituting these bounds as well as the upper bound of Eq.~\eqref{eq:q-T-t-max-G-G} into Eq.~\eqref{eq:diff-x-mu-mu}, we are able to derive a lower bound for $\diff \langle x_t, \mu_y - \mu_{y'} \rangle / \diff t$: 
\begin{align}
\label{eq:last-step-thm-2.5}
\begin{split}
    & \frac{\diff}{\diff t} \langle x_t, \mu_y - \mu_{y'} \rangle \\
    \geq & e^{-(T - t)} \Big( \|\mu_y\|_2^2 - \langle \mu_{y}, \mu_{y'} \rangle + \gs e^{-\Delta / 8}  (\|\mu_y  - \mu_0\|_2^2 - 3 \varepsilon) \cdot \min \{ 1 - \cP(x_t, y),\, \xi_w \} \Big).
\end{split}
\end{align}
Equivalently, we can write Eq.~\eqref{eq:last-step-thm-2.5} as 
\begin{align*}
	\frac{\diff}{\diff t} \langle x_t - z_t, \mu_y - \mu_{y'} \rangle \geq e^{-(T - t)} \gs e^{-\Delta / 8}  (\|\mu_y  - \mu_0\|_2^2 - 3 \varepsilon) \cdot \min \{ 1 - \cP(x_t, y),\, \xi_w \}. 
\end{align*}
Now suppose $\langle x_t, \mu_y - \mu_{y'}  \rangle - \langle z_t, \mu_y - \mu_{y'} \rangle \in [0, \cU]$ for all $t \in [0, T]$ and $y' \in \cY$. Using this bound, we get 
\begin{align}
\label{eq:p-F-max}
\begin{split}
	\cP(x_t, y) \leq & \frac{\cP(z_t, y)}{\cP(z_t, y) + (1 - \cP(z_t, y)) \cdot \exp(-\cU)} \\
	 \leq & \frac{\max_{0 \leq t \leq T} \cP(z_t, y)}{\max_{0 \leq t \leq T} \cP(z_t, y) + (1 - \max_{0 \leq t \leq T} \cP(z_t, y)) \cdot \exp(-\cU)} \\
	  = & 1 - \cF\big( \max_{0 \leq t \leq T} \cP(z_t, y), \cU \big),
\end{split}  
\end{align}
where we let $\cF(p, u) = (1 - p) e^{-u} / (p + (1 - p)e^{-u})$. 
Therefore, in order for such a $\cU \in \RR_{\geq 0}$ to serve as a valid upper bound, it is necessary to have
\begin{align}
\label{eq:cU-condition}
	\cU \geq \langle x_0 - z_0, \mu_y - \mu_{y'} \rangle + (1 - e^{-T}) \cdot \gs e^{-\Delta / 8} (\|\mu_y - \mu_0\|_2^2 - 3 \varepsilon) \cdot \min \Big\{\cF\big( \max_{0 \leq t \leq T} \cP(z_t, y), \cU \big),\, \xi_w \Big\}.
\end{align}
For any $\cU \in \RR_{\geq 0}$ that does not satisfy Eq.~\eqref{eq:cU-condition}, we know that $\langle x_T - z_T, \mu_y - \mu_{y'} \rangle \geq \cU$, hence 
\begin{align*}
	\cP(x_T, y) \geq \frac{\cP(z_T, y)}{\cP(z_T, y) + (1 - \cP(z_T, y) )\cdot \exp(-\cU)}. 
\end{align*}
This completes the proof of the first result of the theorem. 

\subsubsection*{Proof of the second result}

We separately discuss two cases, depending on whether $\langle \mu_y, \mu_y - \mu_{y'}\rangle$ is non-negative for all $y' \in \cY$. 

If $\langle \mu_y, \mu_y - \mu_{y'}\rangle \geq 0$ for all $y' \in \cY$, then by Eq.~\eqref{eq:diff-x-mu-mu} we know that $t \mapsto \langle x_t, \mu_y - \mu_{y'} \rangle$ as a function of $t$ is non-decreasing, which further implies that $\cP(x_{t_1}, y) \geq \cP(x_{t_2}, y)$ for all $T \geq t_1 \geq t_2 \geq 0$. We denote by $p(\eta)$ an upper bound for $\cP(x_T, y)$ that implicitly depends on $x_0$. Following the analysis we have established to prove the first point (in particular, Eq.~\eqref{eq:lower-bound-Gpe}), we have 
\begin{align*}
	q_{T - t}(x_t, y) \leq \max \left\{ G(p(\eta)), G(1 - \xi_w) \right\}. 
\end{align*}
Putting together the above upper bound and Eq.~\eqref{eq:diff-x-mu-mu}, we obtain that 
\begin{align}
\label{eq:lower-bound-Gpe}
\begin{split}
	& \langle x_T, \mu_y - \mu_{y'} \rangle - \langle x_0, \mu_y - \mu_{y'} \rangle \\
	\geq & (1 - e^{-T}) \cdot \left( \langle \mu_y, \mu_y - \mu_{y'} \rangle + \eta \min \{1 - G(p(\eta)), 1 - G(1 - \xi_w)\}(\|\mu_y - \mu_0\|_2^2 - 3 \varepsilon) \right),
\end{split} 
\end{align}
where we recall that $\xi_w = 1 - w_y / (w_y + \min_{y' \neq y} w_{y'} )$ and  $G(x) = x / (x + (1 - x) \cdot \exp(-\Delta / 8))$. 
%
%
%
%
Note that $G(1) = 1$, hence by Eq.~\eqref{eq:Gp} we obtain that $1 - G(p(\eta)) \geq \exp(-\Delta / 8) \cdot (1 - p(\eta))$ and $1 - G(1 - \xi_w) \geq \exp(-\Delta / 8) \xi_w$. Plugging this lower bound into Eq.~\eqref{eq:lower-bound-Gpe}, we get
\begin{align*}
	\cP(x_T, y) \geq \frac{\cP(x_0, y)}{\cP(x_0, y) + (1 - \cP(x_0, y)) \cdot \exp\left( - C_0 - \gs C_1 \min\{ 1 - p(\eta), \xi_w\} \right)}, 
\end{align*}
where $C_0 = \min_{y' \in \cY} (1 - e^{-T}) \langle \mu_y, \mu_y - \mu_{y'} \rangle$ and $C_1 = e^{-\Delta / 8} (\|\mu_y - \mu_0\|_2^2 - 3 \varepsilon)$. 
By definition we know $p(\eta) \geq \cP(x_T, y)$, hence 
\begin{align}
\label{eq:cPx0ypeta}
	\cP(x_0, y) (1 - p(\eta)) \leq (1 - \cP(x_0, y)) \cdot \exp \left( -C_0 - \eta C_1 \min \{1 - p(\eta), \xi_w\} \right). 
\end{align}
When $\eta > 1$, we can write 
\begin{align}
\label{eq:1-p-eta}
	1 - p(\eta) = \frac{-C_0 - \mbox{logit}(\cP(x_0, y)) + \delta_{\eta} \log \eta}{\eta C_1}, 
\end{align}
where $\mbox{logit}(p) = \log (p / (1 - p))$, and $\delta_{\eta} > 0$.
Plugging Eq.~\eqref{eq:1-p-eta} into Eq.~\eqref{eq:cPx0ypeta}, we see that at least one of the following two inequalities hold:
\begin{align*}
	& \frac{-C_0 - \mbox{logit}(\cP(x_0, y)) + \delta_{\eta} \log \eta}{\eta C_1} \leq \frac{1}{\eta^{\delta_{\eta}}}, \\
	& \frac{-C_0 - \mbox{logit}(\cP(x_0, y)) + \delta_{\eta} \log \eta}{\eta C_1} \leq \frac{1 - \cP(x_0, y)}{\cP(x_0, y)} \cdot \exp(-C_0 - \eta C_1 \xi_w). 
\end{align*}
Inspecting the above formulas, we see that 
for a sufficiently large $\eta$, it holds that $\delta_{\eta} < 1$, which implies that 
\begin{align}
\label{eq:first-lower-bound-p-eta}
	p(\eta) \geq 1 - \frac{-C_0 - \mbox{logit} (\cP(x_0, y)) + \log \eta}{\eta C_1}. 
\end{align}
On the other hand, if not all $\langle \mu_y, \mu_y - \mu_{y'} \rangle$ are non-negative, then we denote the smallest one by $-V_s = \langle \mu_y, \mu_y - \mu_{y_s} \rangle < 0$ for some $y_s \in \cY$. 
We shall choose $\eta$ that is large enough such that $V_s < \gs e^{-\Delta / 8} \xi_w (\|\mu_y - \mu_0\|_2^2 - 3 \varepsilon)$. In this case, for all $y' \in \cY$, it holds that
\begin{align*}
	\frac{\diff}{\diff t} \langle x_t, \mu_y - \mu_{y'} \rangle \geq &  e^{-T + t}(-V_s + \eta \min \{1 - G(\cP(x_t, y)), 1 - G(1 - \xi_w)\} (\|\mu_y - \mu_0\|_2^2 - 3 \varepsilon)) \\
	\geq & e^{-T + t}(-V_s + \eta e^{-\Delta / 8} \min \{ 1 - \cP(x_t, y), \, \xi_w \}) (\|\mu_y - \mu_0\|_2^2 - 3 \varepsilon).
\end{align*}
Therefore, if $1 - \cP(x_t, y) \geq V_s e^{\Delta / 8} \eta^{-1} (\|\mu_y - \mu_0\|_2^2 - 3 \varepsilon)^{-1}$ for all $t \in [0, T]$, then $\langle x_t, \mu_y - \mu_{y'} \rangle$ as a function of $t$ is non-decreasing on $[0, T]$, hence $\cP(x_t, y)$ as a function of $t$ is also non-decreasing on $[0, T]$. 
Following exactly the same route before, we are able to derive the lower bound as stated in Eq.~\eqref{eq:first-lower-bound-p-eta}.
On the other hand, if $1 - \cP(x_t, y) < V_s e^{\Delta / 8} \eta^{-1} (\|\mu_y - \mu_0\|_2^2 - 3 \varepsilon)^{-1}$ at some time point $t = t_{\ast}$, then for all $t \in [t_{\ast}, T]$, it is not hard to see that 
\begin{align*}
	1 - \cP(x_t, y) \leq \frac{V_s e^{\Delta / 8}}{\eta (\|\mu_y - \mu_0\|_2^2 - 3 \varepsilon)}. 
\end{align*} 
Putting together the above results, we conclude that for a sufficiently large $\eta$ we always have Eq.~\eqref{eq:first-lower-bound-p-eta}. The proof is complete. 

\subsection{Proof of Theorem \ref{thm:DDPM-confidence}}
\label{sec:proof-thm:DDPM-confidence}

We initiate the proof by presenting an SDE comparison theorem.  
Lemma \ref{lemma:SDE-comparison-theorem} is adapted from Theorem 3.1 of \cite{zhu2010comparison}. 

\begin{lemma}[SDE comparison theorem]
\label{lemma:SDE-comparison-theorem}
	Consider the following two $m$-dimensional SDEs defined on $[0, T]$:
	\begin{align*}
		& X_t^1 = x^1 + \int_0^t b_1 (s, X_s^1) \diff s + \int_0^t \sigma_1(s, X_s^1) \diff W_s, \\
		& X_t^2 = x^2 + \int_0^t b_2(s, X_s^2) \diff s + \int_0^t \sigma_2(s, X_s^2) \diff W_s. 
	\end{align*}
	We assume the following conditions:
	\begin{enumerate}
		\item $b(t, x), \sigma(t, x)$ are continuous in $(t, x)$, 
		\item There exists a sufficiently large constant $\mu > 0$, such that for all $x, x' \in \RR^m$ and $t \in [0, T]$, it holds that 
		\begin{align*}
			& \|b(t, x) - b(t, x')\|_2 + \|\sigma(t, x) - \sigma(t, x')\|_2 \leq \mu \|x - x'\|_2, \\
			& \|b(t, x)\|_2 + \|\sigma(t, x)\|_2 \leq \mu(1 + \|x\|_2). 
		\end{align*}
	\end{enumerate}
	Then the following are equivalent:
	\begin{enumerate}
		\item[(i)] For any $t \in [0, T]$ and $x^1, x^2 \in \RR^m$ such that $x^1 \geq x^2$, almost surely we have $X_t^1 \geq X_t^2$ for all $t \in [0, T]$. 
		\item[(ii)] $\sigma^1 \equiv \sigma^2$, and for any $t \in [0, T]$, $k = 1, 2, \cdots, m$, 
		\begin{align*}
			\left\{ \begin{array}{l}
				(a) \,\,\,\,\,\sigma_k^1 \mbox{ depends only on } x_k, \\
				(b) \,\,\,\,\,\mbox{for all }x', \delta^k x \in \RR^m, \mbox{ such that }\delta^k x \geq 0, (\delta^k x)_k = 0, \\
				\,\,\,\,\,\,\,\,\,\,\,\,\,\,\,\,\,\,\,\,b_k^1(t, \delta^k x + x') \geq b_k^2(t, x'). 
			\end{array} \right.
		\end{align*}
	\end{enumerate}
\end{lemma}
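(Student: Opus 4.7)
The plan is to prove the equivalence by showing (ii)$\Rightarrow$(i) and (i)$\Rightarrow$(ii) separately, with the former relying on a Yamada--Watanabe style smoothing argument applied componentwise and the latter proceeding by contradiction through carefully chosen test coefficients.

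For (ii)$\Rightarrow$(i), I would fix a coordinate $k$ and study $Y_t^k := X_t^{1,k} - X_t^{2,k}$. The goal is to show $\mathbb{E}[(Y_t^k)^-] = 0$ for every $t \in [0, T]$. The key input is that by (a), the $k$-th row of the diffusion coefficient $\sigma^1 = \sigma^2$ depends only on $x_k$, so the diffusion part of $\diff Y_t^k$ equals $(\sigma_k(t, X_t^{1,k}) - \sigma_k(t, X_t^{2,k})) \diff W_t$, which in particular vanishes whenever $Y_t^k = 0$. I would introduce the standard Yamada--Watanabe approximating sequence $\phi_n \in C^2(\mathbb{R})$ with $\phi_n(y) \to y^-$ and $y \phi_n''(y) \to 0$, apply Itô's formula to $\phi_n(Y_t^k)$, and take expectations. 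The martingale term vanishes, the Lipschitz property of $\sigma_k$ together with the Yamada--Watanabe construction kills the quadratic variation term in the limit, and for the drift term I would argue as follows: on the event $\{Y_t^k < 0\}$ we use the trivial bound from Lipschitzness of $b^2_k$ in $x_k$; on $\{Y_t^k \geq 0\}$, condition (b) applied with $x' = X_t^2$ and $\delta^k x = (Y_t^1,\ldots,Y_t^{k-1},0,Y_t^{k+1},\ldots,Y_t^m)^+$ plus an induction hypothesis across coordinates gives $b_k^1(t, X_t^1) \geq b_k^2(t, X_t^2)$. To make the induction legitimate I would run the Yamada--Watanabe argument simultaneously in all coordinates (i.e., study $\sum_k \mathbb{E}[\phi_n(Y_t^k)]$) so that no single coordinate is given preferential treatment. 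Gronwall then closes the loop and delivers $(Y_t^k)^- \equiv 0$ a.s. for every $k$ and every $t$.

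For (i)$\Rightarrow$(ii), I would prove necessity of each of the three sub-conditions by contradiction. To force $\sigma^1 \equiv \sigma^2$, I would take $x^1 = x^2$ and note that pathwise comparison combined with the reversed comparison (using the same $W$ but swapped SDEs) forces $X_t^1 \equiv X_t^2$, so the quadratic variations $\int_0^t \|\sigma_1 - \sigma_2\|^2 \diff s$ must vanish, hence $\sigma^1 = \sigma^2 =: \sigma$. To force dependence of $\sigma_k$ only on $x_k$, suppose $\sigma_k$ depended nontrivially on some $x_j$ with $j \neq k$. Then by choosing $x^1 \geq x^2$ with equality in the $k$-th coordinate but strict inequality in the $j$-th, a short-time expansion of $X_t^{1,k} - X_t^{2,k}$ shows the leading stochastic term is $(\sigma_k(x^1) - \sigma_k(x^2)) W_t$, which is a nontrivial Brownian increment with positive probability of being negative, violating (i). Finally, to force condition (b), I would take $x^1 = \delta^k x + x'$ and $x^2 = x'$ with $(\delta^k x)_k = 0$; then $X_t^{1,k} - X_t^{2,k}$ starts at $0$, and its short-time drift is $b_k^1(0,\delta^k x + x') - b_k^2(0, x')$, so if this were negative at some $t = 0$ instant, $Y_t^k$ would become strictly negative with positive probability (again using the localised nature of the $k$-th diffusion from (a) to ensure the noise does not dominate), contradicting (i).

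The main obstacle will be the componentwise induction in the Yamada--Watanabe step of (ii)$\Rightarrow$(i): condition (b) only compares $b_k^1$ at $\delta^k x + x'$ to $b_k^2$ at $x'$ when $\delta^k x \geq 0$, so I cannot appeal to it for a single coordinate $k$ without already knowing the comparison for the other coordinates. Handling all $m$ coordinates simultaneously via a combined Lyapunov functional $\sum_k \phi_n(Y_t^k)$ circumvents this circularity, but requires care because the cross terms in the drift comparison need to be controlled through the global Lipschitz hypothesis. A secondary subtlety in (i)$\Rightarrow$(ii) is that the conditions must hold for the specific $(b_1,\sigma_1)$ and $(b_2,\sigma_2)$ given, not for arbitrary coefficients; the contradiction arguments therefore need to exploit the freedom in choosing $x^1, x^2$ (and in small-time asymptotics) rather than perturbing the coefficients themselves.
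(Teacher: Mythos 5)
The paper does not actually prove this lemma: it is imported wholesale, with the remark that it is ``adapted from Theorem 3.1 of \cite{zhu2010comparison}.'' Your attempt is therefore being measured against the literature proof rather than anything in the paper; your overall architecture (a componentwise Yamada--Watanabe/Gronwall argument for (ii)$\Rightarrow$(i), and small-time contradiction arguments for (i)$\Rightarrow$(ii)) is indeed the standard route for such necessary-and-sufficient comparison theorems. However, as written the sketch has two genuine gaps.

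First, in (i)$\Rightarrow$(ii), the step establishing $\sigma^1 \equiv \sigma^2$ is invalid. Statement (i) is not symmetric in the two systems: it only ever places the solution of system~1 above that of system~2, so taking $x^1 = x^2$ gives $X^1_t \geq X^2_t$ but there is no ``reversed comparison with swapped SDEs'' available, and you cannot conclude $X^1 \equiv X^2$. Indeed, if that inference were legitimate it would equally force $b^1 \equiv b^2$, contradicting the theorem itself (condition (ii) permits $b^1_k > b^2_k$, in which case $X^1 > X^2$ strictly after time $0$ even from equal starts). The correct necessity argument for $\sigma^1 \equiv \sigma^2$ is the same small-time fluctuation argument you invoke later: from equal starts, $X^{1,k}_t - X^{2,k}_t$ is a bounded drift of order $t$ plus a continuous martingale whose quadratic variation is of order $|\sigma^1_k - \sigma^2_k|^2\, t$ near $0$, so by a time-change/law-of-the-iterated-logarithm argument it dips below zero with positive probability unless $\sigma^1_k = \sigma^2_k$.

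Second, in (ii)$\Rightarrow$(i) your drift estimate is attached to the wrong events. Since $\phi_n'$ (approximating $-\mathbbm{1}\{y<0\}$) vanishes on $\{Y^k_t \geq 0\}$, the only event that matters is $\{Y^k_t < 0\}$, and there a ``trivial Lipschitz bound'' is not enough: it yields an error of size $\sum_j |Y^j_t|$, whose positive parts are not controlled by the Gronwall functional $\sum_k \mathbb{E}[(Y^k_t)^-]$, so the loop does not close. What is needed on $\{Y^k_t < 0\}$ is precisely the combination you mention but misplace: compare $b^1_k$ at the auxiliary point $Z$ with $Z_j = \max(X^{1,j}_t, X^{2,j}_t)$ for $j \neq k$ and $Z_k = X^{2,k}_t$ (so that (b) gives $b^1_k(t,Z) \geq b^2_k(t,X^2_t)$), and then use Lipschitz continuity of $b^1_k$ to bound $\|X^1_t - Z\|$ by $\sum_{j\neq k}(Y^j_t)^- + (Y^k_t)^-$, which does close the Gronwall argument and removes any need for an induction across coordinates. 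With these two repairs your plan matches the standard proof; as written, both steps would fail.
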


We then prove the theorem. To this end, we establish the subsequent lemma. Note that Theorem \ref{thm:DDPM-confidence} follows straightforwardly from Lemma \ref{lemma:SDE-comparison-confidence}. 
\begin{lemma}
\label{lemma:SDE-comparison-confidence}
    We assume the conditions of Theorem \ref{thm:DDPM-confidence}. Then for all $y' \in \cY$, almost surely we have $\langle x_t, \mu_y - \mu_{y'} \rangle \geq \langle z_t, \mu_y - \mu_{y'}\rangle$ for all $t \in [0, T]$. 
\end{lemma}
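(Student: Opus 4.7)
The plan is to adapt the DDIM comparison from Section~\ref{sec:effect-DDIM} to the stochastic setting by exploiting the shared Brownian driver. Under Assumption~\ref{assumption:confidence} we have $\Sigma = I_d$, so $\Sigma_{T-t} = I_d$ for every $t$, which means both the guided SDE Eq.~\eqref{eq:dxt-bar-long} and the unguided SDE in \eqref{eq:unguided} carry the same constant diffusion coefficient $\sqrt{2}\,I_d$. Since they are driven by the \emph{same} Brownian motion $(B_t)_{0\leq t\leq T}$, the vector $\bar x_t - \bar z_t$ has no martingale part, and consequently the scalar
\[
D_t^{y'} := \langle \bar x_t - \bar z_t,\ \mu_y - \mu_{y'}\rangle
\]
is almost surely a $C^1$ function of $t$ satisfying a pathwise ODE. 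This is the DDPM counterpart of the reduction that made the ODE comparison theorem applicable in the DDIM argument, and it is essentially what Lemma~\ref{lemma:SDE-comparison-theorem} gives when applied to the joint system after one checks that the common diffusion $\sqrt{2}I_d$ trivially satisfies condition (ii)(a).

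I would then repeat the manipulation that led to Eq.~\eqref{eq:xt-muy-muyp}, picking up an extra factor of $2$ from the DDPM drift, to obtain almost surely
\[
\frac{\diff}{\diff t}\langle \bar x_t, \mu_y - \mu_{y'}\rangle \geq -\langle \bar x_t, \mu_y - \mu_{y'}\rangle + 2 e^{-(T-t)}\bigl(\|\mu_y\|_2^2 - \langle \mu_y, \mu_{y'}\rangle\bigr) + 2\gs e^{-(T-t)}\bigl(1 - q_{T-t}(\bar x_t, y)\bigr)\bigl(\|\mu_y - \mu_0\|_2^2 - 3\varepsilon\bigr),
\]
using Assumption~\ref{assumption:confidence} to absorb every cross term of the form $\langle \mu_{y''} - \mu_0, \mu_{y'''} - \mu_0\rangle$ into the $3\varepsilon$ slack, exactly as in the derivation of Eq.~\eqref{eq:diff-x-mu-mu}. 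The process $\bar z_t$ satisfies the same identity with $\gs = 0$ and no remainder, so subtracting yields
\[
\frac{\diff}{\diff t}D_t^{y'} \geq -D_t^{y'} + 2\gs e^{-(T-t)}\bigl(1 - q_{T-t}(\bar x_t, y)\bigr)\bigl(\|\mu_y - \mu_0\|_2^2 - 3\varepsilon\bigr).
\]

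Under Assumption~\ref{assumption:confidence} the last term on the right is nonnegative, so multiplying through by the integrating factor $e^t$ gives $\frac{\diff}{\diff t}(e^t D_t^{y'}) \geq 0$ pathwise. Combined with $D_0^{y'} = \langle \bar x_0 - \bar z_0, \mu_y - \mu_{y'}\rangle \geq 0$, inherited from the hypotheses of Theorem~\ref{thm:DDPM-confidence}, this yields $D_t^{y'} \geq e^{-t}D_0^{y'}\geq 0$ simultaneously for every $y'\in\cY$ and every $t\in[0,T]$, almost surely, which is the claim. The main obstacle really lies in the first paragraph: once one recognizes that the shared Brownian driver kills the martingale part of $\bar x_t - \bar z_t$ and that Assumption~\ref{assumption:confidence} keeps the guidance contribution to the drift nonnegative in the same way as in the DDIM analysis, the rest is an integrating-factor computation rather than a genuine stochastic argument. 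The subtlety is that the drift of $D_t^{y'}$ still depends on the full $\bar x_t$ through the softmax weights $q_{T-t}(\bar x_t, \cdot)$; this dependence is innocuous here because we only need nonnegativity of the guidance term, not monotonicity in $D_t^{y'}$ itself, which is exactly what Assumption~\ref{assumption:confidence} delivers uniformly in $\bar x_t$.
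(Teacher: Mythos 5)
Your proof is correct, and it takes a genuinely different route from the paper's. The paper computes the drift inequality (Eq.~\eqref{eq:SDE-long}) and the unguided SDE (Eq.~\eqref{eq:zt-mu-SDE}) and then invokes the SDE comparison theorem (Lemma~\ref{lemma:SDE-comparison-theorem}) to conclude. You instead observe that since the guided and unguided DDPM dynamics share the same Brownian driver and, under $\Sigma = I_d$, the same constant diffusion coefficient $\sqrt{2}I_d$, the difference $D_t^{y'} = \langle \bar x_t - \bar z_t, \mu_y - \mu_{y'}\rangle$ is a finite-variation process with no martingale part, so the whole argument collapses to a pathwise integrating-factor computation. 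This is cleaner: it avoids checking the monotonicity conditions (ii)(a)--(b) of the comparison theorem, which are in any case delicate here because the drift of the scalar $\langle \bar x_t, \mu_y - \mu_{y'}\rangle$ depends on the full vector $\bar x_t$ through the softmax weights $q_{T-t}(\bar x_t, \cdot)$ rather than on the scalar itself — you correctly flag this and explain why it is harmless (one only needs the guidance contribution to be pointwise nonnegative, which Assumption~\ref{assumption:confidence} gives uniformly in $\bar x_t$). One notational nit: the display $\frac{\diff}{\diff t}\langle \bar x_t, \mu_y - \mu_{y'}\rangle \geq \cdots$ is an abuse, since $\langle \bar x_t, \mu_y - \mu_{y'}\rangle$ still carries a martingale part; it would be cleaner to write this as an inequality on the drift of the semimartingale, and only introduce a classical $\frac{\diff}{\diff t}$ after subtracting to form $D_t^{y'}$, as you in fact do in the very next line. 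Modulo that, the argument is sound and arguably preferable to the paper's.
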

\begin{proof}[Proof of Lemma \ref{lemma:SDE-comparison-confidence}]

Note that 
\begin{align}
\label{eq:SDE-long}
\begin{split}
   &  \diff \langle \bar{x}_t, \mu_y - \mu_{y'} \rangle \\
   = & \left[ - \langle \bar x_t, \mu_y - \mu_{y'} \rangle + 2 e^{-(T - t)} (1 + \gs - \gs q_{T - t}(\bar x_t, y))\|\mu_y\|_2^2 - 2\gs e^{-(T - t)} \sum_{y'' \neq y} q_{T - t}(\bar x_t, y'') \langle \mu_y, \mu_{y''}\rangle \right. \\
   & \left. - 2 e^{-(T - t)} (1 + \gs - \gs q_{T - t}(\bar x_t, {y})) \langle \mu_y, \mu_{y'}  \rangle + 2 \gs e^{-(T - t)} \sum_{y'' \neq y} q_{T - t}(\bar x_t, y'') \langle \mu_{y'}, \mu_{y''} \rangle \right] \diff t \\
   & + \sqrt{2} \langle \diff B_t, \mu_y - \mu_{y'} \rangle \\
   = & \left[ - \langle \bar x_t, \mu_y - \mu_{y'} \rangle + 2e^{-(T - t)} \|\mu_y\|_2^2 - 2e^{-(T - t)} \langle \mu_y, \mu_{y'} \rangle + 2\gs e^{-(T - t)}(1 - q_{T - t}(\bar x_t, y))\|\mu_y - \mu_0\|_2^2 \right. \\
   & \left. + 2\gs e^{-(T - t)} q_{T - t}(\bar x_t, y') \|\mu_{y'} - \mu_0\|_2^2  + \bar\cE_t \right] + \sqrt{2} \langle \diff B_t, \mu_y - \mu_{y'} \rangle \\
   \geq & \left[ - \langle \bar {x}_t, \mu_y - \mu_{y'}\rangle + 2e^{-(T - t)} \|\mu_y\|_2^2 - 2 e^{-(T - t)} \langle \mu_y, \mu_{y'} \rangle + 2\gs e^{-(T - t)} (1 - q_{T - t}(\bar x_t, y)) (\|\mu_y - \mu_0\|_2^2 - 3 \varepsilon)  \right] \diff t \\
   & + \sqrt{2} \langle \diff B_t, \mu_y - \mu_{y'} \rangle, 
\end{split}
\end{align}
where $\bar\cE_t$ is a function of $(\bar x_t, t)$, and $|\bar \cE_t| \leq 6\gs e^{-(T - t)} (1 - q_{T - t}(\bar x_t, y)  ) \varepsilon$.
Note that the unguided process $(\bar z_t)_{0 \leq t \leq T}$ satisfies the following SDE
\begin{align}
\label{eq:zt-mu-SDE}
    \diff \langle \bar z_t, \mu_y - \mu_{y'} \rangle = \left[ - \langle \bar z_t, \mu_y - \mu_{y'} \rangle + 2e^{-(T - t)} \|\mu_y\|_2^2 - 2 e^{-(T - t)} \langle \mu_y, \mu_{y'} \rangle \right] \diff t + \sqrt{2} \langle \diff B_t, \mu_y - \mu_{y'} \rangle. 
\end{align}
Lemma \ref{lemma:SDE-comparison-confidence} then follows as a straightforward consequence of Lemma \ref{lemma:SDE-comparison-theorem}. 

\end{proof}

\subsection{Proof of Theorem \ref{thm:DDPM-confidence-quan}}
\label{sec:proof-thm:DDPM-confidence-quan}

Plugging Eq.~\eqref{eq:q-T-t-max-G-G} and \eqref{eq:Gp} into the last line of Eq.~\eqref{eq:SDE-long}, we obtain 
\begin{align}
\label{eq:C-A17}
	& \diff \langle \bar x_t, \mu_y - \mu_{y'} \rangle \nonumber \\
	\geq & \left[ - \langle \bar {x}_t, \mu_y - \mu_{y'}\rangle + 2e^{-(T - t)} \Big( \|\mu_y\|_2^2 -  \langle \mu_y, \mu_{y'} \rangle + \gs e^{ - \Delta / 8} \min \{1 - \cP(x_t, y), \xi_w\} (\|\mu_y - \mu_0\|_2^2 - 3 \varepsilon) \Big)  \right] \diff t  \nonumber \\
   & + \sqrt{2} \langle \diff B_t, \mu_y - \mu_{y'} \rangle. 
\end{align}
Invoking the method of integrating factors, we see that
\begin{align*}
	& \diff \left[ e^t \langle \bar x_t, \mu_y - \mu_{y'} \rangle \right] \\
	\geq & 2e^{-T + 2t} \Big( \|\mu_y\|_2^2 -  \langle \mu_y, \mu_{y'} \rangle + \gs e^{ - \Delta / 8} \min \{1 - \cP(x_t, y), \xi_w\} (\|\mu_y - \mu_0\|_2^2 - 3 \varepsilon) \Big)\diff t + \sqrt{2} e^t\langle \diff B_t, \mu_y - \mu_{y'} \rangle. 
\end{align*}
Note that 
\begin{align*}
	\diff \left[ e^t \langle \bar z_t, \mu_y - \mu_{y'} \rangle \right] = 2e^{-T + 2t} \Big( \|\mu_y\|_2^2 -  \langle \mu_y, \mu_{y'} \rangle \Big) \diff t + \sqrt{2} e^t\langle \diff B_t, \mu_y - \mu_{y'} \rangle. 
\end{align*}
Combining the above two equations, we obtain that 
\begin{align}
\label{eq:A16-et}
	\diff \left[ e^t \langle \bar x_t - \bar z_t, \mu_y - \mu_{y'} \rangle \right] \geq 2 \gs e^{-T + 2t - \Delta / 8} \min \{1 - \cP(x_t, y), \xi_w\} (\|\mu_y - \mu_0\|_2^2 - 3 \varepsilon) \diff t.  
\end{align}
By assumption $\langle \bar x_0 - \bar z_0, \mu_y - \mu_{y'} \rangle \geq 0$, hence $\langle \bar x_t - \bar z_t, \mu_y - \mu_{y'} \rangle \geq 0$ for all $t \in [0, T]$. 
Suppose we have $e^t \langle \bar x_t, \mu_y - \mu_{y'}  \rangle - e^t  \langle \bar z_t, \mu_y - \mu_{y'} \rangle \in [0, \cU]$ for all $t \in [0, T]$ and $y' \in \cY$. Then from Eq.~\eqref{eq:p-F-max} we know that for all $t \in [0, T]$, 
\begin{align}
\label{eq:A18-cP}
	\cP(\bar x_t, y) \leq 1 - \cF\big( \max_{0 \leq t \leq T} \cP(\bar z_t, y),\, \cU \big). 
\end{align}
Using Eq.~\eqref{eq:A16-et} and \eqref{eq:A18-cP}, we see that in order for $\cU$ to be a valid upper bound, we must have 
\begin{align}
\label{eq:A19-U}
	\cU \geq \langle \bar x_0 - \bar z_0, \mu_y - \mu_{y'} \rangle + \gs (e^T - e^{-T}) e^{-\Delta / 8} \min \big\{\cF\big( \max_{0 \leq t \leq T} \cP(\bar z_t, y),\, \cU \big),\, \xi_w\big\} (\|\mu_y - \mu_0\|_2^2 - 3 \varepsilon). 
\end{align}
For any $\cU$ that does not satisfy Eq.~\eqref{eq:A19-U}, we know that there exists $t \in [0, T]$, such that $e^t \langle \bar x_t - \bar z_t, \mu_y - \mu_{y'} \rangle \geq \cU$, hence $\langle \bar x_T - \bar z_T, \mu_y - \mu_{y'} \rangle \geq e^{-T} \cU$. As a consequence, we have
\begin{align}
\label{eq:A20-cP}
	\cP(\bar x_T, y) \geq \frac{\cP(\bar z_T, y)}{\cP(\bar z_T, y) + (1 - \cP(\bar z_T, y)) \cdot \exp(-e^{-T} \cU)}. 
\end{align}
Setting $\bar \cU = e^{-T} \cU$ completes the proof of the first result. 

As for the proof of the convergence rate, note that if we set $e^{-\cU} = \gs^{-1} (\log \gs)^2$, then as $\gs \to \infty$,  the left hand side of Eq.~\eqref{eq:A19-U} is of order $O(\log \gs)$, while the right hand side of Eq.~\eqref{eq:A19-U} is of order $O(\gs \wedge (\log \gs)^2)$. Hence, for a large enough $\gs$ Eq.~\eqref{eq:A19-U} is not satisfied. 
Plugging such $\cU$ into Eq.~\eqref{eq:A20-cP}, we conclude that $\cP(\bar x_T, y) \geq 1 - O(\gs^{-e^{-T}} (\log \gs)^{2e^{-T}})$.

\subsection{Proof of Theorem \ref{thm:DDIM-two-clusters}}
\label{sec:proof-thm:DDIM-two-clusters}

\subsubsection*{Proof of the first claim}

Inspecting Eq.~\eqref{eq:diff-x-mu-mu-equality},  \eqref{eq:diff-z-mu-mu} and setting $\mu_0 = (\mu_1 + \mu_2) / 2$, $\mu = \mu_1 - \mu_0$ therein, we have
\begin{align}
   & 2 \frac{\diff}{\diff t} \langle x_t, \mu \rangle = e^{-(T - t)} \|\mu_1\|_2^2 - e^{-(T - t)} \langle \mu_1, \mu_2 \rangle + 4\gs e^{-(T - t)} (1 - q_{T - t}(x_t, 1)) \|\mu\|_2^2, \label{eq:two-component-xt-mu} \\
   & 2 \frac{\diff}{\diff t} \langle z_t, \mu \rangle = e^{-(T - t)} \|\mu_1\|_2^2 - e^{-(T - t)} \langle \mu_1, \mu_2 \rangle. \label{eq:two-component-zt-mu}
\end{align}
Applying the ODE comparison theorem (Lemma \ref{lemma:comparison-theorem}), we conclude that $\langle x_t, \mu\rangle \geq \langle z_t, \mu \rangle$ for all $t \in [0, T]$. 
The first claim of the lemma then immediately follows, as in this case
\begin{align*}
    & \cP(x_t, 1) = \frac{w_1 \exp (\langle x_t, \mu \rangle - \|\mu_1\|_2^2 / 2)}{w_1 \exp (\langle x_t, \mu \rangle - \|\mu_1\|_2^2 / 2) + w_2 \exp (-\langle x_t, \mu \rangle - \|\mu_2\|_2^2 / 2)}, \\
    & \cP(z_t, 1) = \frac{w_1 \exp (\langle z_t, \mu \rangle - \|\mu_1\|_2^2 / 2)}{w_1 \exp (\langle z_t, \mu \rangle - \|\mu_1\|_2^2 / 2) + w_2 \exp (-\langle z_t, \mu \rangle - \|\mu_2\|_2^2 / 2)}.
\end{align*}

\subsubsection*{Proof of the second claim}

%
%
%
%
Similar to the derivation of Eq.~\eqref{eq:upper-bound-tilde-q}, we conclude that 
\begin{align}
\label{eq:two-cluster-q-bound1}
    q_{T - t}(x_t, 1) \leq \frac{\tilde q_{T - t}(x_t, 1)}{\tilde q_{T - t}(x_t, 1) + (1 - \tilde q_{T - t}(x_t, 1)) \cdot \exp (-\Delta_1 / 8)}, 
\end{align}
where we recall that $\tilde q_{T - t}$ is defined in Eq.~\eqref{eq:def-tilde-q}, and $\Delta_1 = |\|\mu_1\|_2^2 - \|\mu_2\|_2^2|$. 
If $\exp (\langle x_t, \mu_1 \rangle - \|\mu_1\|_2^2 / 2) \geq \exp(\langle x_t, \mu_2\rangle - \|\mu_2\|_2^2 / 2)$, then 
\begin{align*}
	\tilde q_{T - t}(x_t, 1) \leq \cP(x_t, 1). 
\end{align*}
Plugging the above inequality into Eq.~\eqref{eq:two-cluster-q-bound1}, we obtain that  
%
\begin{align}
\label{eq:two-cluster-q-bound2}
\begin{split}
    q_{T - t}(x_t, 1) \leq & \frac{\cP(x_t, 1)}{\cP(x_t, 1) + (1 - \cP(x_t, 1)) \cdot \exp(-\Delta_1 / 8)}.  
\end{split}
\end{align}
On the other hand, if $\exp(\langle x_t, \mu_1\rangle - \|\mu_1\|_2^2 / 2) \leq \exp(\langle x_t, \mu_2\rangle - \|\mu_2\|_2^2 / 2)$, then $\tilde q_{T - t}(x_t, 1) \leq w_1$. 
Putting together this upper bound, Eq.~\eqref{eq:two-cluster-q-bound1} and \eqref{eq:two-cluster-q-bound2}, we conclude that 
$$q_{T - t}(x_t, 1) \leq \max \{G_1(\cP(x_t, 1)), G_1(w_1)\},$$ 
%
where $G_1(x) := x / (x + (1 - x) \cdot \exp(-\Delta_1 / 8))$ maps $[0, 1]$ to $[0, 1]$. Taking the derivative of $G_1$, we see that for all $x \in [0, 1]$,
\begin{align}
\label{eq:G1-diff}
	G_1'(x) = \frac{\exp(-\Delta_1 / 8)}{ [x + (1 - x) \cdot \exp(-\Delta_1 / 8)]^2} \in \big[\exp(-\Delta_1 / 8), \exp(\Delta_1 / 8) \big]. 
\end{align}
Observe that $G_1(1) = 1$, then by Eq.~\eqref{eq:G1-diff} we have 
\begin{align*}
	1 - G_1(\cP(x_t, 1)) \geq e^{-\Delta_1 / 8} (1 - \cP(x_t, 1)), \qquad 1 - G_1(w_1) \geq e^{-\Delta_1 / 8} (1 - w_1), 
\end{align*}
which further implies that 
\begin{align}
\label{eq:q-T-t-lower-min}
	1 - q_{T - t}(x_t, 1) \geq e^{-\Delta_1 / 8} \min \{1 - \cP(x_t, 1), 1 - w_1\}.  
\end{align}
Plugging the lower bound in Eq.~\eqref{eq:q-T-t-lower-min} into Eq.~\eqref{eq:two-component-xt-mu} and \eqref{eq:two-component-zt-mu}, we obtain 
\begin{align}
\label{eq:A27-new}
	\frac{\diff}{\diff t} \langle x_t - z_t, \mu\rangle \geq 2 \gs e^{-T + t - \Delta_1 / 8} \min \{1 - \cP(x_t, 1), 1 - w_1\} \|\mu\|_2^2. 
\end{align}
The above equation together with the assumption $\langle x_0 - z_0, \mu\rangle \geq 0$ implies that $\langle x_t - z_t, \mu \rangle \geq 0$ for all $t \in [0, T]$. Now suppose $2\langle x_t - z_t, \mu \rangle \in [0, \cU]$ for all $t \in [0, T]$. Similar to the derivation of Eq.~\eqref{eq:p-F-max}, we conclude that 
\begin{align}
\label{eq:A27-P}
	\cP(x_t, 1) \leq 1 - \cF\big( \max_{0 \leq t \leq T} \cP(z_t, 1),\, \cU \big). 
\end{align}
Plugging Eq.~\eqref{eq:A27-P} into Eq.~\eqref{eq:A27-new}, we see that in order for $\cU$ to be a valid upper bound, we must have 
\begin{align}
\label{eq:A29}
	\cU \geq 2\langle x_0 - z_0, \mu \rangle + 4 \gs e^{-\Delta_1 / 8} \|\mu\|_2^2 (1 - e^{-T}) \min \big\{\cF\big( \max_{0 \leq t \leq T} \cP(z_t, 1),\, \cU \big), \, 1 - w_1 \big\}.  
\end{align}
If $\cU$ does not satisfy Eq.~\eqref{eq:A29}, then $2\langle x_T - z_T, \mu \rangle \geq \cU$, hence 
\begin{align}
\label{eq:A30}
	\cP(x_T, 1) \geq \frac{\cP(z_T, 1)}{\cP(z_T, 1) + (1 - \cP(z_T, 1)) \cdot \exp(-\cU)}. 
\end{align}
The proof of the first result is complete. 
We then prove the result regarding the convergence rate as $\gs \to \infty$. To this end, we set $e^{-\cU} = \gs^{-1} (\log \gs)^2$. For such $\cU$, the left hand side of Eq.~\eqref{eq:A29} is of order $O(\log \gs)$, while the right hand side of Eq.~\eqref{eq:A29} is of order $O(\gs \wedge (\log \gs)^2)$. Therefore, for a sufficiently large $\gs$, Eq.~\eqref{eq:A29} does not hold. Plugging such $\cU$ into Eq.~\eqref{eq:A30}, we deduce that $\cP(x_T, 1) \geq 1 - O(\gs^{-1} (\log \gs)^2)$ as $\gs \to \infty$.  

%
%
%
%

\subsection{Proof of Theorem \ref{thm:DDPM-two-clusters}}
\label{sec:proof-thm:DDPM-two-clusters}

\subsubsection*{Proof of the first claim}

Similar to the proof of Theorem \ref{thm:DDIM-two-clusters}, we set $\mu_0 = (\mu_1 + \mu_2) / 2$ and $\mu = \mu_1 - \mu_0$. Following the derivation of Eq.~\eqref{eq:SDE-long} and \eqref{eq:zt-mu-SDE}, we obtain
\begin{align}
\label{eq:SDE-two-component}
\begin{split}
     2 \diff \langle \bar x_t, \mu \rangle = & \left[- 2 \langle \bar x_t, \mu \rangle + 2e^{-(T - t)}\|\mu_1\|_2^2 - 2 e^{-(T - t)} \langle \mu_1, \mu_2 \rangle + 8\gs e^{-(T - t)} (1 - q_{T - t}(\bar x_t, 1)) \|\mu\|_2^2 \right] \diff t \\
     & + 2\sqrt{2} \langle \diff B_t, \mu \rangle, \\
     2 \diff \langle \bar z_t, \mu \rangle = & \left[ - 2 \langle \bar z_t, \mu \rangle + 2 e^{-(T - t)} \|\mu_1\|_2^2 - 2 e^{-(T - t)} \langle \mu_1, \mu_2 \rangle \right] \diff t + 2 \sqrt{2} \langle \diff B_t, \mu \rangle. 
\end{split}
\end{align}
Note that $q_{T - t}(\bar x_t, 1)$ depends on $\bar x_t$ only through $\langle \bar x_t, \mu \rangle$, hence both equations listed above represent an SDE. 
Then, we may leverage the SDE comparison theorem (Lemma \ref{lemma:SDE-comparison-theorem}) to deduce that almost surely, $\langle \bar x_t, \mu \rangle \geq \langle \bar z_t, \mu \rangle$ for all $t \in [0, T]$. This completes the proof of the first claim. 

\subsubsection*{Proof of the second claim}

%
%
%
%
%
%
%

Plugging Eq.~\eqref{eq:q-T-t-lower-min} into Eq.~\eqref{eq:SDE-two-component}, we see that 
\begin{align*}
	2 \diff \langle \bar x_t - \bar z_t, \mu \rangle \geq \left[ -2\langle \bar x_t - \bar z_t, \mu \rangle + 8 \gs \|\mu\|_2^2 e^{-T + t - \Delta_1 / 8} \min \{1 - \cP(\bar x_t, 1), w_2\}  \right] \diff t. 
\end{align*}
Multiplying both sides above by $e^t$, we get
\begin{align}
\label{eq:A32-new}
	\diff \left[ 2e^t \langle \bar x_t - \bar z_t, \mu \rangle \right] \geq  8 \gs \|\mu\|_2^2 e^{-T + 2t - \Delta_1 / 8} \min \{1 - \cP(\bar x_t, 1), w_2\} \diff t. 
\end{align}
Since by assumption $\langle \bar x_0 - \bar z_0, \mu \rangle \geq 0$, we then conclude that almost surely we have $\langle \bar x_t - \bar z_t, \mu\rangle \geq 0$ for all $t \in [0, T]$. If we assume $2e^t\langle \bar x_t - \bar z_t, \mu\rangle \in [0, \cU]$ for all $t \in [0, T]$, then it holds that $2 \langle \bar x_t - \bar z_t, \mu \rangle \leq \cU$ for all $t \in [0, T]$. 
Following the derivation of Eq.~\eqref{eq:A18-cP}, we have 
\begin{align}
\label{eq:A32}
	1 - \cP (\bar x_t, 1) \geq \cF \big( \max_{0 \leq t \leq T} \cP(\bar z_t, 1), \cU \big). 
\end{align}
Putting together Eq.~\eqref{eq:A32-new} and \eqref{eq:A32}, we see that for $\cU$ to serve as a valid upper bound, we must have 
\begin{align}
\label{eq:A34}
	\cU \geq 2\langle \bar x_0 - \bar z_0, \mu \rangle + 4\gs \|\mu\|_2^2 e^{-\Delta_1 / 8} (e^T - e^{-T}) \min \Big\{\cF \big( \max_{0 \leq t \leq T} \cP(\bar z_t, 1), \cU \big), w_2\Big\}. 
\end{align} 
If Eq.~\eqref{eq:A34} is not satisfied, then for such $\cU$ we have $2 \langle \bar x_T - \bar z_T, \mu \rangle \geq e^{-T} \cU$, and
\begin{align}
\label{eq:A35}
	\cP(\bar x_T, 1) \geq \frac{\cP(\bar z_T, 1)}{\cP(\bar z_T, 1) + (1 - \cP(\bar z_T, 1)) \cdot \exp(-e^{-T} \cU)}. 
\end{align} 
Setting $\bar \cU = e^{-T} \cU$ completes the proof of the first bound. 

To prove the convergence rate, we simply set $e^{-\cU} = \gs^{-1} (\log \gs)^2$. As $\gs \to \infty$, the left hand side of Eq.~\eqref{eq:A34} is of order $O(\log \gs)$, while the right hand side of Eq.~\eqref{eq:A34} is of order $O(\gs \wedge (\log \gs)^2)$. For a large enough $\gs$, we see that Eq.~\eqref{eq:A34} does not hold. 
Plugging such $\cU$ into Eq.~\eqref{eq:A35}, we conclude that $\cP(\bar x_T, 1) = 1 - O(\gs^{-e^{-T}}(\log \gs)^{2 e^{-T}})$.

\section{Proofs related to diversity reduction}

This section contains proofs related to diversity reduction.
We present in Appendix \ref{sec:FK-derivation} a heuristic derivation of Theorem \ref{thm:DDIM-diversity} based on the Fokker-Planck equation, and leave the establishment of a rigorous procedure to the remaining sections. 
In Appendix \ref{sec:proof-lemma:density-exist}, we demonstrate the existence of probability density functions $Q(\cdot, t)$ and $Q_0(\cdot, t)$ for all $t \in [0, T]$.  

\subsection{Derivation of Theorem \ref{thm:DDIM-diversity} via the Fokker-Planck equation}
\label{sec:FK-derivation}

We provide in this section a non-rigorous derivation of Theorem \ref{thm:DDIM-diversity} via the Fokker-Planck equation. 
This part serves as a motivation of our theorem, 
and a rigorous proof can be found in Appendix \ref{sec:proof-thm:DDIM-diversity} instead.

Leveraging the Fokker–Planck equation (Lemma \ref{lemma:Fokker-Planck}), on $[0, T]$ we have
\begin{align*}
	& \frac{\partial}{\partial t} Q(t, x) = - \sum_{i = 1}^d \frac{\partial}{\partial x_i} \left[ Q(t, x) \cdot \big(x_i + \frac{\partial}{\partial x_i} \log p_{T - t}(x, y) + \gs \frac{\partial}{\partial x_i} \log p_{T - t}(y \mid x)  \big) \right], \\
	& \frac{\partial}{\partial t} Q_0(t, x) = - \sum_{i = 1}^d \frac{\partial}{\partial x_i} \left[ Q_0(t, x) \cdot \big(x_i + \frac{\partial}{\partial x_i} \log p_{T - t}(x, y)  \big) \right]. 
\end{align*}
Therefore, 
\begin{align*}
	\frac{\partial}{\partial t} H(t) = & - \int\frac{\partial}{\partial t} Q(t, x) \log Q(t, x) \diff x - \int \frac{\partial}{\partial t} Q(t, x) \diff x \\
	\overset{(i)}{=} & \sum_{i = 1}^d \int  \frac{\partial}{\partial x_i} \left[ Q(t, x) \cdot \big( x_i + \frac{\partial}{\partial x_i} \log p_{T - t}(x, y) + \gs \frac{\partial}{\partial x_i} \log p_{T - t}(y \mid x) \big) \right] \log Q(t, x) \diff x \\
    = &  \sum_{i = 1}^d \int \log Q(t, x) \frac{\partial}{\partial x_i} Q(t, x) \cdot \left[ x_i + \frac{\partial}{\partial x_i} \log p_{T - t}(x, y) + \gs \frac{\partial}{\partial x_i} \log p_{T - t}(y \mid x) \right] \diff x  \\
    & + \sum_{i = 1}^d \int Q(t, x) \log Q(t, x) \cdot \left[ 1 + \frac{\partial^2}{\partial^2 x_i} \log p_{T - t}(x, y) + \gs \frac{\partial^2}{\partial^2 x_i} \log p_{T - t}(y \mid x) \right] \diff x \\
	\overset{(ii)}{=} & \sum_{i = 1}^d \int \left(1 + \frac{\partial^2}{\partial^2 x_i} \log p_{T - t}(x, y) + \gs \frac{\partial^2}{\partial^2 x_i} \log p_{T - t}(y \mid x) \right) Q(t, x) \diff x, 
\end{align*}
where $(i)$ is because $\int Q(t, x) \diff x \equiv 1$, and $(ii)$ is via integration by parts. 
Applying a similar procedure to the diffusion model without guidance, we obtain
\begin{align*}
	\frac{\partial }{\partial t} H_0(t) = \sum_{i = 1}^d \int \left(1 + \frac{\partial^2}{\partial^2 x_i} \log p_{T - t}(x, y) \right) Q_0(t, x) \diff x. 
\end{align*}
Note that 
\begin{align*}
	& \sum_{i = 1}^d \frac{\partial^2}{\partial^2 x_i} \log p_{T - t}(x, y) = -\tr\left[ \Sigma_{T - t}^{-1}  \right], \\
	& \sum_{i = 1}^d \frac{\partial^2}{\partial ^2 x_i} \log p_{T - t}(y \mid x) = - \tr\left[ \sum_{y' \in \cY} q_{T - t}(y' \mid x) \Sigma_{T - t}^{-1} \mu_{y'} \mu_{y'}^{\top} \Sigma_{T - t}^{-1}  -  vv^{\top}\right], \\
 & v = \sum_{y' \in \cY} q_{T - t}(y' \mid x) \Sigma_{T - t}^{-1} \mu_{y'}. 
\end{align*}
As a consequence, we have $\sum_{i = 1}^d \frac{\partial^2}{\partial^2 x_i} \log p_{T - t}(y \mid x) \leq 0$. Putting together this result and the ODE comparison theorem (Lemma \ref{lemma:comparison-theorem}), we obtain the desired result. However, we emphasize that the above derivation is non-rigorous. 
For example, it is unclear whether the Fokker-Planck equation has a solution, and also the exchange of integration and differentiation is unjustified. 


\subsection{Existence of probability density functions}
\label{sec:proof-lemma:density-exist}

In this section, we justify the existence of probability density functions. Namely, we establish the following lemma. 

\begin{lemma}
\label{lemma:density-exist}
We assume the conditions of Theorem \ref{thm:DDIM-diversity}. Then $Q(t, \cdot)$ and $Q_0(t, \cdot)$ exist for all $t \in [0, T]$. 
    
\end{lemma}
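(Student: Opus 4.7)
The plan is to read the DDIM dynamics in Eq.~\eqref{eq:dxt-long} as an ODE with a globally Lipschitz velocity field, so that the time-$t$ map $\phi_t$ is a $C^1$ diffeomorphism of $\RR^d$ and the density at $t=0$ transports to every $t \in [0, T]$ via the usual change of variables. The same route handles $Q_0(t, \cdot)$, since the unguided drift in Eq.~\eqref{eq:unguided} is even simpler (affine in $z_t$).

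First, I would verify that the drift $b(t, x)$ in Eq.~\eqref{eq:dxt-long} is jointly continuous in $(t, x)$ and globally Lipschitz in $x$, uniformly on $[0, T]$. The linear-in-$x$ contribution $(I - \Sigma_{T-t}^{-1})x$ is controlled once $\|\Sigma_{T-t}^{-1}\|_{\rm op}$ is uniformly bounded, which follows from $\Sigma$ being non-degenerate together with $\Sigma_{T-t} = e^{-2(T-t)}\Sigma + (1 - e^{-2(T-t)})I_d$, so that $\sigma_{\min}(\Sigma_{T-t}) \geq \min(\sigma_{\min}(\Sigma), 1)$ for all $t \in [0, T]$. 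The nonlinear part enters only through the softmax $q_{T-t}(x, y')$ defined in Eq.~\eqref{eq:def-qt}, whose logits $\langle e^{-(T-t)}\Sigma_{T-t}^{-1}\mu_{y'}, x \rangle$ have $x$-coefficients uniformly bounded in $t \in [0,T]$. Since the Jacobian of softmax with respect to its logits has bounded entries, this term is also Lipschitz in $x$; summing the contributions yields a global uniform Lipschitz constant $L$ for $b(t, \cdot)$.

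Next, by the Picard--Lindel\"of theorem, for every initial condition $x_0 \in \RR^d$ the ODE admits a unique global $C^1$ solution, and the flow $\phi_t : x_0 \mapsto x_t$ is a $C^1$ diffeomorphism of $\RR^d$ onto itself. Liouville's formula gives
\begin{equation*}
\det \nabla \phi_t(x_0) = \exp\!\Big( \int_0^t \mathrm{div}_x\, b(s, \phi_s(x_0)) \, \diff s \Big) > 0,
\end{equation*}
so $\phi_t^{-1}$ is also $C^1$ with positive Jacobian. Since by hypothesis $x_0$ admits a density $Q(0, \cdot)$ with respect to Lebesgue measure, the change of variables formula yields
\begin{equation*}
Q(t, x) = Q(0, \phi_t^{-1}(x)) \cdot \bigl|\det \nabla \phi_t^{-1}(x)\bigr|,
\end{equation*}
which establishes existence of $Q(t, \cdot)$ on $[0, T]$. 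The argument for $Q_0(t, \cdot)$ proceeds identically, with Step~1 reducing to a trivial check on an affine drift.

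The main (and only mild) obstacle is the uniform-in-$t$ Lipschitz bound for $q_{T-t}(\cdot, y')$; the key observation is that the Jacobian of a softmax with respect to its logits is an absolute-constant-bounded function of the logits themselves, so the problem reduces to a uniform bound on the coefficient vectors $\{e^{-(T-t)}\Sigma_{T-t}^{-1}\mu_{y'}\}_{y' \in \cY}$, which is immediate from $\Sigma$ being non-degenerate on the compact interval $[0, T]$.
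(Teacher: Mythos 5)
Your proof is correct, and it takes a genuinely different (and cleaner) route than the paper's. The paper instead constructs the flow's Jacobian $J_{0\to t}(x_0)$ as a limit of Euler iterates $\hat x^{\delta}_t$, establishes uniform convergence of these discrete Jacobians, then invokes the inverse function theorem locally, a growth bound $\|x_t\|_2 \geq e^{-Ct}\|x_0\|_2 - \tfrac{D}{C}(1 - e^{-Ct})$ to confine preimages to a compact set, and Heine--Borel to conclude that the fiber $\cI_t(x_t)$ is finite; the density is then written as a finite sum over preimages, $p_t(x_t) = \sum_{x \in \cI_t(x_t)} p_0(x)\det(J_{0\to t}(x))^{-1}$. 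Your approach instead observes that the drift is globally Lipschitz (via the uniform bound on $\|\Sigma_{T-t}^{-1}\|_{\rm op}$ from non-degeneracy of $\Sigma$, plus boundedness of the softmax and its logit-Jacobian) and that it is $C^1$ in $x$ (the softmax of affine functions is $C^\infty$), so Picard--Lindel\"of applied forward and backward in time makes the flow $\phi_t$ a global $C^1$ diffeomorphism; Liouville's formula then delivers a nonvanishing Jacobian determinant, and the ordinary change-of-variables gives the density. This is sharper: you obtain a \emph{unique} preimage, not merely finitely many, and you avoid the entire Euler-discretization scaffolding ($J^{\delta}_{0\to k\delta}$, Eq.~\eqref{eq:uniform-J}, the three-term telescoping in Eq.~\eqref{eq:GGJ}). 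The trade-off is that the paper's discrete Jacobian machinery is re-used verbatim in the subsequent Theorem \ref{thm:DDIM-diversity} and Theorem \ref{thm:DDIM-entropy-dis} proofs, so they need it anyway; your argument would still have to produce the bound $\det(J_{0\to t}(x)) \leq \det(M_t)$ used there, but Liouville's formula together with $\operatorname{div}_x b(s,\cdot) \leq \operatorname{div}_z (\text{unguided drift})$ (the inequality $\nabla_x F_k(x,\gs)\preceq \nabla_x F_k(x,0)$ in the paper's own notation) provides that directly.
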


We prove Lemma \ref{lemma:density-exist} in the remainder of this section. We separately discuss the guided process and the unguided process below. 

\subsubsection*{Proof for $Q_0$}

We first show that $z_t$ has a probability density function for all $t \in [0, T]$. 
Observe that $(z_t)_{0 \leq t \leq T}$ is a solution to the following ODE: 
\begin{align}
\label{eq:zt-ODE}
    \frac{\diff z_t}{\diff t} = A(t) z_t + b(t), 
\end{align}
where the symmetric matrix $A(t) \in \RR^{d \times d}$ and the vector $b(t) \in \RR^d$ depend only on $t$. 
In addition, $A(t) A(s) = A(s) A(t)$ for all $t, s \in [0, T]$.
Solving Eq.~\eqref{eq:zt-ODE}, we conclude that 
\begin{align*}
    z_t = \exp \left( \int_0^t A(s) \diff s \right) z_0 + \int_0^t \exp \left( \int_s^t A(i) \diff i \right) b(s) \diff s.
\end{align*}
The matrix $\exp \left( \int_0^t A(s) \diff s \right)$ is non-degenerate.
By assumption, $z_0$ has a probability density function with respect to the Lebesgue measure.  
Therefore, $z_t$ also has a density. The proof is complete.

\subsubsection*{Proof for $Q$}

We then prove the lemma for the guided process $x_t$. 
Inspecting Eq.~\eqref{eq:dxt-long} and applying the triangle inequality, we obtain
\begin{align*}
     \frac{\diff \|x_t\|_2}{\diff t} \geq & -\|I_d - \Sigma_{T - t}^{-1}\|_{\op} \|x_t\|_2 - \|\Sigma_{T - t}^{-1}\|_{\op} \|\mu_y\|_2 - 2\gs \|\Sigma_{T - t}^{-1}\|_{\op} \sup_{y' \in \cY} \|\mu_{y'}\|_2 \\
    \geq & - \Big( 1 + [\sigma_{\min}(\Sigma) \wedge 1]^{-1} \Big) \|x_t\|_2  - [\sigma_{\min}(\Sigma) \wedge 1]^{-1} \|\mu_y\|_2 - 2\gs [\sigma_{\min}(\Sigma) \wedge 1]^{-1}  \sup_{y' \in \cY} \|\mu_{y'}\|_2,
\end{align*}
which by Lemma \ref{lemma:comparison-theorem} further implies that 
\begin{align}
\label{eq:CD}
    \|x_t\|_2 \geq e^{-C t} \|x_0\|_2 - \frac{D}{ C }\cdot (1 - e^{-C t} ), 
\end{align}
where $C = ( 1 + [\sigma_{\min}(\Sigma) \wedge 1]^{-1} ) $ and $D = [\sigma_{\min}(\Sigma) \wedge 1]^{-1} \|\mu_y\|_2 + 2\gs [\sigma_{\min}(\Sigma) \wedge 1]^{-1}  \sup_{y' \in \cY} \|\mu_{y'}\|_2$. Now we consider the set of initial values that lead to $x_t$: 
$$\cI_t(x_t) := \{x_0 \in \RR^d: \mbox{ ODE \eqref{eq:dxt-long} with initial value $x_0$ has value $x_t$ at time $t$}\}.$$ 
Examining Eq.~\eqref{eq:CD}, we conclude that $\cI_t(x_t) \subseteq \cB(f_t(\|x_t\|_2))$, where $\cB(r)$ stands for the ball in $\RR^d$ that has radius $r$ and is centered at the origin, and $f_t(r) = e^{Ct} r + C^{-1} D(e^{Ct} - 1)$. 

For the sake of simplicity, we rewrite Eq.~\eqref{eq:dxt-long} as $\diff x_t = F(x_t, t) \diff t$. For any $\delta > 0$, we consider the approximation to the ODE defined in Eq.~\eqref{eq:dxt-long} that has step size $\delta$: 
\begin{align*}
    \hat x_{k\delta}^{\delta} = \hat x_{(k - 1) \delta}^{\delta} + \delta F(\hat x_{(k - 1) \delta}^{\delta}, (k - 1) \delta), \qquad \hat x_{0}^{\delta} = x_0. 
\end{align*}
For $t \in [(k - 1) \delta, k\delta]$, we compute $\hat x_t^{\delta}$ by linearly interpolating $\hat x_{(k - 1) \delta}^{\delta}$ and $\hat x_{k\delta}^{\delta}$. 
To simplify analysis, we consider only $\delta$ that takes the form $T / K$ for $K \in \NN_+$. 
Taking the Jacobian matrix of $F(x, t)$ with respect to $x$, we get
\begin{align*}
    & \nabla_x F(x, t) = I_d - \Sigma_{T - t}^{-1} - \gs e^{-T + t} \Omega_{T - t}(x), \\
    & \Omega_{T - t}(x) = \sum_{y' \in \cY} q_{T - t}(x, y') \Sigma_{T - t}^{-1} \mu_{y'} \mu_{y'}^{\top} \Sigma_{T - t}^{-1} - \Big( \sum_{y' \in \cY} q_{T - t} (x, y') \Sigma_{T - t}^{-1} \mu_{y'} \Big) \Big( \sum_{y' \in \cY} q_{T - t} (x, y') \Sigma_{T - t}^{-1} \mu_{y'} \Big)^{\top}. 
\end{align*}
Therefore, we conclude that $F(\cdot, t)$ is Lipschitz continuous in its first argument, and the Lipschitz constant is uniformly bounded for all $t \in [0, T]$.
In addition, $F$ is continuous in its second argument.  
Leveraging a standard Gronwall type argument, we obtain that for all $r > 0$,  
\begin{align}
\label{eq:uniform-convergence-ODE}
    \limsup_{\delta \to 0^+}\sup_{x_0 \in \cB(r)} \Big\{ \sup_{0 \leq t \leq T} \| \hat x^{\delta}_t - x_t \|_2 \Big\} = 0.  
\end{align}
%
We can compute the Jacobian of the mapping $x_0 \mapsto \hat{x}_t^{\delta}$. 
To simplify presentation, here we let $t = k \delta$. We comment that the treatment for general $t \in [0, T]$ is similar, and we leave the homework to interested readers. We denote the Jacobian of this mapping $x_0 \mapsto \hat x_t^{\delta}$ by $J_{0 \to k \delta}^{\delta} (x_0) \in \RR^{d \times d}$. Observe that
\begin{align*}
    J_{0 \to k \delta}^{\delta} (x_0) = \prod_{i = 0}^{k - 1} \Big( I_d + \delta (I_d - \Sigma_{T - i\delta}^{-1} - \gs e^{-T + i\delta} \Omega_{T - i\delta} (\hat x_{i\delta}^{\delta})) \Big), 
\end{align*}
where $\prod_{i = 0}^{k - 1} A_i := A_{k - 1} A_{k - 2} \cdots A_0$. For a sufficiently small $\delta$ we see that $J_{0 \to k \delta}^{\delta} (x_0)$ is non-degenerate for all $k$ and $x_0$.
In addition, one can verify that for fixed $T, r > 0$ ($T = K \delta$), it holds that
\begin{align}
\label{eq:uniform-J}
\begin{split}
   & \lim_{\delta \to 0^+}\sup_{x_0 \in \cB(r), k \in \{0\} \cup [K]} \big\| J_{0 \to k \delta}^{\delta} (x_0) - J_{0 \to k \delta} (x_0) \big\|_{\op} = 0, \\
   & J_{0 \to k \delta} (x_0) = \exp \Big( \int_0^{k \delta} (I_d - \Sigma_{T - t}^{-1} - \gs e^{-T + t} \Omega_{T - t} (x_t)) \diff t \Big) \in \RR^{d \times d}. 
\end{split}
\end{align}
We write $x_t = G_t(x_0)$ and $\hat x_t^{\delta} = \hat G_t^{\delta}(x_0)$.
By Eq.~\eqref{eq:uniform-convergence-ODE} we have $\lim_{\delta \to 0^+}\sup_{x_0 \in \cB(r), 0 \leq t \leq T}\|G_t(x_0) - \hat G_t^{\delta}(x_0)\|_2 = 0$. 
Next, we prove that $\nabla_{x_0} G_t(x_0) = J_{0 \to t} (x_0)$. 
To this end, it suffice to show 
\begin{align*}
   \big\| G_t(x_0 + x') - G_t(x_0) - J_{0 \to t} (x_0) x' \big\|_2 = o(\|x'\|_2). 
\end{align*}
By triangle inequality, 
\begin{align}
\label{eq:GGJ}
\begin{split}
    & \big\| G_t(x_0 + x') - G_t(x_0) - J_{0 \to t} (x_0) x' \big\|_2 \\
    \leq & \|G_t(x_0 + x') - G_t(x_0) -\hat G_t^{\delta}(x_0 + x') + \hat G_t^{\delta} (x_0) \|_2 + \|\hat G_t^{\delta} (x_0 + x') - \hat G_t^{\delta}(x_0) - J^{\delta}_{0 \to t} (x_0) x'\|_2 \\
    & + \|J^{\delta}_{0 \to t} (x_0) x' - J_{0 \to t}(x_0) x'\|_2. 
\end{split} 
\end{align}
Note that 
\begin{align*}
    \|G_t(x_0 + x') - G_t(x_0) -\hat G_t^{\delta}(x_0 + x') + \hat G_t^{\delta} (x_0) \|_2 = \lim_{\delta' \to 0^+} \|\hat G_t^{\delta'}(x_0 + x') - \hat G_t^{\delta'}(x_0) -\hat G_t^{\delta}(x_0 + x') + \hat G_t^{\delta} (x_0) \|_2. 
\end{align*}
Without loss, we may only consider $x' \in \cB(1)$. 
For all $\delta_1, \delta_2 \in (0, \delta]$, by the mean value theorem
\begin{align*}
    & \|\hat G_t^{\delta_1}(x_0 + x') - \hat G_t^{\delta_1}(x_0) -\hat G_t^{\delta_2}(x_0 + x') + \hat G_t^{\delta_2} (x_0) \|_2 \\
    \leq & \|\nabla \hat G_t^{\delta_1} (x_0 + \alpha x') - \nabla \hat G_t^{\delta_2}(x_0 + \alpha x') \|_{\op} \cdot \|x'\|_2 \\
    \leq  & \limsup_{\delta_1, \delta_2 \in (0, \delta]} \sup_{x \in \cB(x_0, 1)} \|J_{0 \to t}^{\delta_1}(x) - J_{0 \to t}^{\delta_2}(x)\|_{\op} \cdot \|x'\|_2 = c(\delta)  \cdot \|x'\|_2, 
\end{align*}
where by Eq.~\eqref{eq:uniform-J} we have $\lim_{\delta \to 0^+}c(\delta) = 0$. 
Also by Eq.~\eqref{eq:uniform-J}, we see that $\|J_{0 \to t}^{\delta}(x_0)x' - J_{0 \to t} (x_0) x'\|_2 \leq c'(\delta) \cdot \|x'\|_2$, with $c'(\delta) \to 0^+$ as $\delta \to 0^+$. 
Plugging these results back into Eq.~\eqref{eq:GGJ}, we conclude that for any $\varepsilon > 0$, there exists $\delta_{\varepsilon} > 0$ such that for all $\delta \leq \delta_{\varepsilon}$, 
\begin{align*}
    \big\| G_t(x_0 + x') - G_t(x_0) - J_{0 \to t} (x_0) x' \big\|_2 \leq \varepsilon \|x'\|_2 + \|\hat G_t^{\delta} (x_0 + x') - \hat G_t^{\delta}(x_0) - J^{\delta}_{0 \to t} (x_0) x'\|_2. 
\end{align*}
By definition, we have $\lim_{\|x'\|_2 \to 0^+}\|\hat G_t^{\delta} (x_0 + x') - \hat G_t^{\delta}(x_0) - J^{\delta}_{0 \to t} (x_0) x'\|_2 = 0$. 
Since $\varepsilon$ can be arbitrarily small, we then conclude that 
\begin{align}
\label{eq:nabla-G-J}
    \nabla_{x_0} G_t(x_0) = J_{0 \to t}(x_0)
\end{align}
for all $t \in [0, T]$ and $x_0 \in \RR^d$. 

Finally, we are ready to prove the existence of a probability density. 
Recall that $\cI_t(x_t) \subseteq \cB(f_t(\|x_t\|_2))$. Therefore, for any $R > 0$ we have $G_t^{-1} (\cB(R)) \subseteq \cB(f_t(R))$.
We choose $R \in \RR_{> 0}$ large enough such that $\|x_t\|_2 < R$.   
By Eq.~\eqref{eq:nabla-G-J} we know that the mappint $x_0 \mapsto x_t = G_t(x_0)$ has everywhere non-degenerate Jacobian matrix. 
Applying the inverse mapping theorem \citep{rudin1976principles}, we conclude that for all $x \in \cB(f_t(R))$, there exists an open set $\cS(x)$ that contains $x$, such that $G_t$ is injective on $\cS(x)$, and the inverse is continuously differentiable. 
We denote this mapping by $h_{\cS(x)}$ that is defined on $\cS(x)$.
By the Heine–Borel theorem \citep{borel1928leccons}, $\cB(f_t(R))$ is covered by finitely many such $\cS(x)$, and we denote by $\mathfrak{S}_R$ the collection of such $\cS(x)$.
As a consequence, we conclude that for all $x_t \in \RR$, there are finitely many $x \in \RR^d$ that satisfies $G_t(x) = x_t$, i.e., $|\cI_t(x_t)| < \infty$. 
Therefore, $p_t(x_t) = \sum_{x \in \cI_t(x_t)} p_0 (x) \det (J_{0 \to t}(x))^{-1}$.
The proof is complete. 

%

\subsection{Proof of Theorem \ref{thm:DDIM-diversity}}
\label{sec:proof-thm:DDIM-diversity}

We present in this section a rigorous proof of Theorem \ref{thm:DDIM-diversity}. 
Recall that we have proved in the first part of Appendix \ref{sec:proof-lemma:density-exist} that $z_t = M_t z_0 + \xi_t$, where $M_t \in \RR^{d \times d}$ and $\xi_t \in \RR^d$ are functions of $t$ only. 
We define $x_t' = M_t x_0 + \xi_t$, and denote its differential entropy by $H'(t)$. Through standard computation, we see that $H'(t)$ and $H_0(t)$ exist and satisfy
\begin{align*}
    H'(t) = H(0) + \log \det (M_t) \leq H_0(0) + \log \det(M_t), \qquad H_0(t) = H_0(0) + \log \det(M_t). 
\end{align*}
Therefore, in order to show $H(t) \leq H_0(t)$, it suffices to prove $H(t) \leq H'(t)$. 
One caveat is that we still have to show $H(t)$ exists (in the sense of Lebesgue measure). 

Recall that $\mathfrak{S}_R$ is a collection of covering sets introduced at the end of Appendix \ref{sec:proof-lemma:density-exist}. We can in fact choose the coverings appropriately such that for all $m \in \NN_+$, $\mathfrak{S}_m \subseteq \mathfrak{S}_{m + 1}$. Define $\mathfrak{S}_{\infty} = \cup_{m = 1}^{\infty} \mathfrak{S}_{m} = \{\cS_i: i \in \NN_+\}$. 
Here, recall each $\cS_i$ is an open set. 
For all $i \in \NN_+$, we let $\bar \cS_i = \cS_i \backslash (\cup_{j = 1}^{i - 1} \cS_j)$. Then it holds that $\bar \cS_i \cap \bar \cS_j = \emptyset$ for all $i \neq j$, and $\cup_{i = 1}^{\infty} \bar \cS_i = \RR^d$. 
We denote by $f_X$ the probability density function of a random variable $X$. 
Recall that in Appendix \ref{sec:proof-lemma:density-exist} we have defined $x_t = G_t(x_0)$ and $\nabla_x G_t(x) = J_{0 \to t}(x)$. 
Furthermore, by Eq.~\eqref{eq:uniform-J} it holds that $\det(J_{0 \to t}(x)) \leq \det (M_t)$ for all $x \in \RR^d$. 
Based on the derivations in Appendix \ref{sec:proof-lemma:density-exist}, we see that 
\begin{align*}
    f_{x_t} (x) = \sum_{z \in \cI_t(x)} \sum_{i = 1}^{\infty} \mathbbm{1}\{z \in \bar \cS_i\} f_{x_0}(z) \det(J_{0 \to t}(z))^{-1}. 
\end{align*}
For $i \in \NN_+$, we define $\bar\cZ_i = \{ x \in \RR^d: M_t^{-1} (x - \xi_t) \in \bar \cS_i \}$. Then $\bar \cZ_i \cap \bar\cZ_j = \emptyset$ for $i \neq j$, and $\cup_{i = 1}^{\infty} \bar \cZ_i = \RR^d$. In addition, 
\begin{align}
\label{eq:B7}
\begin{split}
    - \int_{\bar \cZ_i} f_{x_t'}(w) \log f_{x_t'}(w) \diff w = & - \int_{\bar \cS_i} f_{x_0}(z) \log \left[ f_{x_0} (z) \det(M_t)^{-1} \right] \diff z \\
    \geq & - \int_{\bar \cS_i} f_{x_0}(z) \log \left[ f_{x_0} (z) \det(J_{0 \to t}(z))^{-1} \right] \diff z. 
\end{split} 
\end{align}
By Eq.~\eqref{eq:uniform-J} we know that there exist constants $c_1, c_2 > 0$, such that $\det (J_{0 \to t}(z)) \in (c_1, c_2)$ for all $z \in \RR^d$. 
By assumption, the left hand side above has a finite Lebesgue integral, hence the Lebesgue integral in the second line of right hand side above also exists and is finite.  
Adding up the above terms over $i \in \NN_+$ (recall that $h_{\cS}$ is the restriction of $G_t$ on $S$), we get 
\begin{align*}
    & -\int f_{x_t'}(w) \log f_{x_t'}(w) \diff w \\
    \overset{(i)}{\geq} & - \sum_{i = 1}^{\infty} \int_{\bar \cS_i} f_{x_0}(z) \log \left[ f_{x_0}(z) \det(J_{0 \to t}(z))^{-1} \right] \diff z \\
    \overset{(ii)}{=} & - \sum_{i = 1}^{\infty} \int_{h_{\cS_i}(\bar \cS_i)} f_{x_0}(h_{S_i}^{-1}(x)) \det(J_{0 \to t}(h_{S_i}^{-1}(x)))^{-1} \log \left[ f_{x_0}(h_{S_i}^{-1}(x)) \det(J_{0 \to t}(h_{S_i}^{-1}(x)))^{-1} \right] \diff x \\
    \overset{(iii)}{=} & -  \int \sum_{i = 1}^{\infty} \mathbbm{1}_{h_{\cS_i}(\bar \cS_i)} (x) f_{x_0}(h_{S_i}^{-1}(x)) \det(J_{0 \to t}(h_{S_i}^{-1}(x)))^{-1} \log \left[ f_{x_0}(h_{S_i}^{-1}(x)) \det(J_{0 \to t}(h_{S_i}^{-1}(x)))^{-1} \right] \diff x \\
    \overset{(iv)}{\geq} & - \int f_{x_t}(x) \log f_t(x) \diff x. 
\end{align*}
In the above display, the summation on the right hand side of $\emph(i)$ exists due to Lemma \ref{lemma:B2}, 
$(ii)$ is by the change-of-variable technique for probability density functions, 
$(iii)$ is also due to Lemma \ref{lemma:B2}, and $(iv)$ is because
\begin{align*}
	f_{x_t}(x) = \sum_{i = 1}^{\infty} \mathbbm{1}_{h_{\cS_i}(\bar \cS_i)}(x) f_{x_0}(h_{S_i}^{-1}(x)) \det(J_{0 \to t}(h_{S_i}^{-1}(x)))^{-1}.  
\end{align*}  
The proof is complete.

\subsection{Technical lemmas} 

We collect in this section the technical lemmas that support proof in this section. 

\begin{lemma}
\label{lemma:B2}
	We assume the conditions of Theorem \ref{thm:DDIM-diversity}. Then the following sum exists and is finite: 
	\begin{align*}
		 - \sum_{i = 1}^{\infty} \int_{\bar \cS_i} f_{x_0}(z) \log \left[ f_{x_0}(z) \det(J_{0 \to t}(z))^{-1} \right] \diff z. 
	\end{align*}
	Furthermore, we can exchange the order of integration and summation, in the sense that 
	\begin{align*}
		\sum_{i = 1}^{\infty} \int \mathbbm{1}_{\bar \cS_i}(z) f_{x_0}(z) \log \left[ f_{x_0}(z) \det(J_{0 \to t}(z))^{-1} \right] \diff z =  \int \sum_{i = 1}^{\infty} \mathbbm{1}_{\bar \cS_i}(z) f_{x_0}(z) \log \left[ f_{x_0}(z) \det(J_{0 \to t}(z))^{-1} \right] \diff z. 
	\end{align*}
\end{lemma}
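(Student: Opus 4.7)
The plan is to reduce everything to a single absolute-integrability estimate and then invoke Fubini--Tonelli. First I would split the integrand as
\begin{align*}
	f_{x_0}(z) \log \big[ f_{x_0}(z) \det(J_{0 \to t}(z))^{-1} \big] = f_{x_0}(z) \log f_{x_0}(z) - f_{x_0}(z) \log \det(J_{0 \to t}(z))
\end{align*}
and control the two pieces separately. For the second piece, Eq.~\eqref{eq:uniform-J} (combined with the argument in Appendix~\ref{sec:proof-lemma:density-exist} that extracts uniform positive-definiteness of $J_{0 \to t}$ on $\RR^d$) gives constants $0 < c_1 \leq c_2 < \infty$ with $\det(J_{0 \to t}(z)) \in [c_1, c_2]$ uniformly in $z$, so that
\begin{align*}
	\int \big| f_{x_0}(z) \log \det(J_{0 \to t}(z)) \big| \diff z \leq \max\{|\log c_1|, |\log c_2|\} \cdot \int f_{x_0}(z) \diff z < \infty.
\end{align*}
For the first piece, the hypothesis of Theorem~\ref{thm:DDIM-diversity} that the differential entropy of $x_0$ exists and is finite gives $\int |f_{x_0}(z) \log f_{x_0}(z)| \diff z < \infty$. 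Adding the two bounds yields
\begin{align*}
	\int \Big| f_{x_0}(z) \log \big[ f_{x_0}(z) \det(J_{0 \to t}(z))^{-1} \big] \Big| \diff z < \infty.
\end{align*}

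With absolute integrability in hand, I would then exploit the partition property $\bar \cS_i \cap \bar \cS_j = \emptyset$ for $i \neq j$ and $\cup_{i = 1}^\infty \bar \cS_i = \RR^d$, which gives the pointwise identity $\sum_{i = 1}^\infty \mathbbm{1}_{\bar \cS_i}(z) = 1$ for every $z \in \RR^d$. Applying Fubini--Tonelli to the signed integrand (equivalently, the dominated convergence theorem to the partial sums $\sum_{i \leq N} \mathbbm{1}_{\bar \cS_i}$, dominated by the integrable envelope just constructed) justifies exchanging the sum and the integral:
\begin{align*}
	\sum_{i = 1}^\infty \int \mathbbm{1}_{\bar \cS_i}(z) f_{x_0}(z) \log \big[ f_{x_0}(z) \det(J_{0 \to t}(z))^{-1} \big] \diff z = \int f_{x_0}(z) \log \big[ f_{x_0}(z) \det(J_{0 \to t}(z))^{-1} \big] \diff z.
\end{align*}
The right-hand side is finite by the envelope bound, which simultaneously gives the existence and finiteness of the sum in the first assertion of the lemma, and the interchange identity stated at the end.

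The only nonroutine ingredient is the uniform two-sided bound on $\det(J_{0 \to t}(z))$. I would verify this by noting that the integrand in the exponential representation of $J_{0 \to t}(x_t)$ from Eq.~\eqref{eq:uniform-J}, namely $I_d - \Sigma_{T - s}^{-1} - \gs e^{-T + s} \Omega_{T - s}(x_s)$, has operator norm bounded uniformly in $(s, x)$: the $\Sigma_{T - s}^{-1}$ term is controlled via $\sigma_{\min}(\Sigma) > 0$, and $\Omega_{T - s}$ is a (weighted) covariance of the vectors $\Sigma_{T - s}^{-1} \mu_{y'}$, hence uniformly bounded by $\sup_{y'} \|\Sigma_{T - s}^{-1} \mu_{y'}\|_2^2$. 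Integrating the trace over $[0, t] \subseteq [0, T]$ and exponentiating yields the desired uniform constants $c_1, c_2$. This is the main obstacle, but it follows directly from the non-degeneracy of $\Sigma$ assumed in Theorem~\ref{thm:DDIM-diversity}; everything else is a straightforward application of Fubini--Tonelli.
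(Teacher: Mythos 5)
Your proof is correct and follows essentially the same approach as the paper's: both establish absolute integrability of $z \mapsto f_{x_0}(z)\log[f_{x_0}(z)\det(J_{0\to t}(z))^{-1}]$ from the uniform two-sided bound on $\det(J_{0\to t})$ (via Eq.~\eqref{eq:uniform-J}) together with the finite-entropy hypothesis on $x_0$, and then exchange sum and integral over the partition $\{\bar\cS_i\}$ by standard Lebesgue theory. You spell out the decomposition and the domination argument more explicitly than the paper, and you also flesh out the justification of the determinant bound that the paper leaves implicit in the reference to Eq.~\eqref{eq:uniform-J}, but these are details rather than a different route.
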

\begin{proof}[Proof of Lemma \ref{lemma:B2}]
	By Eq.~\eqref{eq:uniform-J}, we know that there exist constants $c_1, c_2 > 0$, such that $\det (J_{0 \to t}(z)) \in (c_1, c_2)$ for all $z \in \RR^d$. Using this together with the assumption that the differential entropy of $x_0$ exists and is finite, we conclude that the function $z \mapsto f_{x_0}(z) \log \left[ f_{x_0} (z) \det(J_{0 \to t}(z))^{-1} \right]$ has a finite Lebesgue integral. The desired claims then immediately follow from the properties of Lebesgue integral. 
\end{proof}
\section{Proofs related to the discretized process}

\subsection{Proof of Theorem \ref{thm:DDIM-confidence-dis}}
\label{sec:proof-thm:DDIM-confidence-dis}

\subsubsection*{Proof of the first claim}

For all $k = 0, 1, \cdots, K - 1$, by Eq.~\eqref{eq:X-k+1-X-k}
\begin{align}
\label{eq:dis-ODE-diff}
\begin{split}
    & \langle X_{k + 1}, \mu_y - \mu_{y'} \rangle - \langle X_k, \mu_y - \mu_{y'} \rangle \\
    = &\, \delta_k e^{-(T - t_k)} \Big( (1 + \gs)\|\mu_y\|_2^2 - \gs \sum_{y'' \in \cY} q_{T - t_k} (X_k, y'') \langle \mu_{y''}, \mu_y \rangle - (1 + \gs)\langle \mu_y, \mu_{y'} \rangle \\
    & + \gs \sum_{y'' \in \cY} q_{T - t_k}(X_k, y'') \langle \mu_{y''}, \mu_{y'}\rangle \Big)  \\
    = & \delta_k e^{-(T - t_k)} \Big( \|\mu_y\|_2^2 - \langle \mu_y, \mu_{y'} \rangle + \gs \langle \mu_y - \mu_0, \mu_y - \mu_{y'} \rangle - \gs \sum_{y'' \in \cY} q_{T - t_k}(X_k, y'') \langle \mu_{y''} - \mu_0, \mu_y - \mu_{y'} \rangle  \Big)  \Big) \\
    \geq & \delta_k e^{-(T - t_k)} \Big( \|\mu_y\|_2^2 - \langle \mu_y, \mu_{y'} \rangle + \gs(1 - q_{T - t_k}(X_k, y)) (\|\mu_y - \mu_0\|_2^2 - 3 \varepsilon) \Big).
\end{split}
\end{align}
On the other hand, by definition 
\begin{align*}
    \langle Z_{k + 1}, \mu_y - \mu_{y'} \rangle - \langle Z_k, \mu_y - \mu_{y'}\rangle = \delta_k e^{-(T - t_k)} \left( \|\mu_y\|_2^2 - \langle \mu_y, \mu_{y'} \rangle \right)
\end{align*}
recall that we have assumed $\langle X_0, \mu_y - \mu_{y'}\rangle \geq \langle Z_0, \mu_y - \mu_{y'} \rangle$. 
By induction, we are able to conclude that $\langle X_k, \mu_y - \mu_{y'} \rangle \geq \langle Z_k, \mu_y - \mu_{y'} \rangle$ for all $y' \in \cY$ and $k \in \{0\} \cup [K]$. The first claim of the theorem then immediately follows.

\subsubsection*{Proof of the second claim}

%
%
The proof closely mirrors that of Theorem \ref{thm:DDIM-confidence-quan}.
Similar to the derivation of Eq.~\eqref{eq:last-step-thm-2.5}, we obtain that 
\begin{align*}
	& \langle X_{k + 1} - Z_{k + 1}, \mu_y - \mu_{y'} \rangle - \langle X_k - Z_k, \mu_y - \mu_{y'} \rangle \\
	\geq & \delta_k e^{-T + t_k} \left( \|\mu_y\|_2^2 - \langle \mu_y, \mu_{y'} \rangle + \gs e^{-\Delta / 8}(\|\mu_y - \mu_0\|_2^2 - 3 \varepsilon) \cdot \min \{1 - \cP(X_k, y), \xi_w\} \right), 
\end{align*}
where we recall that $\xi_w = 1 - w_y / (w_y + \min_{y' \neq y} w_{y'})$. 
Now suppose $\langle X_k - Z_k, \mu_y - \mu_{y'} \rangle \in [0, \cU]$ for all $k \in \{0\} \cup [K]$ and $y' \in \cY$, then like the derivation of Eq.~\eqref{eq:p-F-max}, we get
\begin{align*}
	\cP(X_k, y) \leq 1 - \cF(\max_{0 \leq k \leq K} \cP(Z_k, y), \cU) 
\end{align*}
for all $k \in \{0\} \cup [K]$. 
For $\cU$ to be a valid upper bound, we must have 
\begin{align}
\label{eq:C2}
	\cU \geq & \langle X_0 - Z_0, \mu_y - \mu_{y'} \rangle \nonumber \\
	&  + \sum_{k = 0}^{K - 1} \delta_k e^{-T + t_k} \left( \|\mu_y\|_2^2 - \langle \mu_y, \mu_{y'} \rangle + \gs e^{-\Delta / 8}(\|\mu_y - \mu_0\|_2^2 - 3 \varepsilon) \cdot \min \{\cF(\max_{0 \leq k \leq K} \cP(Z_k, y), \cU), \xi_w\} \right) \nonumber \\
	\geq & \langle X_0 - Z_0, \mu_y - \mu_{y'} \rangle  \\
	& + e^{-\Delta_{\max}} (1 - e^{-T})\left( \|\mu_y\|_2^2 - \langle \mu_y, \mu_{y'} \rangle + \gs e^{-\Delta / 8}(\|\mu_y - \mu_0\|_2^2 - 3 \varepsilon) \cdot \min \{\cF(\max_{0 \leq k \leq K} \cP(Z_k, y), \cU), \xi_w\} \right).  \nonumber
\end{align}
For any $\cU$ that does not satisfy Eq.~\eqref{eq:C2}, we know that $\langle X_K - Z_K, \mu_y - \mu_{y'} \rangle \geq \cU$, and 
\begin{align}
\label{eq:C3}
	\cP(X_K, y) \geq \frac{\cP(Z_K, y)}{\cP(Z_K, y) + (1 - \cP(Z_K, y)) \cdot \exp(-\cU)}, 
\end{align}
completing the proof of the first result. 
As for the proof of the convergence rate, we simply set $e^{-\cU} = \gs^{-1} (\log \gs)^2$. For such $\cU$, the left hand side of Eq.~\eqref{eq:C2} is of order $O(\log \gs)$, while the right hand side of Eq.~\eqref{eq:C2} is of order $O(\gs \wedge (\log \gs)^2)$. 
We then conclude that for a large enough $\gs$, Eq.~\eqref{eq:C2} does not hold. Plugging such $\cU$ back into Eq.~\eqref{eq:C3}, we are able to deduce the desired convergence rate.

%
%
%
%

\subsection{Proof of Theorem \ref{thm:DDIM-entropy-dis}}
\label{sec:proof-thm:DDIM-entropy-dis}

For $k \in \{0\} \cup [K - 1]$, we define
\begin{align*}
    F_k(x, \gs) = x + \delta_k \cdot \Big( x - \Sigma_{T - t_k}^{-1} x + e^{-T + t_k} (1 + \gs) \Sigma_{T - t_k}^{-1} \mu_y - \gs e^{-T + t_k} \sum_{y' \in \cY} q_{T - t_k} (x, y') \Sigma_{T - t_k}^{-1} \mu_{y'} \Big). 
\end{align*}
Observe that $X_{k + 1} = F_k(X_k, \gs)$ and $Z_{k + 1} = F_k(Z_k, 0)$. 
We then take the gradient of $F_k$ with respect to the first argument, which gives
\begin{align*}
    \nabla_x F_k(x, \gs) = I_d + \delta_k \Big( I_d - \Sigma_{T - t_k}^{-1} - \gs e^{-T + t_k} \Omega_{T - t_k}(x)  \Big),
\end{align*}
where
\begin{align*}
    & \Omega_{T - t_k}(x) \\
    = & \sum_{y' \in \cY} q_{T - t_k}(x, y') \Sigma_{T - t_k}^{-1} \mu_{y'} \mu_{y'}^{\top} \Sigma_{T - t_k}^{-1} - \Big( \sum_{y' \in \cY} q_{T - t_k}(x, y') \Sigma_{T - t_k}^{-1} \mu_{y'} \Big) \Big( \sum_{y' \in \cY} q_{T - t_k}(x, y') \Sigma_{T - t_k}^{-1} \mu_{y'} \Big)^{\top}. 
\end{align*}
We then conclude that $\nabla_x F_k(x, \gs)  \preceq \nabla_x F_k(x, 0)$ for all $\gs \geq 0$. 
We denote by $\sigma_{\min}(X)$ the minimum eigenvalue of a matrix $X$. 
Observe that 
\begin{align}
\label{eq:sigma-min-Fk}
\begin{split}
   & \sigma_{\min} (\nabla_x F_k(x, \gs)) \geq 1 + \delta_k - \frac{\delta_k}{\sigma_{\min}(\Sigma) \wedge 1 } - \frac{\delta_k \gs \sup_{y' \in \cY} \|\mu_{y'}\|_2^2}{\sigma_{\min}(\Sigma)^2 \wedge 1}, \\
   & \sigma_{\min} (\nabla_x F_k(x, 0)) \geq 1 + \delta_k - \frac{\delta_k}{\sigma_{\min}(\Sigma) \wedge 1 },  
\end{split}
\end{align}
both are strictly positive under the current set of assumptions. 
Observe that $x \mapsto F_k(x, 0)$ is an affine transformation, then it is also bijective,  
while $F_k(x, \gs)$ is not necessarily one-to-one. 
For $x \in \RR^d$ and $\gs \geq 0$, we let $G_k(x, \gs) := F_k(F_{k - 1}( \cdots F_0(x, \gs) \cdots, \gs), \gs)$. Then $X_{k + 1} = G_k(X_0, \gs)$ and $Z_{k + 1} = G_k(Z_0, 0)$. Observe that there exists $A_k \in \RR^{d \times d}$ and $\beta_k \in \RR^d$, such that $G_k(x, 0) = A_k x + \beta_k$.
In addition, one can verify that $A_k$ is non-degenerate. 

In the sequel, we use $f_X$ to represent the probability density function of a random variable $X$. Utilizing a change-of-variable technique, we see that
$\det(A_k) \cdot f_{Z_{k + 1}}(A_k x + \beta_k) = f_{Z_0}(x)$ for all $x \in \RR^d$. We can then express the differential entropy of $Z_{k + 1}$ based on that of $Z_0$: 
\begin{align*}
    \cH_0(k + 1) = - \int f_{Z_{k + 1}}(x) \log f_{Z_{k + 1}}(x) \diff x = \cH_0(0) + \log \det (A_k). 
\end{align*}
In the next lemma, we demonstrate  the existence of probability density function of $X_{k + 1}$ with respect to the Lebesgue measure. 
\begin{lemma}
    \label{lemma:existence}
    We assume the conditions of Theorem \ref{thm:DDIM-entropy-dis}. Then, for all $k \in \{0\} \cup [K]$, $X_k$ has a probability density function with with respect to the Lebesgue measure. 
\end{lemma}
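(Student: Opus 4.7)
I would proceed by induction on $k$. The base case $k = 0$ is exactly the standing assumption that $X_0$ admits a Lebesgue density. For the inductive step, suppose $X_k$ has a density; the goal is to show that $X_{k+1} = F_k(X_k, \gs)$ does as well.

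The key ingredient is the following standard fact: if $T: \RR^d \to \RR^d$ is a $C^1$ map whose Jacobian $\nabla T(x)$ is non-singular for every $x$, and $X$ admits a Lebesgue density, then so does $T(X)$. To apply this with $T = F_k(\cdot, \gs)$, I first note that $F_k(\cdot, \gs)$ is smooth in $x$ because the posterior weights $q_{T - t_k}(x, y')$ are smooth. The step-size hypothesis together with Eq.~\eqref{eq:sigma-min-Fk} yields $\sigma_{\min}(\nabla_x F_k(x, \gs)) > 0$ uniformly in $x$, so the Jacobian is non-degenerate everywhere.

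It remains to verify the fact above. It suffices to show that $F_k(\cdot, \gs)$ pulls Lebesgue null sets back to Lebesgue null sets: given any Borel set $A \subseteq \RR^d$ with Lebesgue measure zero, we must verify that $F_k(\cdot, \gs)^{-1}(A)$ is also null. Once this holds, the inductive hypothesis gives $\Pr(X_{k+1} \in A) = \Pr(X_k \in F_k^{-1}(A)) = 0$, so $X_{k+1}$ is absolutely continuous with respect to Lebesgue measure and the Radon--Nikodym theorem produces the density. To establish the pullback property, the inverse function theorem supplies, for each $x_0 \in \RR^d$, an open neighborhood $U_{x_0}$ on which $F_k(\cdot, \gs)$ is a $C^1$-diffeomorphism onto its image. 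By the Lindel\"of property of $\RR^d$, we extract a countable subcover $\{U_i\}_{i\geq 1}$. On each $U_i$, restricted to a relatively compact open sub-neighborhood if necessary, the diffeomorphism is bi-Lipschitz and therefore preserves Lebesgue null sets in both directions, so $F_k^{-1}(A) = \bigcup_i (F_k|_{U_i})^{-1}(A \cap F_k(U_i))$ is exhibited as a countable union of null sets.

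The main obstacle is the potential non-injectivity of $F_k(\cdot, \gs)$: unlike $F_k(\cdot, 0)$, which is a non-degenerate affine bijection, the guided map depends nonlinearly on $x$ through $q_{T - t_k}(x, y')$ and may fail to be globally one-to-one, so a direct global change-of-variables formula is unavailable. The Lindel\"of-cover plus null-set argument sidesteps this by working with the measure of null sets, which behaves well under countable unions, rather than with Jacobian determinants directly. This strategy closely parallels the one used in Appendix~\ref{sec:proof-lemma:density-exist} for the continuous-time guided process.
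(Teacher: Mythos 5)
Your proof is correct, and it takes a genuinely different route from what the paper intends. The paper declares Lemma~\ref{lemma:existence} to be proved ``similar to that of Lemma~\ref{lemma:density-exist},'' whose argument is: show $\|X_{k+1}\|_2 \geq \alpha_{k+1}\|X_0\|_2 - \gamma_{k+1}$ so that the preimage $\cI_{k+1}(x)$ of any point sits in a bounded ball; apply the inverse mapping theorem to get local diffeomorphisms, then Heine--Borel to cover the ball with \emph{finitely} many such pieces, thereby showing $|\cI_{k+1}(x)| < \infty$; and finally write down the density \emph{explicitly} as $f_{X_{k+1}}(x) = \sum_{z \in \cI_{k+1}(x)} f_{X_0}(z) \det(\nabla G_k(z,\gs))^{-1}$. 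You instead argue inductively one step $F_k$ at a time, invoke the abstract criterion that a $C^1$ local diffeomorphism pulls Lebesgue-null sets back to Lebesgue-null sets (via a Lindel\"of countable cover of local diffeomorphism neighborhoods rather than the finite Heine--Borel cover of a preimage ball), and conclude absolute continuity by Radon--Nikodym without ever exhibiting the density. Your route is cleaner for the \emph{statement} at hand --- it avoids the a priori bound on the preimage, the compactness step, and the composition map $G_k$ --- but it buys only existence. The paper's heavier-handed approach is not an accident: the explicit sum formula with the Jacobian determinant is exactly what the remainder of the proof of Theorem~\ref{thm:DDIM-entropy-dis} uses to compare $\cH(k+1)$ against $\cH'(k+1)$, so the paper would still need to carry out the constructive version in any case. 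If you were to use your lemma to complete that theorem, you would need to re-derive the explicit density expression on the side.
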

The proof of Lemma \ref{lemma:existence} is similar to that of Lemma \ref{lemma:density-exist}, and we skip it for the compactness of presentation. 
We define $X_{k + 1}' = A_k X_0 + \beta_k$, and denote the differential entropy of $X_{k + 1}'$ by $\cH'(k + 1)$. Similarly, we have $\cH'(k + 1) = \cH(0) + \log \det (A_k) \leq \cH_0(0) + \log \det (A_k)$. Therefore, in order to prove $\cH(k + 1) \leq \cH_0(k + 1)$, it suffices to show $\cH(k + 1) \leq \cH'(k + 1)$. 

The remainder proof follows analogously as that of Lemma \ref{lemma:density-exist}. 
Taking the Jacobian matrix of $G_k(x, \gs)$ with respect to the first argument, we get 
\begin{align*}
    \nabla G_k(x_0, \gs) = \nabla F_k(x_k, \gs) \cdot \nabla F_{k - 1}(x_{k - 1}, \gs) \cdots \nabla F_{0}(x_0, \gs),
\end{align*}
where $x_i = G_{i - 1}(x_0, \gs)$. By Eq.~\eqref{eq:sigma-min-Fk}, we know that $\nabla G_k(x_0, \gs)$ is everywhere non-degenerate. 
In addition, by induction we conclude that $\|X_{k + 1}\|_2 \geq \alpha_{k + 1} \|X_0\|_2 - \gamma_{k + 1}$, where $\alpha_{k + 1} \in \RR_{> 0}$ and $\gamma_k \in \RR$. 
We define 
\begin{align*}
    \cI_{k + 1}(X_{k + 1}) := \{X_0 \in \RR^d, G_k(X_0, \gs) = X_{k + 1}\}. 
\end{align*}
Then $\cI_{k + 1}(X_{k + 1}) \subseteq \cB(\alpha_{k + 1}^{-1} (\|X_{k + 1}\|_2 + \gamma_{k + 1}))$. 
By the inverse mapping theorem, we obtain that for all $x \in \RR^d$, there exists an open set $\cS(x)$ that contains $x$, such that $G_k(\cdot, \gs)$ is injective on $\cS(x)$. We denote this injection by $h_{\cS}$. By Heine–Borel theorem, $\cB(\alpha_{k + 1}^{-1} (\|X_{k + 1}\|_2 + \gamma_{k + 1})) $ can be covered by finitely many $\cS(x)$.
Therefore, for all $x \in \RR^d$, we have $f_{X_{k + 1}}(x) = \sum_{z \in \cI_{k + 1}(x)} f_{X_0}(z) \det(\nabla G_k(z, \gs))^{-1}$. 
In addition,  $f_{X_{k + 1}'}(A_k z + \beta_k) = f_{X_0}(z) \cdot \det(A_k)^{-1}$ and $\det(A_k)^{-1} \leq  \det(\nabla G_k(z, \gs))^{-1}$.
By the assumption that the differential entropy of $X_0$ exists and is finite, we may also conclude that the differential entropy of $X_{k + 1}'$ exists and is finite.

When $\alpha_{k + 1}^{-1} (\|X_{k + 1}\|_2 + \gamma_{k + 1}) = R$, we denote by $\mathfrak{S}_R$ the collection of such $\cS(x)$. 
We can construct $\mathfrak{S}_R$ for every positive $R$. 
It is not hard to see that we can choose the coverings appropriately such that for all $m \in \NN_+$, we have $\mathfrak{S}_m \subseteq \mathfrak{S}_{m + 1}$. 
Consider the union of these covering sets: $\mathfrak{S}_{\infty} = \cup_{m = 1}^{\infty} \mathfrak{S}_m = \{\cS_i: i \in \NN_+\}$. 
We define $\bar \cS_i = \cS_i \backslash (\cup_{j = 1}^{i - 1} \cS_j)$, and let $\bar\cZ_i = \{x \in \RR^d: A_k^{-1} (x - \beta_k) \in \bar\cS_i\}$.
Note that $\bar\cS_i \cap \bar \cS_j = \emptyset$ for all $i \neq j$ and $\cup_{i = 1}^{\infty} \bar \cS_i = \RR^d$.
Then, it hold that
\begin{align*}
    f_{X_{k + 1}}(x) = \sum_{z \in \cI_{k + 1}(x)} \sum_{i = 1}^{\infty} \mathbbm{1}\{z \in \bar\cS_i\} f_{X_0}(z) \det (\nabla G_k(z, \gs))^{-1}
\end{align*}
Note that for all $i \in \NN_+$, 
\begin{align*}
     - \int_{\bar \cZ_i} f_{X_{k + 1}'} (w) \log f_{X_{k + 1}'} (w) \diff x = & - \int_{\bar \cS_i} f_{X_0}(z)\log \left[ f_{X_0}(z) \cdot \det(A_k)^{-1} \right] \diff z \\
     \geq & - \int_{\bar \cS_i} f_{X_0}(z) \log \left[ f_{X_0}(z) \cdot \det(\nabla G_k(z, \gs))^{-1} \right] \diff z. 
\end{align*}
Summing both sides of the above equality over $i$, we get
\begin{align*}
    & - \int f_{X_{k + 1}'} (w) \log f_{X_{k + 1}'}(w) \diff w \\
    \geq & - \sum_{i \in \NN_+} \int_{\bar \cS_i} f_{X_0}(z) \log \left[ f_{X_0}(z) \cdot \det(\nabla G_k(z, \gs))^{-1} \right] \diff z. \\
    = & - \sum_{i \in \NN_+} \int_{h_{\cS_i}(\bar \cS_i)} f_{X_0} (h_{\cS_i}^{-1}(x)) \cdot \det(\nabla G_k(h_{\cS_i}^{-1}(x), \gs))^{-1} \log \left[ f_{X_0} (h_{\cS_i}^{-1}(x)) \cdot \det(\nabla G_k(h_{\cS_i}^{-1}(x), \gs))^{-1} \right] \diff x  \\
    \overset{(i)}{=} & - \int \sum_{i \in \NN_+} \mathbbm{1}\{x \in h_{\cS_i}(\bar \cS_i)\}  f_{X_0} (h_{\cS_i}^{-1}(x)) \cdot \det(\nabla G_k(h_{\cS_i}^{-1}(x), \gs))^{-1} \log \left[ f_{X_0} (h_{\cS_i}^{-1}(x)) \cdot \det(\nabla G_k(h_{\cS_i}^{-1}(x), \gs))^{-1} \right] \diff x \\
    \geq & - \int f_{X_{k + 1}} (x) \log f_{X_{k + 1}}(x) \diff x, 
\end{align*}
where to exchange the order of summation and integration in $(i)$ we make use of the assumption that the differential entropy of $X_0$ exists and is finite. 
The proof is complete.



\subsection{Proof of Theorem \ref{thm:DDPM-confidence-dis}}
\label{sec:proof-thm:DDPM-confidence-dis}

\subsubsection*{Proof of the first claim}

Observe that for the guided process, 
\begin{align*}
   &  \langle \bar X_{k + 1}, \mu_y - \mu_{y'} \rangle  \\
   = & (1 - \delta_k)\langle \bar X_k, \mu_y - \mu_{y'} \rangle + 2\delta_k e^{-(T - t_k)} \Big( (1 + \gs)\|\mu_y\|_2^2 - \gs \sum_{y'' \in \cY} q_{T - t_k} (\bar X_k, y'') \langle \mu_{y''}, \mu_y \rangle - (1 + \gs)\langle \mu_y, \mu_{y'} \rangle \\
    & + \gs \sum_{y'' \in \cY} q_{T - t_k}(\bar X_k, y'') \langle \mu_{y''}, \mu_{y'}\rangle \Big)  + \sqrt{2 \delta_k} \langle W_k, \mu_y - \mu_{y'} \rangle \\
    \geq & (1 - \delta_k)\langle \bar X_k, \mu_y - \mu_{y'} \rangle + 2\delta_k e^{-(T - t_k)} \Big( \|\mu_y\|_2^2 - \langle \mu_y, \mu_{y'} \rangle + \gs(1 - q_{T - t_k}(\bar X_k, y)) (\|\mu_y - \mu_0\|_2^2 - 3 \varepsilon) \Big) \\
    &  + \sqrt{2 \delta_k} \langle W_k, \mu_y - \mu_{y'} \rangle. 
\end{align*}
As for the unguided process, we have 
\begin{align*}
     \langle \bar Z_{k + 1}, \mu_y - \mu_{y'} \rangle =  (1 - \delta_k) \langle \bar Z_k, \mu_y - \mu_{y'}\rangle + 2\delta_k e^{-(T - t_k)} (\|\mu_y\|_2^2 - \langle \mu_y, \mu_{y'} \rangle) + \sqrt{2 \delta_k} \langle W_k, \mu_y - \mu_{y'} \rangle. 
\end{align*}
By induction, we know that $\langle \bar X_k, \mu_y - \mu_{y'} \rangle \geq \langle \bar Z_k, \mu_y - \mu_{y'} \rangle$ for all $k \in \{0\} \cup [K]$. 
The first claim then immediately follows.

\subsubsection*{Proof of the second claim}

Similar to the derivation of Eq.~\eqref{eq:C-A17}, we obtain that 
\begin{align*}
	& \langle \bar X_{k + 1} - \bar X_k, \mu_y - \mu_{y'} \rangle \\
	\geq & - \delta_k \langle \bar X_k, \mu_y - \mu_{y'} \rangle + 2 \delta_k e^{-T + t_k} \Big( \|\mu_y\|_2^2 -  \langle \mu_y, \mu_{y'} \rangle + \gs e^{ - \Delta / 8} \min \{1 - \cP(\bar X_k, y), \xi_w\} (\|\mu_y - \mu_0\|_2^2 - 3 \varepsilon) \Big) \\
	& + \sqrt{2\delta_k} \langle W_k, \mu_y - \mu_{y'} \rangle.  
\end{align*}
As for the unguided process, we have 
\begin{align*}
		\langle \bar Z_{k + 1} - \bar Z_k, \mu_y - \mu_{y'} \rangle =  - \delta_k \langle \bar Z_k, \mu_y - \mu_{y'} \rangle + 2 \delta_k e^{-T + t_k} \Big( \|\mu_y\|_2^2 -  \langle \mu_y, \mu_{y'} \rangle\Big)+ \sqrt{2\delta_k} \langle W_k, \mu_y - \mu_{y'} \rangle.
\end{align*}
Taking the difference, we see that 
\begin{align}
\label{eq:C5-new}
\begin{split}
	& \langle \bar X_{k + 1} - \bar Z_{k + 1}, \mu_y - \mu_{y'} \rangle \\
	 \geq & (1 - \delta_k ) \langle \bar X_{k} - \bar Z_{k}, \mu_y - \mu_{y'} \rangle + 2 \delta_k \gs e^{-T + t_k  - \Delta / 8}  \min \{1 - \cP(\bar X_k, y), \xi_w\} (\|\mu_y - \mu_0\|_2^2 - 3 \varepsilon).
\end{split} 
\end{align}
From the above equation as well as our initial assumption we know that $\langle \bar X_k - \bar Z_k, \mu_y - \mu_{y'}\rangle \geq 0$ for all $k \in \{0\} \cup [K]$. Now suppose $\langle \bar X_k - \bar Z_k, \mu_y - \mu_{y'}\rangle \in [0, \bar \cU]$ for all $k \in \{0\} \cup [K]$. Then similar to the derivation of Eq.~\eqref{eq:p-F-max}, we know that 
\begin{align}
\label{eq:C5}
	\cP(\bar X_k, y) \leq 1 - \cF\big( \max_{0 \leq k \leq K} \cP(\bar Z_k, y), \bar \cU \big)
\end{align}
for all $k \in \{0\} \cup [K]$. 
By Eq.~\eqref{eq:C5-new} and \eqref{eq:C5} and induction hypothesis, we get the following lower bound: 
\begin{align}
\label{eq:long-long}
\begin{split}
    & \langle \bar X_{K} - \bar Z_{K}, \mu_y -\mu_{y'} \rangle - e^{-T - \Delta_{\max}} \langle \bar X_0 -  \bar Z_{0}, \mu_y - \mu_{y'} \rangle \\
    \geq & \sum_{k = 0}^{K - 1} 2\gs \delta_k e^{-T + t_k - \Delta / 8}  \min \{\cF\big( \max_{0 \leq k \leq K} \cP(\bar Z_k, y), \bar \cU \big), \xi_w\} (\|\mu_y - \mu_0\|_2^2 - 3 \varepsilon) \cdot \prod_{j = k + 1}^{K - 1} (1 - \delta_j) \\
    \geq & \sum_{k = 0}^{K - 1} 2\gs \delta_k e^{-T + 2 t_{k + 1} - \Delta / 8 }  \min \{\cF\big( \max_{0 \leq k \leq K} \cP(\bar Z_k, y), \bar \cU \big), \xi_w\}  (\|\mu_y - \mu_0\|_2^2 - 3 \varepsilon) \cdot e^{-2T} \\
    \geq &\, \gs e^{-\Delta / 8} (e^{-T} - e^{-3T}) \min \{\cF\big( \max_{0 \leq k \leq K} \cP(\bar Z_k, y), \bar \cU \big), \xi_w\}  (\|\mu_y - \mu_0\|_2^2 - 3 \varepsilon). 
\end{split}
\end{align}
Then for $\bar \cU$ to serve as a valid upper bound, we must have
\begin{align}
\label{eq:C7}
	& \bar\cU - e^{-T - \Delta_{\max}} \langle \bar X_0 -  \bar Z_{0}, \mu_y - \mu_{y'} \rangle \\
	 \geq & \gs e^{-\Delta / 8} (e^{-T} - e^{-3T}) \min \{\cF\big( \max_{0 \leq k \leq K} \cP(\bar Z_k, y), \bar \cU \big), \xi_w\}  (\|\mu_y - \mu_0\|_2^2 - 3 \varepsilon). \nonumber 
\end{align}
Hence, if Eq.~\eqref{eq:C7} is not satisfied, then there exists $k \in \{0\} \cup [K]$ such that $\langle \bar X_k - \bar Z_k, \mu_y - \mu_{y'} \rangle \geq \bar \cU$. 
As a consequence, we have $\langle \bar X_{K} - \bar Z_{K}, \mu_y - \mu_{y'} \rangle \geq \prod_{j = k}^{K - 1}(1 - \delta_j) \bar \cU \geq e^{-2T} \bar \cU$, which further implies that
\begin{align*}
	\cP(\bar X_K, y) \geq \frac{\cP(\bar Z_K, y)}{\cP(\bar Z_K, y) + (1 - \cP(\bar Z_K, y)) \cdot \exp(- e^{-2T} \bar \cU)}. 
\end{align*}
The proof of the first result is complete. The proof of the convergence rate follows analogously as that of the second part of Theorems \ref{thm:DDPM-confidence} and \ref{thm:DDPM-two-clusters}. Here, we skip it for the compactness of presentation.

\subsection{Proofs in Section~\ref{sec:example-curious}}\label{pf:negative_result}
\paragraph{Assumption~\ref{assumption:confidence} does not hold for $\mu_{\rm neg}$}
It suffices to argue for the center component in $\mu_{\rm neg}$. Suppose for contradiction that there exists a vector $\mu_0$ and a positive $\varepsilon$ satisfying
\begin{align*}
\left| \inner{-\mu_0}{\mu - \mu_0} \right| \leq \varepsilon, \quad \left| \inner{-\mu_0}{-\mu - \mu_0} \right| \leq \varepsilon, \quad \norm{\mu_0}_2^2 / 3 \geq \varepsilon.
\end{align*}
Rewriting the first two inequalities, we obtain
\begin{align*}
\left| \inner{\mu_0}{\mu} + \norm{\mu_0}_2^2 \right| \leq \varepsilon \quad \text{and} \quad \left| -\inner{\mu_0}{\mu} + \norm{\mu_0}_2^2 \right| \leq \varepsilon.
\end{align*}
Due to the symmetry, we assume without loss of generality that $\inner{\mu_0}{\mu} \geq 0$. By comparing
\begin{align*}
\inner{\mu_0}{\mu} + \norm{\mu_0}_2^2 \leq \varepsilon \quad \text{with} \quad \norm{\mu_0}_2^2/3 \geq \varepsilon,
\end{align*}
we must have $\mu_0 = 0$ and $\varepsilon = 0$. This contradicts the fact that $\varepsilon$ is positive. Therefore, the first item in Assumption~\ref{assumption:confidence} does not hold.

\paragraph{Proof of Proposition~\ref{prop:phase_change}} We focus on generating the center component ${\sf N}(0, I_d)$. Setting the guidance strength parameter $\gs$ and using the discretized DDIM backward process yield
\begin{align}\label{eq:negative_DDIM_score}
X_{k+1} = X_k - \delta_k \gs e^{-T + t_k} \frac{\exp(e^{-T + t_k} X_k^\top \mu) - \exp(- e^{-T + t_k} X_k^\top \mu)}{\exp(e^{-2T+2t_k} \norm{\mu}_2^2/2) + \exp(e^{-T + t_k} X_k^\top \mu) + \exp(- e^{-T + t_k} X_k^\top \mu)}\mu.
\end{align}
Taking inner product with $\mu$ on both sides of Eq.~\eqref{eq:negative_DDIM_score} gives rise to
\begin{align*}
\inner{X_{k+1}}{\mu} - \inner{X_k}{\mu} = - \delta_k \gs e^{-T + t_k} \frac{\exp(e^{-T + t_k} X_k^\top \mu) - \exp(- e^{-T + t_k} X_k^\top \mu)}{\exp(e^{-2T+2t_k} \norm{\mu}_2^2/2) + \exp(e^{-T + t_k} X_k^\top \mu) + \exp(- e^{-T + t_k} X_k^\top \mu)} \norm{\mu}_2^2.
\end{align*}
We denote $v_k = \inner{X_k}{\mu}$ and cast the last display into
\begin{align}\label{eq:vk_update}
v_{k+1} - v_k = - \delta_k \gs e^{-T + t_k} \frac{\exp(e^{-T + t_k} v_k) - \exp(- e^{-T + t_k} v_k)}{\exp(e^{-2T+2t_k} \norm{\mu}_2^2/2) + \exp(e^{-T + t_k} v_k) + \exp(- e^{-T + t_k} v_k)} \norm{\mu}_2^2.
\end{align}
Examining Eq.~\eqref{eq:vk_update} suggests that $v_k (v_{k+1} - v_k) \leq 0$, which implies that the increment of $v_{k}$ has an opposite sign as $v_k$ itself. We show the following stronger version of Proposition~\ref{prop:phase_change}.
\begin{lemma}\label{lemma:phase_change}
Consider the Gaussian mixture model in Eq.~\eqref{eq:gmm_negative}. There exist positive constants $\gs_0 \leq \frac{1}{\norm{\mu}_2^2 \max \delta_k}$ and $\gs_0'$ that depend on discretization step sizes $\{\delta_k\}_{k=0}^{K-1}$, such that for any $k$ verifying $e^{-T + t_k} \geq 1/2$, it holds that
\begin{enumerate}
\item when $\gs \leq \gs_0$, $v_k$ evolves towards $0$, i.e., $|v_{k+1}| < |v_k|$ if $v_k \neq 0$. Furthermore, for a small $\gamma \in (0,1)$ satisfying 
$$\frac{\gamma (\exp(\|\mu\|^2/2) + 2\exp(|v_k|))^2}{2\exp(\frac{\|\mu\|^2}{8})+4} \leq \delta_k \gs e^{-2T + 2t_k}\|\mu\|_2^2 \leq \frac{(4-2\gamma)\left(2+\exp\big(\frac{\|\mu\|_2^2}{8}\big)\right)^2}{8 + \left(2+\exp\big(\frac{\|\mu\|_2^2}{8}\big)\right)^2},$$
we have $|v_{k+1}| \leq (1 - \gamma)|v_k| $. One can easily verify the existence of such $\gamma$. 

\item when $\gs \geq \gs_0'$, there exists positive $a$ and $b$ dependent on $\gs$, and it holds that
\begin{align*}
|v_{k+1}| > |v_k| & \quad \text{if}\quad |v_k| \in (0, a]; \\
|v_{k+1}| < |v_k| & \quad \text{if} \quad |v_{k}| > b.
\end{align*}
In particular, thresholds $a$ and $b$ increase as $\gs$ increases. 
\end{enumerate}
\end{lemma}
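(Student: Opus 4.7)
The plan is to reduce the statement to a one-dimensional analysis of the scalar recursion in Eq.~\eqref{eq:vk_update}. Writing $\alpha_k := e^{-T+t_k} \in [1/2, 1]$ and
\[ F(v) := \frac{2\sinh(\alpha_k v)}{\exp(\alpha_k^2\|\mu\|_2^2/2) + 2\cosh(\alpha_k v)}, \qquad r_k := \delta_k \gs \alpha_k \|\mu\|_2^2, \]
the update reads $v_{k+1} = v_k - r_k F(v_k) = v_k\bigl(1 - r_k G(v_k)\bigr)$ with $G(v) := F(v)/v$. Since $F$ is odd and strictly positive on $\RR_{>0}$, $G$ is everywhere positive and each step pulls $v_k$ toward zero; the only question is whether the step overshoots. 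The rate condition $|v_{k+1}| \leq (1-\gamma)|v_k|$ is equivalent to the two-sided inequality $\gamma \leq r_k G(v_k) \leq 2-\gamma$. The workhorses throughout are the elementary inequalities $\alpha_k v \leq \sinh(\alpha_k v) \leq \alpha_k v\cosh(\alpha_k v)$ for $v\geq 0$ and $\cosh(\alpha_k v) \leq \exp(\alpha_k |v|) \leq \exp(|v|)$, together with the lower bound $\exp(\alpha_k^2\|\mu\|_2^2/2) \geq \exp(\|\mu\|_2^2/8)$ coming from $\alpha_k \geq 1/2$.

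\textbf{Convergent phase.} For the weak assertion $|v_{k+1}| < |v_k|$ under $\gs \leq \gs_0$, I would use the bound $G(v) \leq \alpha_k$ (from $\sinh x \leq x\cosh x$) to get $r_k G(v_k) \leq \delta_k \gs \alpha_k^2 \|\mu\|_2^2 \leq \delta_k \gs \|\mu\|_2^2 \leq 1 < 2$ whenever $\gs_0 = 1/(\|\mu\|_2^2 \max_k \delta_k)$. For the explicit contraction at rate $(1-\gamma)$, I would combine an upper bound on $G$ (controlling non-overshoot and giving the right-hand constraint on $\delta_k \gs \alpha_k^2 \|\mu\|_2^2$) with the lower bound
\[ G(v) \geq \frac{2\alpha_k}{\exp(\alpha_k^2\|\mu\|_2^2/2) + 2\cosh(\alpha_k v)} \geq \frac{2\alpha_k}{\exp(\|\mu\|_2^2/2) + 2\exp(|v|)}, \]
and translate $\gamma \leq r_k G(v_k) \leq 2-\gamma$ into the advertised envelope on $\delta_k \gs e^{-2T+2t_k} \|\mu\|_2^2$. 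Matching the precise constants $\exp(\|\mu\|_2^2/8)$ and $(2+\exp(\|\mu\|_2^2/8))^2$ and the $v_k$-dependent numerator amounts to routine algebra once these two estimates are in place.

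\textbf{Splitting phase.} For large $\gs$, the mechanism is that $r_k G(v)$ exceeds $2$ on a neighbourhood of the origin (producing overshoot, so $|v_{k+1}| > |v_k|$), while $G(v) \to 0$ as $|v|\to\infty$ restores contraction far from zero. At $v=0$, L'H\^opital gives $G(0) = \frac{2\alpha_k^2}{\exp(\alpha_k^2\|\mu\|_2^2/2) + 2} > 0$, so there exists $\gs_0'$ depending on $\{\delta_k\}$ such that $\gs \geq \gs_0'$ forces $r_k G(0) > 2$; by continuity this persists on an interval $[-a,a]$ with $a>0$. On the other hand, dividing numerator and denominator of $G(v)$ by $\cosh(\alpha_k v)$ gives $G(v) \leq 2\alpha_k \tanh(\alpha_k v)/v = O(1/|v|)$ as $|v|\to\infty$, so there exists $b>0$ past which $r_k G(v) < 2$, yielding $|v_{k+1}| < |v_k|$. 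Monotonicity of $a$ and $b$ in $\gs$ follows because $r_k G(\cdot)$ scales linearly in $\gs$, so the super-level set $\{v : r_k G(v) > 2\}$ grows with $\gs$, pushing both thresholds outward.

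\textbf{Main obstacle.} The principal technical nuisance is verifying cleanly that $a$ and $b$ depend monotonically on $\gs$ and that the constants in the Part~1 envelope line up. The cleanest route is to show that $v \mapsto G(v)$ is strictly decreasing on $(0,\infty)$; this pins down the super- and sub-level sets of $r_k G$ as single intervals and makes the $\gs$-monotonicity transparent. After differentiation and clearing denominators, monotonicity reduces to showing
\[ \alpha_k v\bigl[c + 2\cosh(\alpha_k v)\bigr]\cosh(\alpha_k v) - \sinh(\alpha_k v)\bigl[c + 2\cosh(\alpha_k v)\bigr] - 2v\alpha_k \sinh^2(\alpha_k v) \geq 0, \]
with $c := \exp(\alpha_k^2\|\mu\|_2^2/2)$, which can be handled by grouping terms and invoking $\sinh(x)\leq x\cosh(x)$ together with $\cosh^2 x - \sinh^2 x = 1$.
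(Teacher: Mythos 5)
Your reformulation via $G(v) := F(v)/v$ with the update $v_{k+1} = v_k\bigl(1 - r_k G(v_k)\bigr)$ is a genuinely different and arguably cleaner encoding than the paper's, which instead works with $h(v_k,k) = \tilde h(v_k,k) - 2v_k$ and studies the sign of $\partial h/\partial v_k$. Your condition $|v_{k+1}| \leq (1-\gamma)|v_k| \iff \gamma \leq r_k G(v_k) \leq 2-\gamma$, and the elementary bounds $\frac{2\alpha_k}{c + 2\cosh(\alpha_k v)} \leq G(v) \leq \alpha_k$ with $c = \exp(\alpha_k^2\|\mu\|_2^2/2)$, are correct and do reproduce the convergent-phase claims, including $\gs_0 = 1/(\|\mu\|_2^2\max_k\delta_k)$. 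The observations that $G(0^+) = 2\alpha_k^2/(c+2) > 0$ (giving $a$ by continuity once $\gs$ is large), and $G(v) = O(1/v)$ as $v\to\infty$ (giving $b$), are also sound.

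However, there is a genuine gap in the part you flag as the ``cleanest route'': $G$ is \emph{not} strictly decreasing on $(0,\infty)$ in the regime of interest. With $u=\alpha_k v$, the sign of $G'$ is governed by
\[
N(u) := c(u\cosh u - \sinh u) + \bigl(2u - \sinh 2u\bigr),
\]
whose two summands have opposite signs. Expanding near $u=0$ gives $N(u) = \tfrac{u^3}{3}(c-4) + O(u^5)$, so whenever $c = \exp(\alpha_k^2\|\mu\|_2^2/2) > 4$ --- i.e.\ precisely when the clusters are well-separated, which is when the splitting phase is interesting --- $G$ is strictly \emph{increasing} near the origin and the claimed inequality in your final display does not hold (it also has a sign ambiguity, since a decreasing $G$ requires the numerator of $G'$ to be $\leq 0$, not $\geq 0$). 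The paper circumvents this by not seeking monotonicity of any single function; it instead bounds $\partial h/\partial v_k$ above and below by quantities that are \emph{unimodal} in $s(v_k,k) = 2\cosh(\alpha_k v_k)$, extracting two thresholds $s_0 < s_1$ from these bounds and converting them into $a$ and $b$.

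Your approach can still be completed without the monotonicity claim: continuity of $G$ together with $G(0^+)>0$ and $G(v)\to 0$ already yields, for $\gs$ large, a maximal initial interval $(0,a]$ where $r_k G > 2$ and a terminal ray $(b,\infty)$ where $r_k G < 2$; and since the super-level set $\{v: G(v) > 2/r_k\}$ grows and the sub-level set $\{v: G(v) < 2/r_k\}$ shrinks as $\gs$ (hence $r_k$) increases, both $a$ and $b$ increase in $\gs$. But you should drop the claim that $G$ is monotone and the accompanying inequality, as both are false for large $\|\mu\|_2$, and instead argue directly from the level-set picture as above or mimic the paper's unimodal-derivative bounds.
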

\begin{proof}
Observe that the right-hand side of Eq.~\eqref{eq:vk_update} as a function of $v_k$ is symmetric about $0$. Therefore, it is enough to consider $v_k \geq 0$. 
We study the solution of the equation
\begin{align}\label{eq:stable_equation}
2v_k = \delta_k \gs e^{-T + t_k} \frac{\exp(e^{-T + t_k} v_k) - \exp(- e^{-T + t_k} v_k)}{\exp(e^{-2T+2t_k} \norm{\mu}_2^2/2) + \exp(e^{-T + t_k} v_k) + \exp(- e^{-T + t_k} v_k)} \norm{\mu}_2^2.
\end{align}
Intuitively, the solution of Eq.~\eqref{eq:stable_equation} implies that for such $v_k$, after one iteration, it holds that $v_{k+1} = -v_k$. We denote
\begin{align}\label{eq:h_vk_k}
h(v_k, k) = \delta_k \gs e^{-T + t_k} \frac{\exp(e^{-T + t_k} v_k) - \exp(- e^{-T + t_k} v_k)}{\exp(e^{-2T+2t_k} \norm{\mu}_2^2/2) + \exp(e^{-T + t_k} v_k) + \exp(- e^{-T + t_k} v_k)} \norm{\mu}_2^2 - 2v_k.
\end{align}
To prove the lemma, below we will establish the following dichotomy for appropriate $\gs_0$ and $\gs_0'$: 1) when $\gs < \gs_0$, $v_k = 0$ is the only solution to $h(v_k, k) = 0$; 2) when $\gs > \gs_0'$, $h(v_k, k) = 0$ has multiple solutions.

\paragraph{Proof of the first claim} Taking the derivative of $h(v_k, k)$ with respect to $v_k$ gives
\begin{align}\label{eq:h_derivative}
\frac{\partial h}{\partial v_k} (v_k, k) = \delta_k \gs e^{-2T+2t_k} \norm{\mu}_2^2 \frac{\exp(e^{-2T+2t_k} \norm{\mu}_2^2 /2)[\exp(e^{-T + t_k} v_k) + \exp(-e^{-T + t_k} v_k)] + 4}{[\exp(e^{-2T+2t_k} \norm{\mu}_2^2/2) + \exp(e^{-T + t_k} v_k) + \exp(- e^{-T + t_k} v_k)]^2} - 2.
\end{align}
We then choose $\gs_0$ small enough, such that $\delta_k \gs e^{-2T+2t_k}\norm{\mu}_2^2 \leq 1$ for all $\gs \leq \gs_0$, which allows us to set $\gs_0 \leq \frac{1}{\norm{\mu}_2^2 \max \delta_k}$. In this case, Eq.~\eqref{eq:h_derivative} is always negative for any $v_k$ and $k$. To see this, note that
\begin{align*}
\frac{\partial h}{\partial v_k} (v_k, k) & \leq \frac{\exp(e^{-2T+2t_k}\norm{\mu}_2^2/2)[\exp(e^{-T + t_k} v_k) + \exp(-e^{-T + t_k} v_k)] + 4}{[\exp(e^{-2T+2t_k} \norm{\mu}_2^2/2) + \exp(e^{-T + t_k} v_k) + \exp(- e^{-T + t_k} v_k)]^2} - 2 \\
& = - \frac{2\exp(e^{-2T+2t_k} \norm{\mu}_2^2) + 2\exp(2e^{-T + t_k} v_k) + 2\exp(-2e^{-T + t_k} v_k)}{[\exp(e^{-2T+2t_k} \norm{\mu}_2^2/2) + \exp(e^{-T + t_k} v_k) + \exp(- e^{-T + t_k} v_k)]^2} \\
& \quad - \frac{3\exp(e^{-2T+2t_k}\norm{\mu}_2^2/2)[\exp(e^{-T + t_k} v_k) + \exp(-e^{-T + t_k} v_k)]}{[\exp(e^{-2T+2t_k} \norm{\mu}_2^2/2) + \exp(e^{-T + t_k} v_k) + \exp(- e^{-T + t_k} v_k)]^2} \\
& < 0.
\end{align*}
As a consequence, $h(v_k, k)$ is strictly decreasing for $v_k \geq 0$ as demonstrated in the left panel of Figure~\ref{fig:negative_result}. It is straightforward to check that $h(0, k) = 0$. Therefore, $0$ is the only solution to $h(v_k, k) = 0$. 
We define $\tilde h(v_k, k) = h(v_k, k) + 2v_k$. Then $\tilde h(v_k, k) < 2v_k$ for all $v_k > 0$. By symmetry, we have $\tilde h(v_k, k) < 2|v_k|$ for all $v_k \neq 0$.
Observe that $\tilde h(v_k, k) > 0$ for $v_k > 0$, and $\tilde h(v_k, k) < 0$ for all $v_k < 0$. 
Therefore, rewriting Eq.~\eqref{eq:vk_update}, we get
\begin{align*}
|v_{k+1}| = |\tilde h(v_k, k) - v_k| < |v_k| \quad \text{for} \quad v_k \neq 0.
\end{align*}
Setting $\gs_0 = \frac{1}{\norm{\mu}_2^2 \max \delta_k}$ proves the claim that $|v_{k+1}| < |v_k|$ if $v_k \neq 0$. 

We can further show that when $\delta_k \gs e^{-2T + 2t_k}\|\mu\|_2^2$ is sufficiently small, we can guarantee a strict magnitude shrinkage of $|v_k|$, i.e., $|v_{k+1}| \leq (1-\gamma) |v_k|$ for some small $\gamma \in (0, 1)$. To see this, we aim to show a sandwich inequality when $v_k$ is non-negative:
\begin{align*}
\gamma v_k \leq \tilde h(v_k, k) \leq (2-\gamma) v_k.
\end{align*}
%
Accordingly, we denote
\begin{align}\label{eq:h_vk_k_gamma}
\begin{split}
&h_{2- \gamma}(v_k, k) = \delta_k \gs e^{-T + t_k} \frac{\exp(e^{-T + t_k} v_k) - \exp(- e^{-T + t_k} v_k)}{\exp(e^{-2T+2t_k} \norm{\mu}_2^2/2) + \exp(e^{-T + t_k} v_k) + \exp(- e^{-T + t_k} v_k)} \norm{\mu}_2^2 - (2- \gamma)v_k,\\
&h_\gamma(v_k, k) = \delta_k \gs e^{-T + t_k} \frac{\exp(e^{-T + t_k} v_k) - \exp(- e^{-T + t_k} v_k)}{\exp(e^{-2T+2t_k} \norm{\mu}_2^2/2) + \exp(e^{-T + t_k} v_k) + \exp(- e^{-T + t_k} v_k)} \norm{\mu}_2^2 -  \gamma v_k. 
\end{split}
\end{align}
It is obvious that $v_k = 0$ is a zero point of the two functions stated in Eq.~\eqref{eq:h_vk_k_gamma}. To show that $|v_{k+1}| \leq (1-\gamma) |v_k|$, since $(v_{k+1} - v_k)v_k \leq 0$, we only need to show that for a sufficiently small $\delta_k \gs e^{-2T + 2t_k} \|\mu\|_2^2$, $h_{2-\gamma} \leq 0$ and $h_{\gamma}\geq 0$ for all $v_k \geq 0$. 
We adopt the notations $a_k = \exp(e^{-2T+2t_k} \norm{\mu}_2^2/2), b_k = \exp(e^{-T + t_k} v_k)$ and $m_k = \delta_k \gs e^{-2T + 2t_k}\|\mu\|_2^2$. To ensure $h_{2-\gamma}(v_k,k)\leq 0$, it suffices to find a sufficiently small $m_k$, such that \begin{align}
\label{eq:C15}
    m_k\cdot\frac{a_k(b_k + 1/b_k)+4}{(a_k + b_k + 1/b_k)^2} \leq 2-\gamma.
\end{align}
Since $xy \leq (x+y)^2 / 2$, we have the following inequality: \begin{align*}
    m_k\cdot\frac{a_k(b_k + 1/b_k)+4}{(a_k + b_k + 1/b_k)^2} &\leq m_k \cdot \frac{(a_k+b_k + 1/b_k)^2 / 2 +4}{(a_k + b_k + 1/b_k)^2}.
\end{align*}
Therefore, to show Eq.~\eqref{eq:C15}, we only need to show 
\begin{align*}
    m_k\bigg(\frac{1}{2} + \frac{4}{(a_k + b_k + 1/b_k)^2} \bigg) \leq (2- \gamma).
\end{align*}
Since $b_k + 1/b_k \geq 2$ and 
$a_k \geq \exp\big(\frac{\|\mu\|_2^2}{8}\big)$,
it suffices to ensure $$m_k \leq \frac{(4-2\gamma)\left(2+\exp\big(\frac{\|\mu\|_2^2}{8}\big)\right)^2}{8 + \left(2+\exp\big(\frac{\|\mu\|_2^2}{8}\big)\right)^2}. $$ 
On the other hand, in order to show $h_{\gamma}(v_k,k)\geq 0$ for all $v_k \geq 0$, we only need to prove
\begin{align}
\label{eq:15.5}
    m_k\cdot\frac{a_k(b_k + 1/b_k)+4}{(a_k + b_k + 1/b_k)^2} \geq \gamma.
\end{align}
Note that $a_k(b_k + 1/b_k) \geq 2\exp(\|\mu\|^2/8)$ and $(a_k + b_k + 1/b_k)^2 \leq \{\exp(\|\mu\|^2/2) + 2\exp(|v_k|)\}^2$, then to establish Eq.~\eqref{eq:15.5}, it suffices to have 
$$
m_k \geq \frac{\gamma (\exp(\|\mu\|^2/2) + 2\exp(|v_k|))^2}{2\exp(\frac{\|\mu\|^2}{8})+4}.
$$
The proof of the first claim is complete. 

\paragraph{Proof the second claim} We denote $s(v_k, k) = \exp(e^{-T + t_k} v_k) + \exp(-e^{-T + t_k} v_k)$, which is naturally lower bounded by $2$. Revisiting Eq.~\eqref{eq:h_derivative}, we have
\begin{align}
\label{eq:C16}
\begin{split}
\frac{\partial h}{\partial v_k} (v_k, k) & = \delta_k \gs e^{-2T+2t_k} \norm{\mu}_2^2 \frac{\exp(e^{-2T+2t_k} \norm{\mu}_2^2 /2)s(v_k, k) + 4}{[\exp(e^{-2T+2t_k} \norm{\mu}_2^2/2) + s(v_k, k)]^2} - 2 \\
& \geq \frac{1}{4} \delta_k \gs \norm{\mu}_2^2 \frac{\exp(\norm{\mu}_2^2 / 8) s(v_k, k)}{\exp(\norm{\mu}_2^2) + 2s(v_k, k) \exp(\norm{\mu}_2^2/2) + s(v_k, k)^2} - 2 \\
& \ge \frac{1}{4} (\min_{0 \leq k \leq K - 1} \delta_k) \gs \norm{\mu}_2^2 \frac{\exp(\norm{\mu}_2^2 / 8)}{\frac{\exp(\norm{\mu}_2^2)}{s(v_k, k)} + s(v_k, k) + 2 \exp(\norm{\mu}_2^2/2)} - 2.
\end{split}
\end{align}
When $\delta_k\neq 0$ for all $k \in \{0\} \cup [K - 1]$, the lower bound in the display above first increases then decreases as $s(v_k, k)$ increases from $0$ to $\infty$. 
We take $\gs_0'$ sufficiently large, such that for any $\gs > \gs_0'$, there exists $s_0 \geq 2$ dependent on $(\gs, \|\mu\|, \{\delta_k\}_{k \in \{0\} \cup [K - 1]})$,  such that $\frac{\partial h}{\partial v_k} (v_k, k) > 0$ for all $2 \leq s(v_k, k) \leq s_0$ and all $k \in \{0, 1, \cdots, K - 1\}$. 
In fact, we can choose $\gs_0'$ large enough so that $\frac{\partial h}{\partial v_k}(v_k, k) |_{v_k = 2} >  0$. 
In this case, we may set $s_0$ to be the larger solution to the following quadratic equation (with the variable being $s$):
\begin{align}
\label{eq:quadratic}
\frac{1}{4} (\min_{0 \leq k \leq K - 1} \delta_k) \gs \norm{\mu}_2^2 \frac{\exp(\norm{\mu}_2^2 / 8)}{\frac{\exp(\norm{\mu}_2^2)}{s} + s + 2 \exp(\norm{\mu}_2^2/2)} - 2 = 0.
\end{align}
One can verify that in order to have $\frac{\partial h}{\partial v_k}(v_k, k) |_{v_k = 2} >  0$, it suffices to choose
\begin{align}\label{eq:gs_prime_1}
\gs_0' \geq \frac{16 + 16\exp(\norm{\mu}_2^2 / 2) + 8 \exp(\norm{\mu}_2^2)}{\norm{\mu}_2^2 \exp(\norm{\mu}_2^2 / 8) \min_{0 \leq k \leq K - 1} \delta_k}. 
\end{align}
The larger solution to Eq.~\eqref{eq:quadratic} takes the form: 
\begin{align*}
s_0 & = \frac{1}{16} (\min_{0 \leq k \leq K - 1} \delta_k) \gs \norm{\mu}_2^2 \exp(\norm{\mu}_2^2 / 8) - \exp(\norm{\mu}_2^2 / 2) \\ 
& \quad + \frac{\sqrt{\left[\frac{1}{8} (\min_{0 \leq k \leq K - 1} \delta_k) \gs \norm{\mu}_2^2 \exp(\norm{\mu}_2^2 / 8) - 2\exp(\norm{\mu}_2^2 / 2)\right]^2 - 4\exp(\norm{\mu}_2^2)}}{2}.
\end{align*}
To ensure $s_0 \geq 2$, we may choose $\eta_0'$ satisfying
\begin{align*}
\frac{1}{16} (\min_{0 \leq k \leq K - 1} \delta_k) \gs_0' \norm{\mu}_2^2 \exp(\norm{\mu}_2^2 / 8) - \exp(\norm{\mu}_2^2 / 2) \geq 2,
\end{align*}
which gives rise to
\begin{align}
\label{eq:C20}
\gs_0' \geq \frac{32 + 16\exp(\norm{\mu}_2^2/2)}{\norm{\mu}_2^2 \exp(\norm{\mu}_2^2 / 8) \min_{0 \leq k \leq K - 1} \delta_k}.
\end{align}
Combining  Eq.~\eqref{eq:gs_prime_1} with Eq.~\eqref{eq:C20} leads to
\begin{align*}
\gs_0' \geq \frac{16 + 16\exp(\norm{\mu}_2^2/2) + \max\{16, 8\exp(\norm{\mu}_2^2)\}}{\norm{\mu}_2^2 \exp(\norm{\mu}_2^2 / 8) \min_{0 \leq k \leq K - 1} \delta_k}.
\end{align*}
We observe that $\gs_0' \geq \gs_0$ (recall $\gs_0 = \frac{1}{\norm{\mu}_2^2 \max \delta_k}$), and also $s_0$ increases linearly as the guidance strength $\gs$ increases.

Similar to the derivation of Eq.~\eqref{eq:C16}, we get
\begin{align*}
\frac{\partial h}{\partial v_k} (v_k, k) & \leq (\max_{0 \leq k \leq K - 1} \delta_k) \gs \norm{\mu}_2^2 \frac{\exp(\norm{\mu}_2^2 / 2) s(v_k, k) + 4}{\exp(\norm{\mu}_2^2/4) + 2s(v_k, k) \exp(\norm{\mu}_2^2/8) + s(v_k, k)^2} - 2.
\end{align*}
For a sufficiently large $s_1$, it holds that $\frac{\partial h}{\partial v_k} (v_k, k) < 0$ whenever $s(v_k, k) > s_1$. Indeed, we can solve for $s_1$ explicitly as
\begin{align*}
s_1 & = \frac{1}{4} (\max_{0 \leq k \leq K - 1} \delta_k) \gs \norm{\mu}_2^2 \exp(\norm{\mu}_2^2 / 2) - \exp(\norm{\mu}_2^2 / 8) \\
& \quad + \frac{\sqrt{\left[\frac{1}{2} (\max \delta_k) \gs \norm{\mu}_2^2 \exp(\norm{\mu}_2^2 / 2) - 2\exp(\norm{\mu}_2^2 / 8)\right]^2 - 4\exp(\norm{\mu}_2^2 / 4) + 8(\max \delta_k) \gs \norm{\mu}_2^2}}{2}.
\end{align*}
Again $s_1$ increases as $\gs$ increases, and we can ensure $s_1 \geq 2$ by choosing sufficiently large $\gs_0'$.
In fact, we only require
$$\gs_0' \geq \frac{16 + 16\exp(\norm{\mu}_2^2 / 8)}{\norm{\mu}_2^2 \exp(\norm{\mu}_2^2 / 2) \max \delta_k}.$$

Given $s_0$ and $s_1$, we solve for a constant $a$ so that $s(v_k, k) \leq s_0$ when $v_k \leq a$ for all $k$. This is plausible since  by assumption $e^{-T + t_k}$ takes value inside the interval $[1/2, 1]$. We also solve for $b'$ so that $s(v_k, k) \geq s_0$ when $v_k \geq b'$ for all $k$. 
Checking the definition of $s(v_k, k)$, 
we conclude that we can choose $a, b'$ appropriately, such that both of them increase as $s_0$ and $s_1$ increase, respectively. 
Recall that both $s_0$ and $s_1$ are increasing functions of $\gs$.
Therefore, we deduce that we can find $a$ and $b'$ that satisfy all the above desiderata. Furthermore, both of them get larger as we increase the guidance strength $\gs$.

To summarize, we conclude that for any $\gs \geq \gs_0'$, $h(v_k, k)$ is strictly increasing for all $k$ when $v_k \in [0, a]$, and strictly decreasing for all $k$ when $v_k \geq b'$. Since $h(v_k, k)$ is continuous in $v_k$ and $h(\infty, k) < 0$, there exists $b > 0$ such that $h(v_k, k) < 0$ when $v_k \geq b$ for all $k$. Hence, we have established that for all possible $k$, it holds that $h(v_k, k) > 0$ for $v_k \in (0, a]$ and $h(v_k, k) < 0$ for $v_k > b$. An illustration of the $h(v_k, k)$ curve can be found in the right panel of Figure~\ref{fig:negative_result}. 
Next, we apply the same argument that we used to derive the first claim, and deduce that
\begin{align*}
|v_{k+1}| > |v_k| & \quad \text{if}\quad |v_k| \leq a; \\
|v_{k+1}| < |v_k| & \quad \text{if} \quad |v_{k}| \geq b.
\end{align*}
We skip the proof details for the equations above to avoid redundancy. 
The second claim is verified and thus the proof is complete.
\end{proof}

\begin{figure}[h]
\centering
\includegraphics[width = 0.7\textwidth]{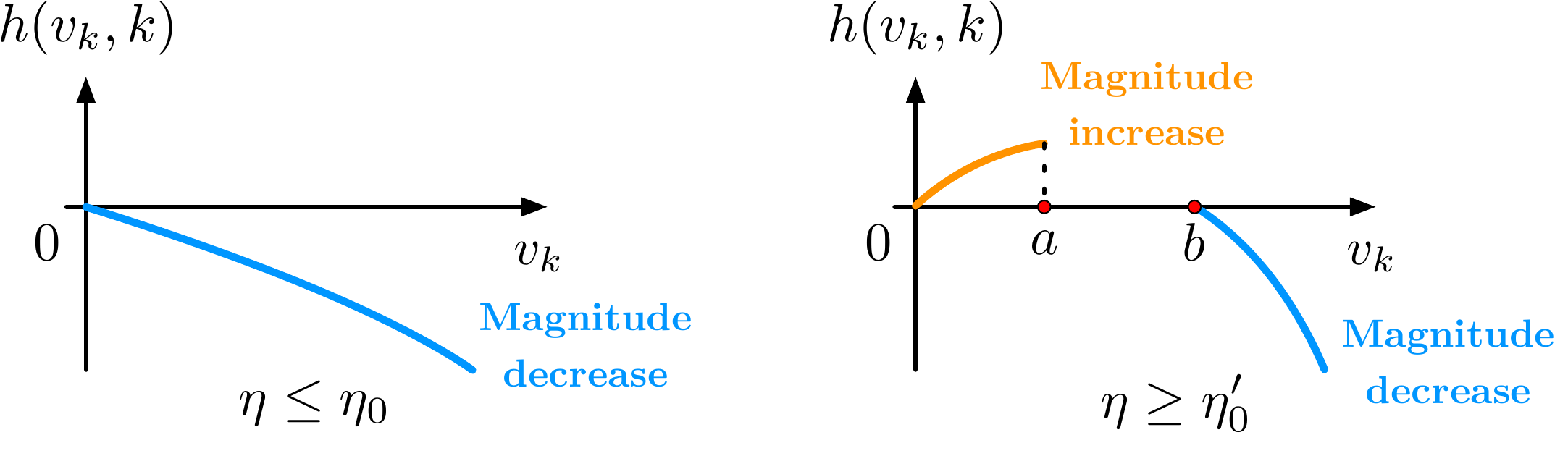}
\caption{Illustration of the behaviors of $h(v_k, k)$ when constrained to the positive real line, under different ranges of guidance strength $\gs$. The left panel corresponds to a small strength $\gs < \gs_0$. In this case, $h(v_k, k)$ is negative and decreasing for all $v_k \geq 0$. In contrast, the right panel corresponds to a large strength $\gs > \gs_0'$, where $h(v_k, k)$ is increasing on $[0, a]$ and decreasing on $[b, \infty)$.}
\label{fig:negative_result}
\end{figure}

\section{Additional numerical experiments}\label{sec:additional_exp}
We collect in this section outcomes from additional numerical experiments.
We first consider discretized samplers, and verify the theoretical results in Section \ref{sec:example-curious}.
Specifically, Figures \ref{fig:3d-align-ddim} and \ref{fig:2d-align-ddim}  demonstrate the behavior of DDIM in 2D/ 3D symmetric 3-component GMMs, Figures \ref{fig:3d-align-ddpm} and \ref{fig:2d-align-ddpm} display the corresponding behavior of DDPM.   
As our theory (Proposition \ref{prop:phase_change}) suggests, when the guidance strength is enormously large, the middle component splits into two clusters. Such phenomenon is not limited to symmetric GMMs: as shown by Figures \ref{fig:2d-nonalign-ddpm} and \ref{fig:3d-nonalign-ddim}, for a large enough guidance strength, the middle component becomes distorted under diffusion guidance even in the context of a non-symmetric GMM. 

We then switch to continuous-time samplers, and the goal is in turn to justify the theoretical implications listed in Section \ref{sec:pre-confidence}.
More precisely, in Figure \ref{fig:3d-ortho-ddim} 
we visualize the effect of diffusion guidance  on a 3-component symmetric GMM in $\RR^3$. 
This can be regarded as an analogue of Figure \ref{fig:three-components} in the 3D setting. 
We further confirm our theoretical results by Figure \ref{fig:prob-DP-2}, which demonstrates how classification confidence and differential entropy evolve as guidance strength $\gs$ increases. 
\begin{figure}[h]
\centering
\includegraphics[width = 1\textwidth]{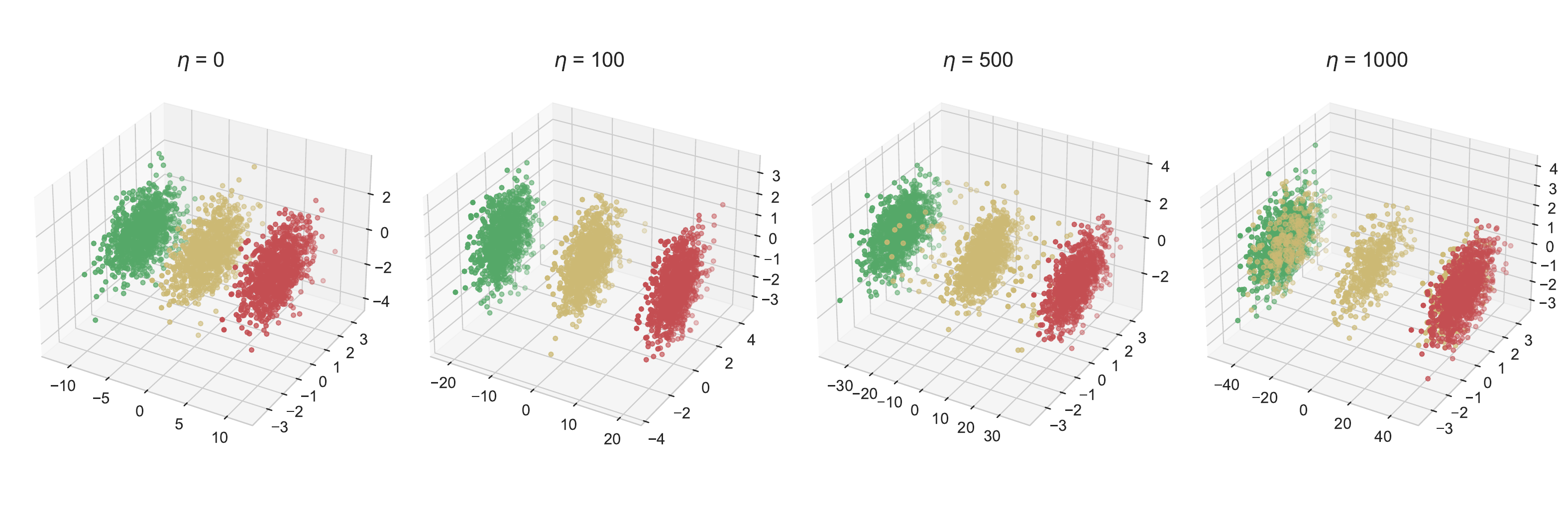}
\caption{
Illustration of the effect of guidance on a discretized DDIM sampler. 
For this experiment, we set $p_{\ast} = \frac{1}{3} \normal((0,0,0),I_3) + \frac{1}{3} \normal\big((0,\sqrt{3},0), I_3\big)+ \frac{1}{3} \normal\big((0,-\sqrt{3},0), I_3\big)$, $T = 10$, and $\delta_k = 0.01$ for all possible $k$.
From the plot, we see that the middle component splits with a sufficiently large $\gs$. 
To summarize, the numerical observations corroborate our theory. 
}
\label{fig:3d-align-ddim}
\end{figure}
\begin{figure}
\centering
\includegraphics[width = 1\textwidth]{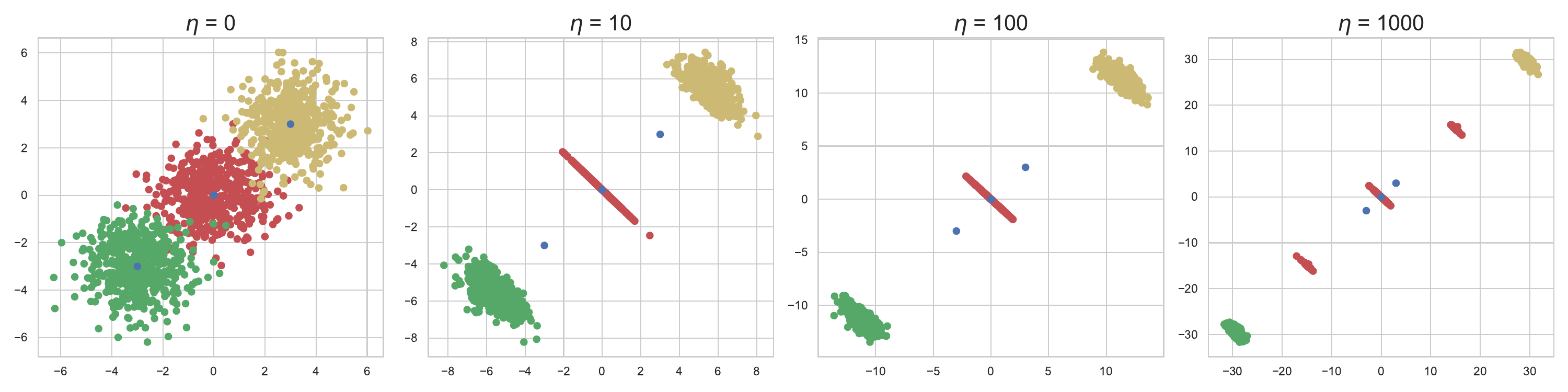}
\caption{
Illustration of the effect of guidance on a discretized DDIM sampler. 
For this experiment, we set $p_{\ast} = \frac{1}{3} \normal((0,0),I_2) + \frac{1}{3} \normal\big((3,3), I_2\big)+ \frac{1}{3} \normal\big((-3,-3), I_2\big)$, $T = 10$, and $\delta_k = 0.01$ for all possible $k$.
The same splitting phenomenon is observed with a sufficiently large $\gs$.}

\label{fig:2d-align-ddim}
\end{figure}
\begin{figure}[h]
\centering
\includegraphics[width = 1\textwidth]{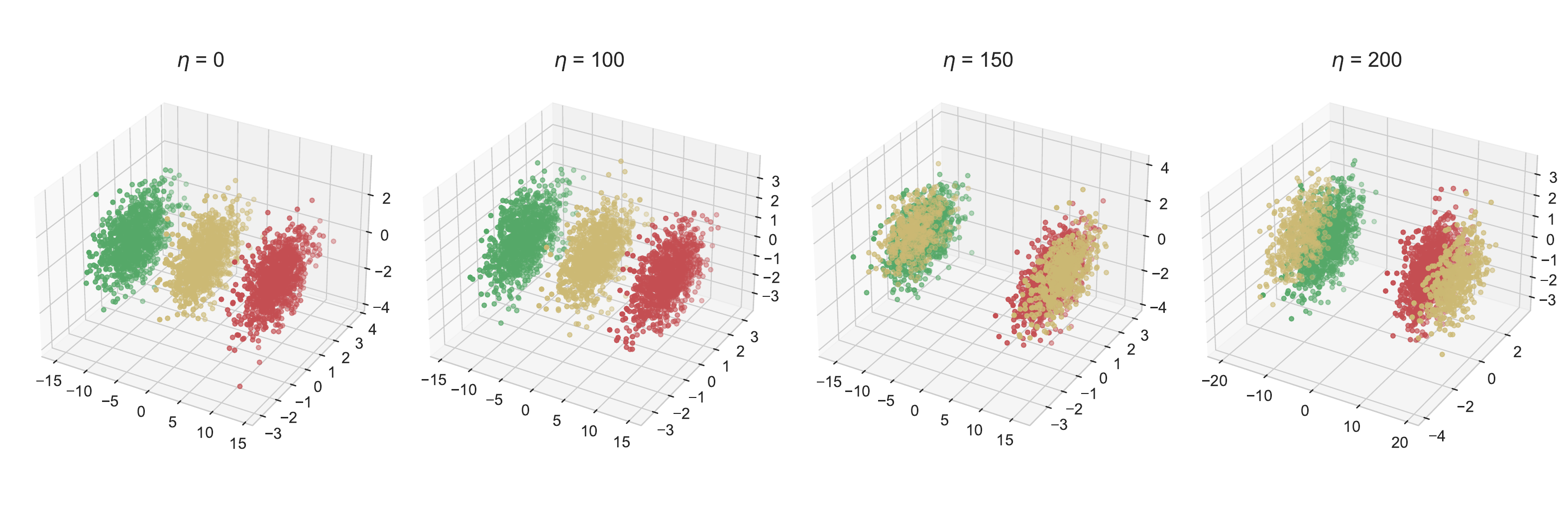}
\caption{Illustration of the effect of guidance  on a discretized DDPM sampler. The experiment setup is the same as that of Figure \ref{fig:3d-align-ddim}. }
\label{fig:3d-align-ddpm}
\end{figure}

\begin{figure}[h]
\centering
\includegraphics[width = 1\textwidth]{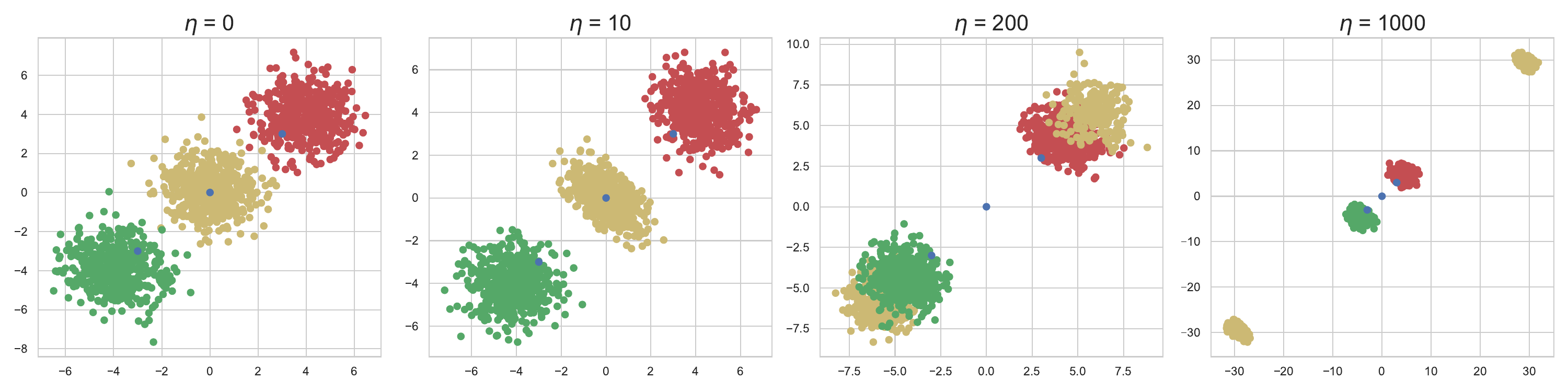}
\caption{Illustration of the effect of guidance  on a discretized DDPM sampler. The experiment setup is the same as that of Figure \ref{fig:2d-align-ddim}. }
\label{fig:2d-align-ddpm}
\end{figure}

\begin{figure}[h]
\centering
\includegraphics[width = 1\textwidth]{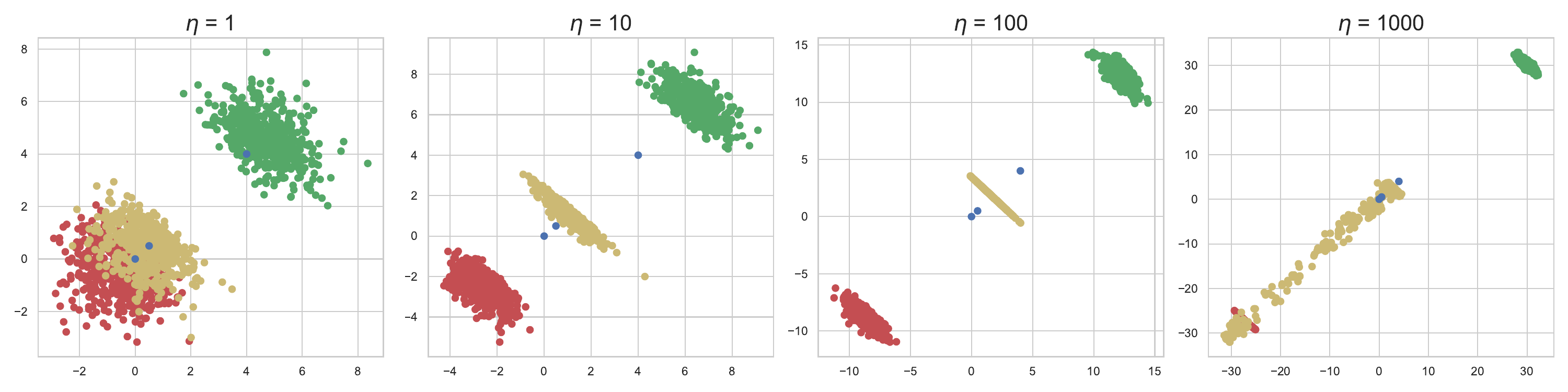}
\caption{Illustration of the effect of guidance  on a discretized DDPM sampler. Here, $p_{\ast} = \frac{1}{3} \normal((0,0),I_2) + \frac{1}{3} \normal\big((0.5,0,5), I_2\big)+ \frac{1}{3} \normal\big((4,4), I_2\big)$, $T = 10$, and $\delta_k = 0.01$ for all possible $k$. In this asymmetric GMM, the center component penetrates the left side component (the left component is colored in red, and without any  guidance is close to the center component) under large enough guidance.}
\label{fig:2d-nonalign-ddpm}
\end{figure}
\begin{figure}[h]
\centering
\includegraphics[width = 1.3\textwidth]{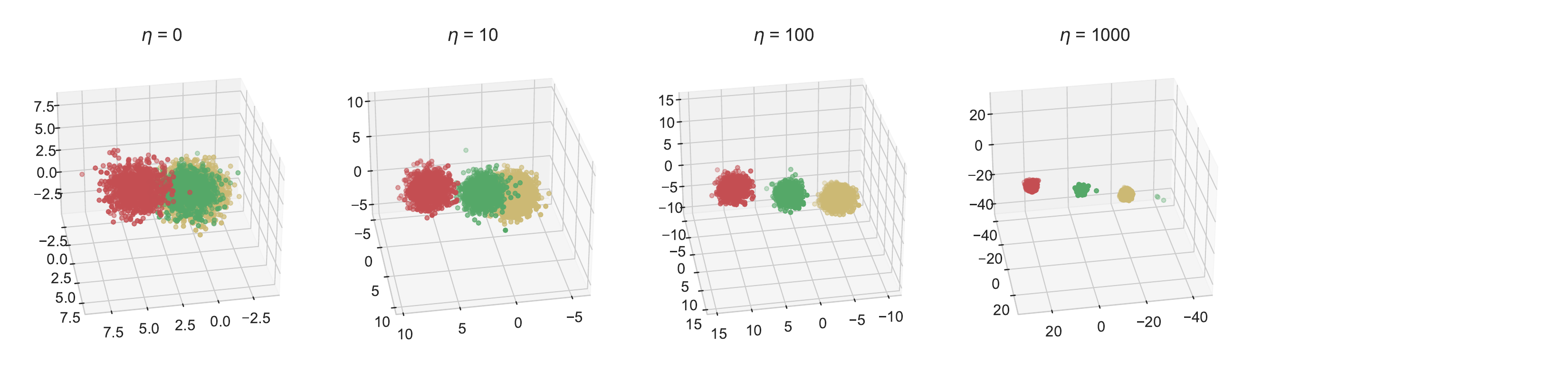}
\caption{Illustration of the effect of guidance  on a discretized DDIM sampler. Here, $p_{\ast} = \frac{1}{3} \normal((0,0,0),I_3) + \frac{1}{3} \normal\big((0.5,0.5,0), I_3\big)+ \frac{1}{3} \normal\big((5,5,0), I_3\big)$, $T = 10$, and $\delta_k = 0.01$ for all possible $k$. The observation is similar to Figure~\ref{fig:2d-align-ddpm} for the 2D case. The center component splits into two components under sufficiently large guidance.}
\label{fig:3d-nonalign-ddim}
\end{figure}

\begin{figure}[h]
\centering
\includegraphics[width = 1\textwidth]{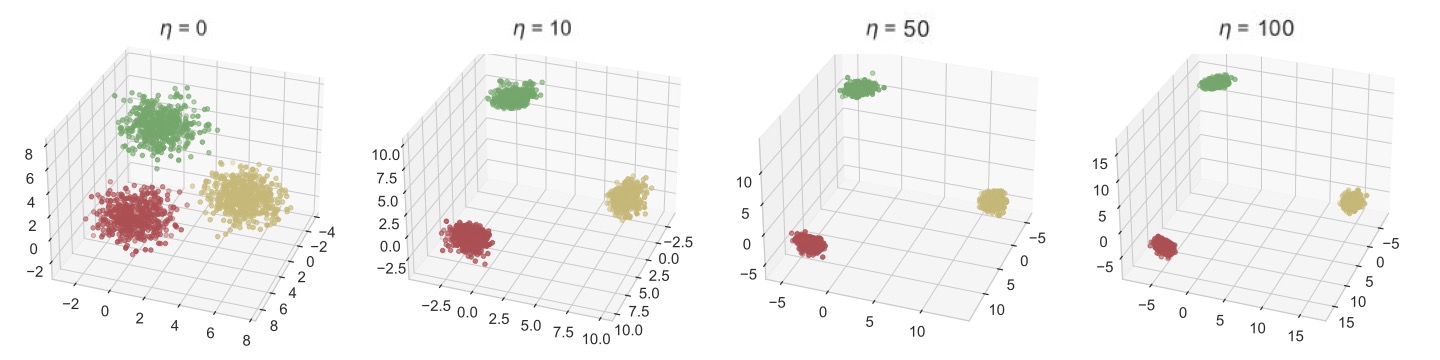}
\caption{Illustration of the effect of guidance  on a continuous-time DDIM sampler. Here, $p_{\ast} = \frac{1}{3} \normal((1,0,0),I_3) + \frac{1}{3} \normal\big((0,1,0), I_3\big)+ \frac{1}{3} \normal\big((0,0,1), I_3\big)$. This setting satisfies Assumption~\ref{assumption:confidence}, and we observe that the components become more separated as we increase the guidance strength. }
\label{fig:3d-ortho-ddim}
\end{figure}

\begin{figure}%
    \centering
    \includegraphics[width=0.49\textwidth]{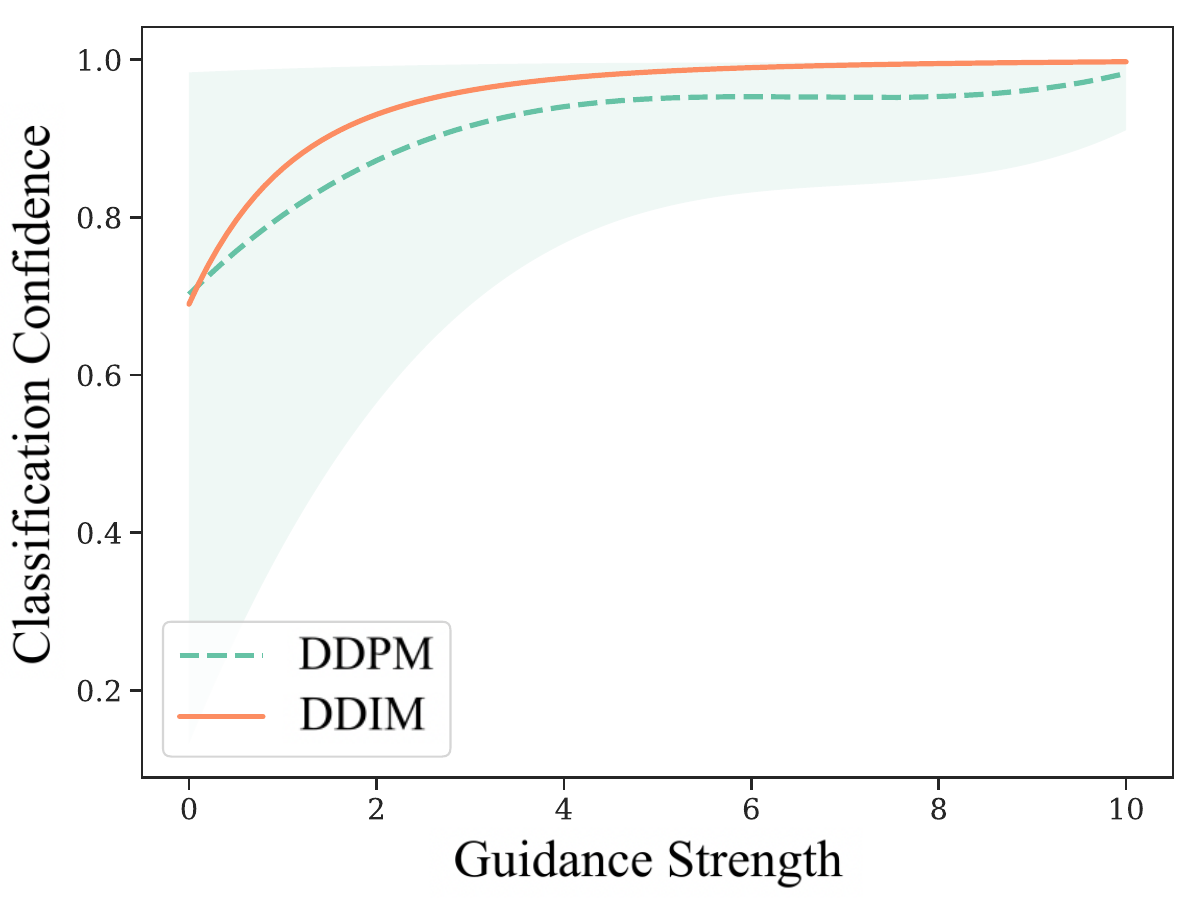} %
    \hfill
    \includegraphics[width=0.49\textwidth]{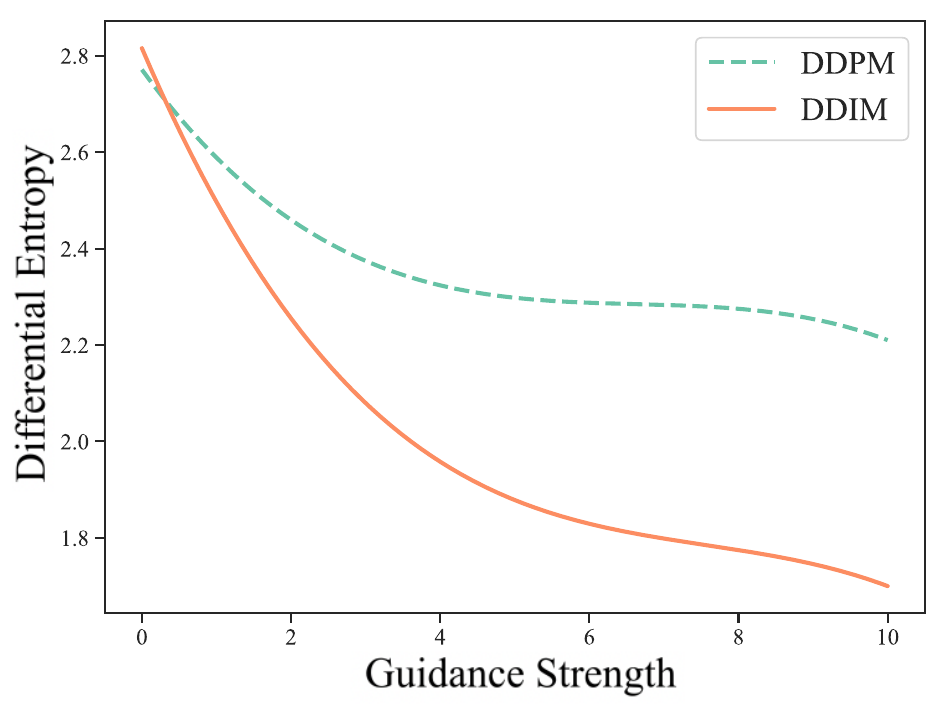} %
    \caption{The effect of diffusion guidance on continuous-time samplers. We consider a three-component equidistant GMM: $p_{\ast} = \frac{1}{3} \normal((0,1),I_2) + \frac{1}{3} \normal\big((\frac{\sqrt{3}}{2}, -\frac{1}{2}), I_2\big)+ \frac{1}{3} \normal\big((-\frac{\sqrt{3}}{2}, -\frac{1}{2}), I_2\big)$. (a) In the left panel, we initiate the reverse processes at the origin and record the classification confidence (measured by the posterior probability of class label) under different levels of guidance. For the DDPM sampler, the output sample is random. We generate $10^3$ samples for each guidance strength and plot the $97.5\%$ and $2.5\%$ quantiles. (b) In the right panel, we initiate the processes following a standard Gaussian distribution and plot the differential entropy of the output distributions. For each guidance strength, we generate $10^4$ samples. We adopt the function $\mathtt{scipy.neighbors.KernelDensity}$ from the $\mathtt{scipy}$ module in Python to estimate the density function of the generated distribution using one half of the generated samples, and use the other half for a Monte Carlo algorithm to estimate the differential entropy.  }%
    \label{fig:prob-DP-2}%
\end{figure}

\end{document}